\definecolor{darkgreen}{RGB}{0,150,0}
\definecolor{darkblue}{RGB}{0,0,200}
\definecolor{darkgreen}{RGB}{0,150,0}
\definecolor{darkblue}{RGB}{0,0,200}
\newcommand{\green}[1]{{\color{darkgreen}{#1}}}
\newcommand{\ols}{OLS}
\def\BState{\State\hskip-\ALG@thistlm}
\newcommand{\mtx}[1]{\bm{#1}}
\newcommand{\cG}{\mathcal{G}}
\newcommand{\cH}{\mathcal{H}}
\newcommand{\cP}{\mathcal{P}}
\newcommand{\beq}{\begin{equation}}
\newcommand{\eeq}{\end{equation}}
\newcommand{\Ub}{{\mtx{U}}}
\newcommand{\diag}[1]{\text{diag}(#1)}
\newcommand{\Hc}{{\cal{H}}}
\newcommand{\order}[1]{{\cal{O}}(#1)}
\newcommand{\Nn}{\mathcal{N}}
\definecolor{darkred}{RGB}{150,0,0}
\newcommand{\snr}{\textbf{snr}}
\definecolor{emmanuel}{RGB}{255,127,0}
\newcommand{\R}{\mathbb{R}}
\newcommand{\E}{\operatorname{\mathbb{E}}}
\newcommand{\rank}{\operatorname{rank}}
\newcommand{\ones}{\mathbf{1}}
\newcommand{\zeros}{\mathbf{0}}
\newcommand{\argmin}{\mathrm{argmin}}
\newcommand{\Ttot}{T_{\mathrm{tot}}}
\newcommand{\ntune}{n_{\mathrm{t}}}
\newcommand{\red}[1]{{\color{darkblue}{#1}}}
\title{\LARGE{\bf System Identification via Nuclear Norm Regularization}} 
\author{%
 Yue Sun \and Samet Oymak \and Maryam Fazel
}
\begin{document}
\maketitle
\theoremstyle{plain}
\newtheorem{lemma}{\textbf{Lemma}}
\newtheorem{theorem}{\textbf{Theorem}}
\newtheorem{corollary}{\textbf{Corollary}}
\newtheorem{assumption}{\textbf{Assumption}}
\newtheorem{example}{\textbf{Example}}
\newtheorem{definition}{\textbf{Definition}}
\newtheorem{conjecture}{\textbf{Conjecture}}
\newtheorem{claim}{\textbf{Claim}}
\newtheorem{question}{\textbf{Question}}
\theoremstyle{definition}

\theoremstyle{remark}
\newtheorem{remark}{\textbf{Remark}}

\begin{abstract}

 This paper studies the problem of identifying low-order linear systems via Hankel nuclear norm regularization. Hankel regularization encourages the low-rankness of the Hankel matrix, which maps to the low-orderness of the system. We provide novel statistical analysis for this regularization and carefully contrast it with the unregularized ordinary least-squares (\ols) estimator.
 
   Our analysis leads to new bounds on estimating the impulse response and the Hankel matrix associated with the linear system. We first design an input excitation and show that Hankel regularization enables one to recover the system using optimal number of observations in the true system order and achieve strong statistical estimation rates. Surprisingly, we demonstrate that the input design indeed matters, by showing that intuitive choices such as i.i.d.~Gaussian input leads to provably sub-optimal sample complexity. 
   
   To better understand the benefits of regularization, we also revisit the \ols~estimator. Besides refining existing bounds, we experimentally identify when regularized approach improves over \ols: (1) For low-order systems with slow impulse-response decay, \ols~method performs poorly in terms of sample complexity, (2)  Hankel matrix returned by regularization has a more clear singular value gap that ease identification of the system order, (3) Hankel regularization is less sensitive to hyperparameter choice. Finally, we establish model selection guarantees through a joint train-validation procedure where we tune the regularization parameter for near-optimal estimation.
   \end{abstract}

\section{Introduction}
System identification is an important topic in control theory. 
Accurate estimation of system dynamics is the basis of control or policy decision problems in tasks varying from linear-quadratic control to deep reinforcement learning. 
Consider a linear time-invariant system of order $R$ with the \emph{minimal} state-space representation
\begin{equation}\label{eq:linear_system}
    \begin{split}
        x_{t+1} &= Ax_t + Bu_t,\\
        y_t &= Cx_t + D u_t+ z_t,
    \end{split}
\end{equation}
where $x_t\in \R^{R}$ is the state, $u_t\in \R^{p}$ is the input, $y_t\in \R^{m}$ is the output, 
$z_t\in\R^m$ is the output noise, 
$A\in\R^{R\times R}$, $B\in \R^{R\times p}$, $C\in\R^{m\times R}$, $D\in\R^{m\times p}$ 
are the system parameters, and $x_0$ is the initial state (in this paper, we assume $x_0=0$). Generally with the same input and output, the dimension of the hidden state $x$ can be any number no less than $R$, and we are interested in the minimum dimensional representation (i.e., minimal realization) in this paper.

\emph{The goal of system identification is to find the system parameters, such as $A,B,C,D$ matrices or impulse response, given input and output observations.} If $(C,D)=(I,0)$, we directly observe the state. A notable line of work derives statistical bounds for system identification with limited \emph{state} observations from a single output trajectory (defined in Fig. \ref{fig:data_acq}) with a random input \cite{abbasi2011regret, simchowitz2018learning,sarkar2019near}. The state evolves as $x_{t+1} = Ax_t + \eta_t$ where $\eta_t$ is the white noise that provides excitation to states \cite{simchowitz2018learning,sarkar2019near}. They recover $A$ by solving a least-squares problem. The main proof approach comes from an analysis of martingales \cite[Thm 2,3]{abbasi2011online}. \cite{simchowitz2018learning} assumes that the system is stable whereas \cite{sarkar2019near} removes the assumptions on the spectral radius of $A$.

When we do not directly observe the state $x$ (also known as hidden-state), one has only access to $u_t$ and $y_t$ and lack the full information on $x_t$. We recover the impulse response (also known as the Markov parameters) sequence $h_0=D$, $h_t=CA^{t-1}B\in\R^{m\times p}$ for $t=1,2,\ldots$ that uniquely identifies the end-to-end behavior of the system. The impulse response can have infinite length, and we let $h = [D, CB, CAB, CA^2B,\ldots, CA^{2n-3}B]^\top$ denote its first $2n-1$ entries, which can be later placed into an $n\times n$ Hankel matrix. Without knowing the system order, we consider recovering the first $n$ terms of $h$ where $n$ is larger than system order $R$. 
To this end, let us also define the Hankel map $\cH:\R^{m\times (2n-1)p}\rightarrow\R^{mn\times pn}$ as 
\begin{align}
    H:=\cH(h) = \begin{bmatrix}
    h_1 & h_2 & ... &h_n\\
    h_2 & h_3 & ... & h_{n+1}\\
    ...\\
    h_n & h_{n+1} & ... & h_{2n-1}
    \end{bmatrix}.\label{eq:H}
\end{align}
If $n\ge R$, the Hankel matrix $H$ is of rank $R$ regardless of $n$  \cite[Sec. 5.5]{sontag2013mathematical}. Specifically, we will assume that $R$ is small, so the Hankel matrix is low rank. Our goal is to recover a low rank Hankel matrix. It is known that nuclear norm regularization is used to find a low rank matrix \cite{recht2010guaranteed,fazel2001rank}, and \cite{fazel2003matrix} uses it for recovering a low rank Hankel matrix. 

Low-rank Hankel matrices arise in a range of applications, from dynamical systems -- where the rank corresponds to a low order or MacMillan degree for the system \cite{sontag2013mathematical, fazel2003matrix} -- to signal processing problems. The latter includes recovering sum of complex exponentials \cite{cai2016robust, xu2018sep} (where the rank of the Hankel matrix is the number of summands), shape-from-moments estimation in tomography and geophysical inversion \cite{elad2004shape} (where the vertices of an object are probed and the output is a sum of exponentials), and video in-painting \cite{ding2007rank} (where the video is regarded as a low order system).

\noindent\textbf{Performance criteria for system identification:} To explain our contributions, we introduce common performance metrics. Refs.~\cite{oymak2018non} and \cite{sarkar2019finite} recover the system from single rollout/trajectory ("rollout" is defined in Sec.~\ref{s:formulation}) of the input signal, whereas our work, \cite{tu2017non} and \cite{cai2016robust} require multiple rollouts. 
To ensure a standardized comparison, we define \emph{sample complexity} to be the number of equations (equality constraints in variables $h_t$) used in the problem formulation, which is same as the number of observed outputs (see Fig.~\ref{fig:data_acq} and Sec.~\ref{s:formulation}). 
With this, we explore the following performance metrics for learning the system from $T$ output measurements.
\begin{itemize}[leftmargin=*]
\setlength{\itemsep}{0pt}
\setlength{\parskip}{0pt}
\setlength{\parsep}{0pt}
    \item {\bf{Sample complexity:}} The minimum sample size $T$ for recovering system parameters with zero error when the noise is set to $z=0$. This quantity is lower bounded by the system order. 
    System order can be seen as the ``degrees of freedom" of the system.
    \item {\bf{Impulse Response (IR) Estimation Error:}} The Frobenius norm error $\|\hat h - h\|_F$ for the IR. A good estimate of IR enables the accurate prediction of the system output.
    \item {\bf{Hankel Estimation Error:}} The spectral norm error $\|\cH(\hat h - h)\|$ of the Hankel matrix. This metric is particularly important for system identification as described below.
\end{itemize}

The Hankel spectral norm error is a critical quantity for several reasons. First, the Hankel spectral norm error connects to the $\cal{H}_\infty$ estimation of the system \cite{sanchez1998robust}. 
Secondly, bounding this error allows for robustly finding balanced realizations of the system; for example, the error in reconstructing state-space matrices ($A,B,C,D$) via the Ho-Kalman procedure is bounded by the Hankel spectral error. Finally, it is beneficial in model selection, as a small spectral error helps distinguish the true singular values of the system from the spurious ones caused by estimation error. Indeed, as illustrated in the experiments, the Hankel singular value gap of the solution of the regularized algorithm is more visible compared to least-squares, which aids in identifying the true order of the system as explored in Sec. \ref{s:experiments}.

\noindent\textbf{Algorithms: Hankel-regularization \& OLS.} In our analysis, we consider a multiple rollout setup where we measure the system dynamics with $T$ separate rollouts. For each rollout, the input sequence is $u^{(i)}=[u_{2n-1}^{(i)},...,u_{1}^{(i)}]\in\R^{(2n-1)p}$ and we measure the system output at time $2n-1$. Note that the $i^{th}$ output at time $2n-1$ is simply $h^\top u^{(i)}$. Define $\bar \Ub\in\R^{T\times (2n-1)p}$ where the $i^{th}$ row is $u^{(i)}$. Let $y\in\R^{T\times m}$ denote the corresponding observed outputs. Hankel-regularization refers to the nuclear norm regularized problem \eqref{eq:lasso_prob_h}. 
\begin{align}\label{eq:lasso_prob_h}
    \hat{h}=\arg\min_{h'}&\quad  \frac{1}{2}\|\bar \Ub h' - y\|_F^2 + \lambda\|\mathcal{H}(h')\|_*,\tag{HNN}
\end{align}
Finally, setting $\lambda=0$, we obtain the special case of ordinary least-squares (OLS).

\section{Contributions}
Our main contribution is establishing data-driven guarantees for Hankel nuclear norm regularization and shedding light on the benefit of regularization through a comparison to the ordinary least-squares (\ols) estimator. Specifically, a summary of our findings are as follows.

\noindent$\bullet$ {\bf{Hankel nuclear norm}} (Sec.~\ref{s:reg} \& \ref{sec:no_weight}): For multi-input/single-output (MISO) systems ($p$ input channels), we establish \emph{near-optimal sample complexity} bounds for the Hankel-regularized system identification, showing the required sample size grows as ${\cal{O}}(pR\log^2n)$ where $R$ is the system order and $n$ is the Hankel size. This result utilizes an \emph{input-shaping} strategy (rather than i.i.d.~excitation, see Fig.~\ref{figshape}) and builds on \cite{cai2016robust} who studied the recovery of a sum-of-exponentials signal. Our bound significantly improves over naive bounds. For instance, without Hankel structure, enforcing low-rank would require ${\cal{O}}(nR)$ samples and enforcing Hankel structure without low-rank would require ${\cal{O}}(n)$ samples.

We also establish finite sample bounds on the IR and Hankel spectral errors. Our rates are on par with the \ols~rates; however, unlike \ols, they also apply in the small sample size regime $pn\gtrsim T\gtrsim pR\log^2 n$.

Surprisingly, Sec.~\ref{sec:no_weight} shows that the \emph{input-shaping} is necessary for the logarithmic sample complexity in $n$. Specifically, we prove that if the inputs are i.i.d.~standard normal (Fig. \ref{fignoshape}), the minimum number of observations to exactly recover the impulse response in the noiseless case grows as $T\gtrsim n^{1/6}$. 

\begin{figure}[t!]
\centering
{
\subfloat[]{
\label{figshape}
\includegraphics[width=0.46\textwidth]{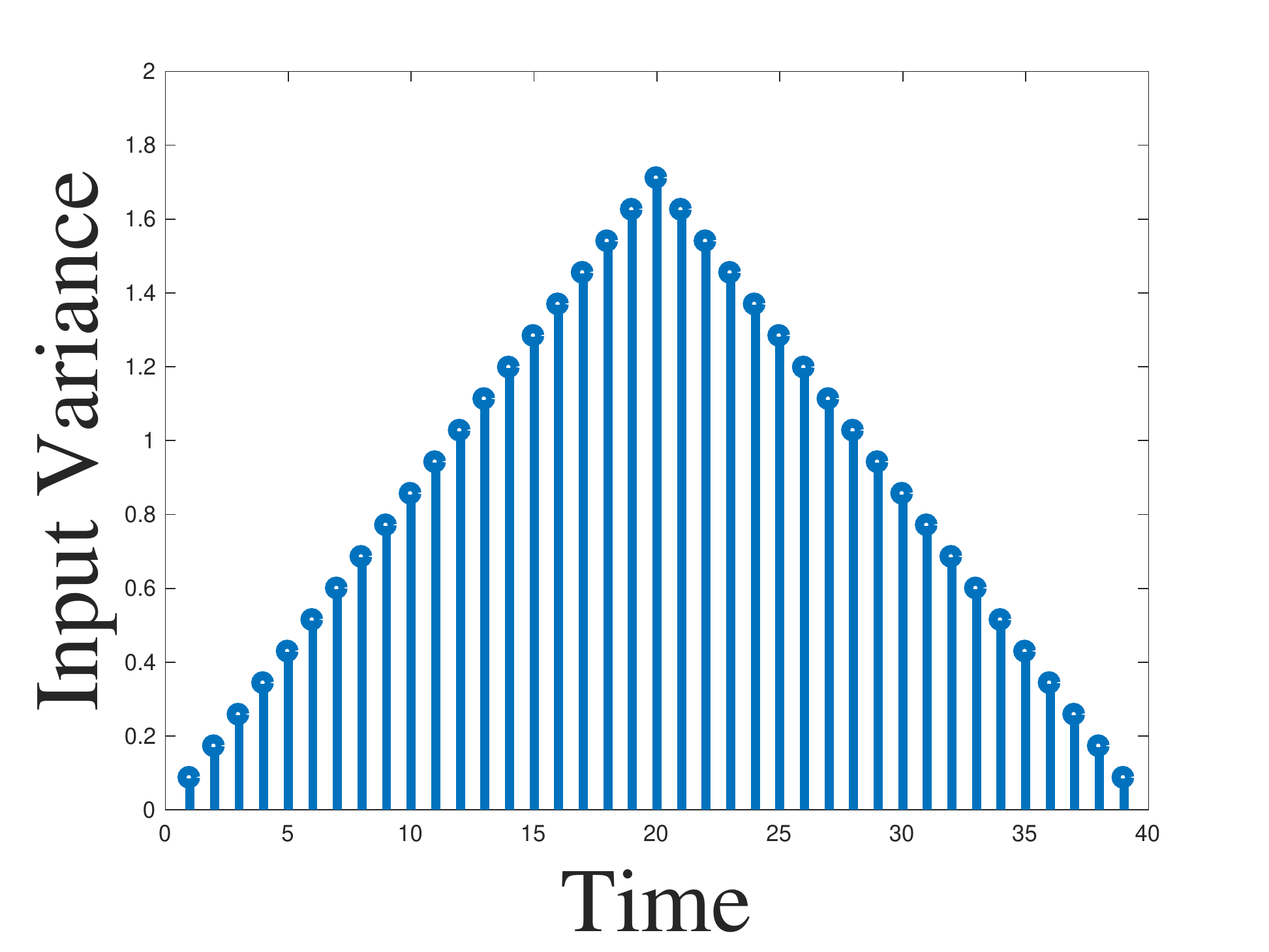}
}
\subfloat[]{
\label{fignoshape}
\includegraphics[width=0.46\textwidth]{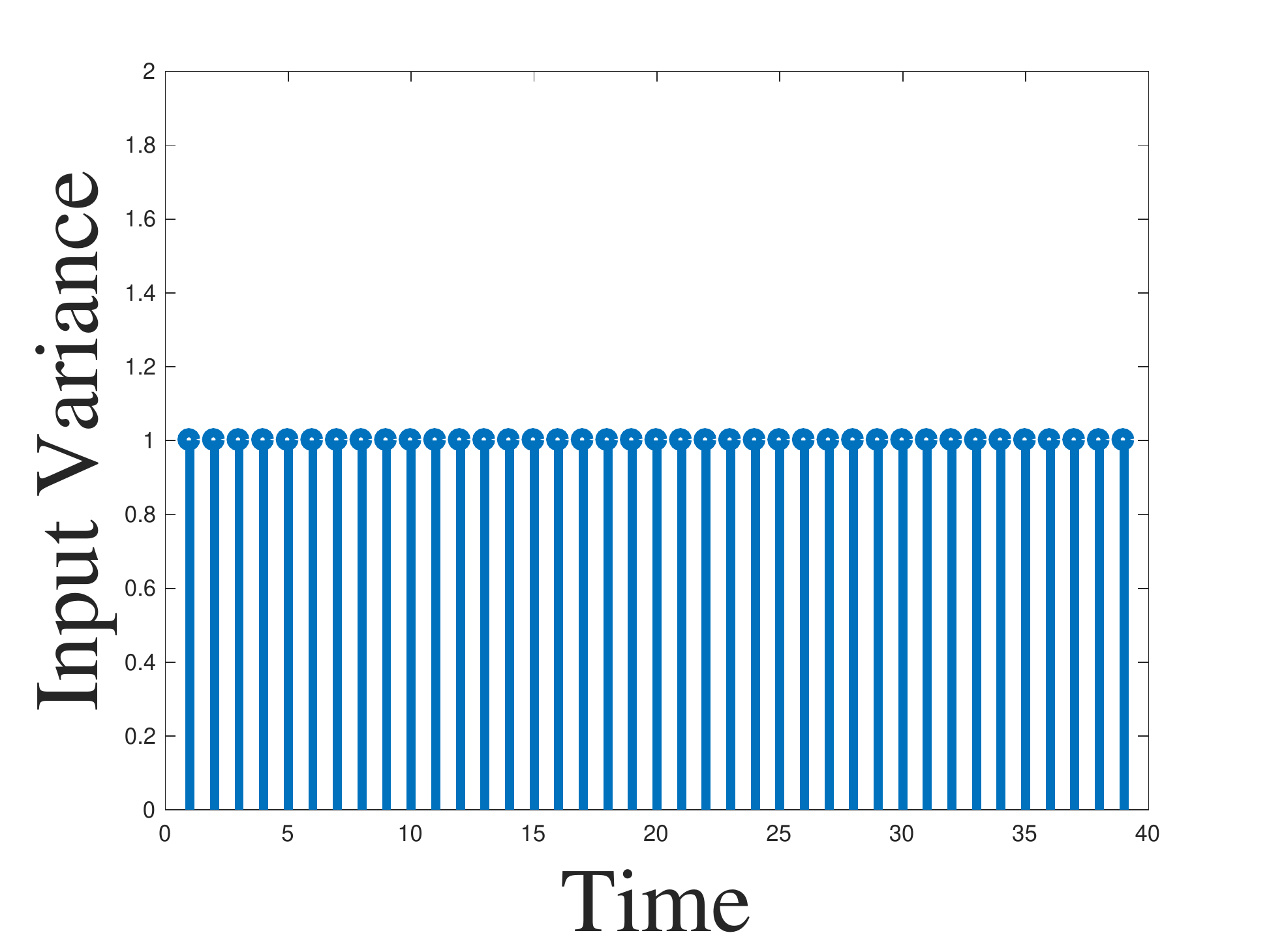}
}
}
{\caption{(a) Shaped input (where variance of $u(t)$ changes over time): recovery is guaranteed when $T \approx R$; (b) i.i.d input (fixed variance): recovery fails with high probability when $T\lesssim n^{1/6}$. See Sec. \ref{sec:no_weight}.}\label{fig:shape}}
\end{figure}

\noindent$\bullet$ {\bf{Sharpening \ols~bounds}} (Sec. \ref{s:ls}): For multi-input/multi-output (MIMO) systems, we establish a \emph{near-optimal spectral error rate} for the Hankel matrix when $T\gtrsim np$. Our error rate improves over that of \cite{oymak2018non} and our sample complexity improves over \cite{sarkar2019finite} and \cite{tu2017non} which require ${\cal{O}}(n^2)$ samples rather than ${\cal{O}}(n)$. This refinement is accomplished by relating the IR and Hankel errors. Specifically, using the fact that rows of the Hankel matrix are subsets of the IR sequence, we always have the inequality 
\begin{align}
\|\hat h - h\|_F/\sqrt{2}\leq \|\cH(\hat h - h)\|\leq \sqrt{n}\|\hat h - h\|_F.\label{spec relate}
\end{align}
Observe that there is a factor of $\sqrt{n}$ gap between the left-hand and right-hand side inequalities. We show that the left-hand side is typically the tighter one, thus $\|\hat h - h\|_F\sim \|\cH(\hat h - h)\|$.

\noindent$\bullet$ \textbf{Guarantees on accurate model-selection (Sec. \ref{s:model_selection}):} The Hankel-regularized algorithm requires a proper choice of the regularization parameter $\lambda$. In practice, the optimal choice is data dependent and one usually estimates $\lambda$ via trial and error based on the validation error. We provide a complete procedure for model selection (training \& validation phases), and establish statistical guarantees for it.

\noindent$\bullet$ {\bf{Contrasting Hankel regularization and \ols~(Sec.~\ref{s:experiments}):}} 
Finally, we assess the benefits of regularization via numerical experiments on system identification focusing on data collected from a single-trajectory.

We first consider synthetic data and focus on low-order systems with slow impulse-response decay. The slow-decay is intended to exacerbate the FIR approximation error arising from truncating the impulse-response at $2n-1$ terms. In this setting, \ols~as well as \cite{sarkar2019finite} are shown to perform poorly. In constrast, Hankel-regularization better avoids the truncation error as it allows for fitting a long impulse-response with few data (due to logarithmic dependence on $n$).

Our real-data experiments (on a low-order example from the DaISy datasets \cite{de1997daisy}) suggest that the regularized algorithm has empirical benefits in sample complexity, estimation error, and Hankel spectral gap, and demonstrate that the regularized algorithm is less sensitive to the choice of the tuning parameter, compared to \ols~whose tuning parameter is the Hankel size $n$. 
Finally, comparison of least-squares approaches in \cite{oymak2018non} (\ols) and \cite{sarkar2019finite} reveals that \ols~(which directly estimates the impulse response)
performs substantially better than the latter (which estimates the Hankel matrix). This highlights the role of proper parameterization in system identification. 

\subsection{Prior Art}
The traditional unregularized methods include Cadzow approach \cite{cadzow1988signal,gillard2010cadzow}, matrix pencil method \cite{sarkar1995using}, Ho-Kalman approach \cite{ho1966effective} and the subspace method raised in \cite{ljung1999system,van1995unifying,van2012subspace}, further modified as frequency domain subspace method in \cite{mckelvey1996subspace} when the inputs are single frequency (sine/cosine) signals. 
Recent works show that least-squares can be used to recover the Markov parameters and reconstruct $A,B,C,D$ from the Hankel matrix via the previously known Ho-Kalman algorithm \cite{ho1966effective, oymak2018non}. 
To identify a stable system from a single trajectory,  \cite{oymak2018non} estimates the impulse response and \cite{sarkar2019finite} estimates the Hankel matrix via least-squares. The latter provides optimal Hankel spectral norm error rates, however has suboptimal sample complexity (see the table in Section \ref{s:formulation}). While \cite{oymak2018non,sarkar2019finite} use random input,  \cite[Thm 1.1, 1.2]{tu2017non} use impulse and single frequency signal respectively as input. They both recover impulse response. These works assume known system order, or traverse the Hankel size $n$ to fit the system order. Ref. \cite{zheng2020non} proves that least-squares can identify any (including unstable) linear systems with multiple rollout data. Ref. \cite{reyhanian2021online} studies online system identification. It applies online gradient descent on least-squares loss and shows the identification error. Ref. \cite{fattahi2020learning} shows that, when the system is strictly stable ($\rho(A)<1$), the sample complexity is only polynomial in $(1-\rho(A))^{-1}$ and logarithmic in dimension.

There are several interesting generalizations of least squares with non-asymptotic guarantees for different goals. Refs. \cite{hazan2018spectral} and \cite{simchowitz2019learning} introduce filtering strategies on top of least squares. The filters in \cite{hazan2018spectral} is the top eigenvectors of a special deterministic matrix, used for output prediction in stable systems. Ref. \cite{simchowitz2019learning} uses filters in frequency domain to recover the system parameters of a stable system,   \cite{tsiamis2019finite} gives a non-asymptotic analysis for learning a Kalman filter system, which can also be applied to an auto-regressive setting. As an extension, \cite{dean2019safely} and \cite{mania2019certainty} apply system identification guarantee for robust control, where the system is identified and controlled in an episodic way. Ref. \cite{lu2021non} extended the online LQR to a non-episodic way. Ref. \cite{agarwal2019online} studies online control and regret analysis in adversarial setting, whose algorithm directly learns the policy in an end-to-end way. Ref. \cite{talebi2020online} controls an unknown unstable system with no initial stabilizing controller. Another area is system identification with non-linearity. Ref. \cite{mhammedi2020learning} learns a linear system using nonlinear output observations. Refs. \cite{oymak2019stochastic,khosravi2020nonlinear,foster2020learning,bahmani2019convex,sattar2020non} consider guarantees for certain nonlinear systems with state observations and \cite{mania2020active,wagenmaker2020active} study active learning where the new input adapts with respect to previous observations. Ref. \cite{rutledge2020finite} studies the estimation and proposes the subsequent model-based control algorithm with missing data. Refs. \cite{du2019mode,sattar2021identification} study clustering and identification for Markov jump system and Ref. \cite{du2021certainty} further analyzes the optimal control strategy based on the estimated system parameters.

Nuclear norm regularization has been shown to recover an unstructured low-rank matrix in a sample-efficient way in many settings (e.g.,   \cite{recht2010guaranteed,CandesMatrixComp1}). The regularized subspace method are introduced in \cite{hansson2012subspace,verhaegen2016n2sid}.
Refs. \cite{liu2013nuclear,fazel2013hankel} propose slightly different algorithms which recover low rank output Hankel matrix. Ref. \cite{grossmann2009system} specifies the application of Hankel nuclear norm regularization when some output data are missing. Ref. \cite{ayazoglu2012algorithm} proposes a fast algorithm on solving the regularization algorithm. All above regularization works emphasize on optimization algorithm implementation and have no statistical bounds. More recently \cite{cai2016robust} theoretically proves that a low order SISO system from multi-trajectory input-outputs can be recovered by this approach.
Ref. \cite{blomberg2016nuclear} gives a thorough analysis on Hankel nuclear norm regularization applied in system identification, including discussion on proper error metrics, role of rank/system order in formulating the problem, implementable algorithm and selection of tuning parameters.

The rest of the paper is organized as follows. Next section introduces the technical setup. Sections \ref{s:reg} proposes our results on nuclear norm regularization. Section \ref{sec:no_weight} discusses the role of the input distribution and establishes lower bounds.
Section \ref{s:ls} provides our results on least-squares estimator. 
Section \ref{s:model_selection} discusses model selection algorithms. Finally Section \ref{s:experiments} presents the numerical experiments\footnote{The code for experiments is in \url{https://github.com/sunyue93/sunyue93.github.io/blob/main/sysIdFiles.zip}.}.

\section{Problem Setup and Algorithms} \label{s:formulation}
Let $\|\cdot\|, \|\cdot\|_*, \|\cdot\|_F$ denote the spectral norm, nuclear norm and Frobenius norm respectively. Throughout, we estimate the first $2n-1$ terms of the impulse response denoted by $h$.  The system is excited by an input $u$ over the time interval $[0, t]$ 
and the output $y$ is measured at time $t$, i.e., 
\begin{align}\label{eq:conv}
    y_t = \sum_{i=1}^t h_{t+1-i}u_i + z_t.
\end{align}
We start by describing data acquisition models. Generally there are several rounds ($i$th round is denoted with super script $(i)$ in Fig. \ref{fig:data_acq}) of inputs sent into the system, and the output can be collected or neglected at arbitrary time. In the setting that we refer to as ``multi-rollout" (Fig. \ref{fig:data_acq}(b)), for each input signal $u^{(i)}$ we take only one output measurement $y_t$ at time $t=2n-1$ and then the system is restarted with a new input. Here the \emph{sample complexity} is $T$, the number of output measurements as well as the round of inputs. Recent papers (e.g., \cite{oymak2018non} and \cite{sarkar2019finite})  use the ``single rollout" model (Fig. \ref{fig:data_acq}(c)) where we apply an input signal from time $1$ to $T+2n-2$ without restart, and collect all output from time $2n-1$ to $T+2n-2$, in total $T$ output measurements; we use this model in the numerical experiments in Sec. \ref{s:experiments}.

\begin{figure*}[t!]
\centering{
{
{
\includegraphics[width = 0.3\textwidth]{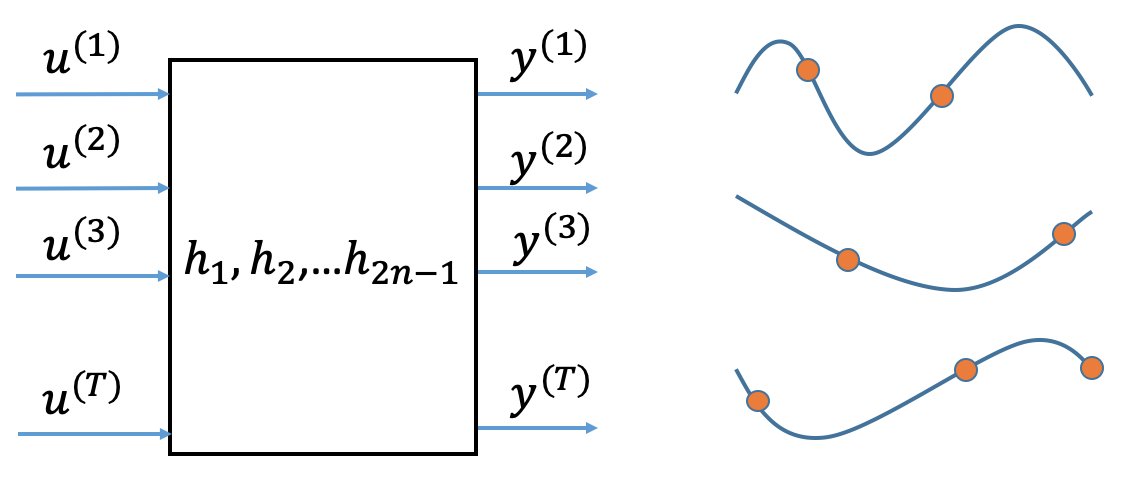}
}\hspace{-.1em}
{
\includegraphics[width = 0.3\textwidth]{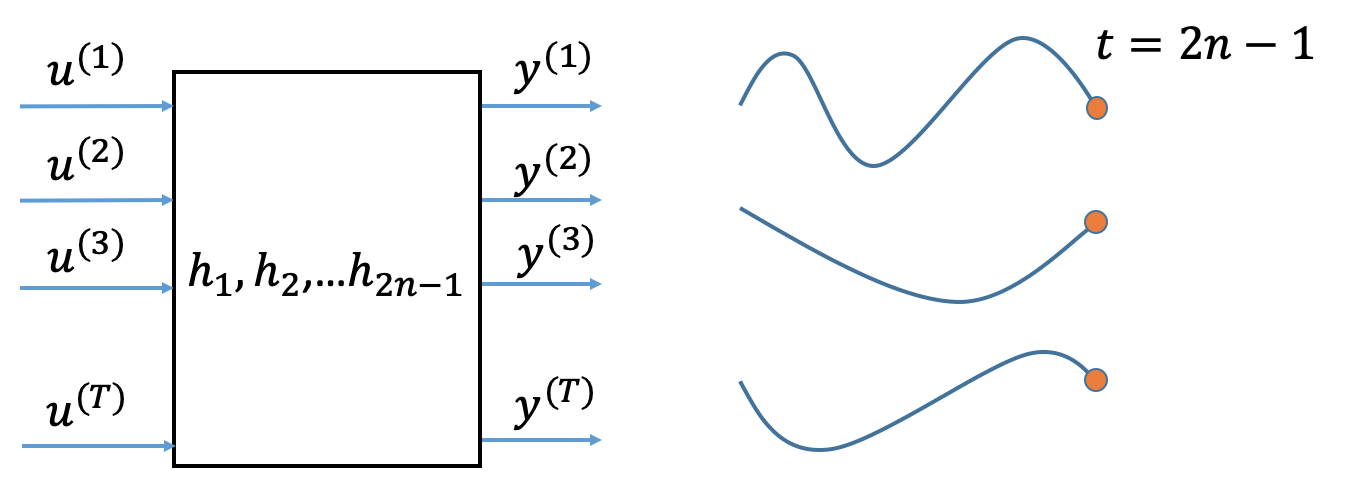}
}\hspace{-.1em}
{
\includegraphics[width = 0.3\textwidth]{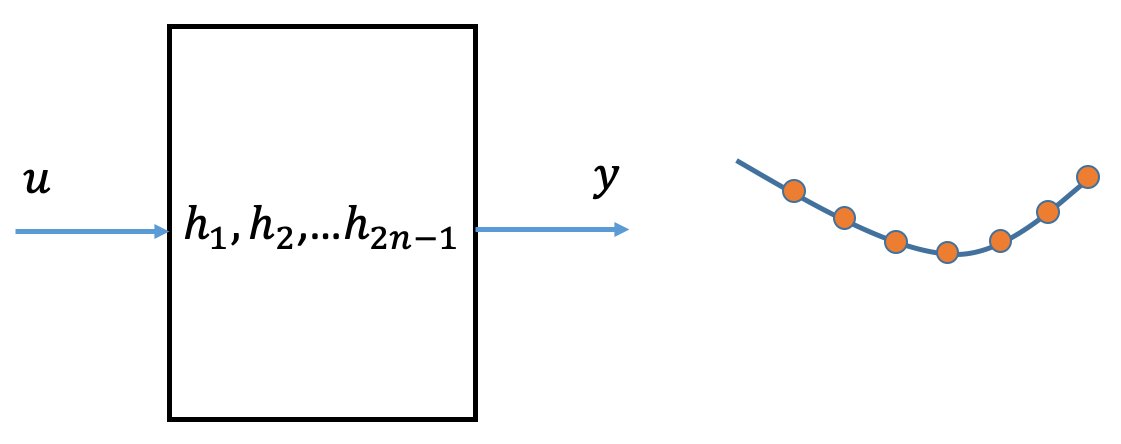}
}
}
}
{\caption{(a) Arbitrary sampling on output data, and two specific data aqcuisition models: (b) multi-rollout, and (c) single rollout.}\label{fig:data_acq}}
\end{figure*}
We consider two estimators in this paper: the  \emph{nuclear norm regularized estimator} and the \emph{least squares estimator} defined later.

We will bound the various error metrics mentioned earlier in terms of the sample complexity $T$, the true system order $R$, the dimension of impulse response $n\gg R$, and signal to noise ratio (SNR) defined as $\snr=\E[\|u\|_F^2/n]/\E[\|z\|_F^2]$. Table \ref{tab:1} provides a summary and comparison of these bounds.
All bounds are order-wise and hide constants and log factors. We can see that, with nuclear norm regularization, our paper matches the least squares impulse response and Hankel spectral error bound while sample complexity can be as small as ${\cal{O}}(R^2)$, and we can recover the impulse response with guaranteed suboptimal error when sample complexity is ${\cal{O}}(R)$. Our least square error bound matches the best error bounds among \cite{oymak2018non} and \cite{sarkar2019finite}, which is proven optimal for least squares. 

\newcommand{\tabincell}[2]{\begin{tabular}{@{}#1@{}}#2\end{tabular}} 

\begin{table*}[t!]
    \centering
    \caption{Comparison of recovery error of impulse response. The Hankel matrix is $n\times n$, the system order is $R$, and the number of samples is $T$, and $\sigma=1/\sqrt{\snr}$ denotes the noise level. LS-IR and LS-Hankel stands for least squares regression on the impulse response and on the Hankel matrix.}\label{tab:1}
    \begin{tabular}{|c|c|c|c|c|}
    \hline
       Paper & This work & This work & \cite{oymak2018non} & \cite{sarkar2019finite} \\\hline\hline
       Sample complexity & $R$ & $n$ & $n$ & $n^2$\\\hline
       Method & Nuc-norm & LS-IR & LS-IR & LS-Hankel\\ \hline
        \tabincell{c}{Impulse response error} & see \eqref{eq:reg_spec_error} & $\sigma\sqrt{{n}/{ T}}$ & $\sigma\sqrt{{n}/{ T}}$ & $(1+\sigma)\sqrt{{n}/{T}}$ \\\hline
       \tabincell{c}{Hankel spectral error} & see \eqref{eq:reg_spec_error} & $\sigma\sqrt{{n}/{ T}}$ & $\sigma {n}/{ \sqrt{T}}$ & $(1+\sigma)\sqrt{{n}/{T}}$\\\hline
    \end{tabular}
\end{table*}

Next, we discuss the design of the input signal and introduce \emph{input shaping matrix}.

\noindent{\bf{Input shaping:}} Note that the $\mathcal{H}$ operator does not preserve the Euclidean norm, so \cite{cai2016robust} proposes using a normalized operator $\cG$, where they first define the weights

\begin{align}
    K_j = \left\{
    \begin{array}{rcl} \sqrt{j},& 1\le  j\le n, \\ \sqrt{2n-j}, & n< j\le 2n-1.
    \end{array} \right.\label{eq K-shape}
\end{align}
and let $K\in\R^{(2n-1)p\times (2n-1)p}$ be a block diagonal matrix where the $j$th diagonal block of size $p\times p$ is equal to $K_jI_{p\times p}$. In other words,
\begin{align*}
    K = \begin{bmatrix}
    K_1I & 0 & 0 & ... & 0\\
    0 & K_2I & 0 & ... & 0\\
    ...\\
    0 & 0 & 0 & ... & K_{2n-1}I
    \end{bmatrix}
\end{align*}
Define the mapping $\cG(h) = \cH(K^{-1}h)$. 
In other words, if $\beta = Kh$ then $\cG(\beta) = \cH(h)$. Define $\cG^*:\R^{mn\times np}\rightarrow\R^{m\times (2n-1)p}$ as the adjoint of $\cG$, where $[\cG^*(M)]_i = \sum_{j+k-1=i}M_{(j)(k)} / K_i$ if we denote the $j,k$-th block of $M$ (defined in \eqref{eq:H}) by $M_{(j)(k)}$. Using this change of variable and letting $\Ub = \bar \Ub K^{-1}$, problem \eqref{eq:lasso_prob_h} can be written as   
\begin{align}\label{eq:lasso_prob}
    \hat{\beta}=\arg\min_{\beta'}&\quad  \frac{1}{2}\|\Ub \beta' - y\|_F^2 + \lambda\|\mathcal{G}(\beta')\|_*.
\end{align}

\section{Hankel Nuclear Norm Regularization}\label{s:reg}
To promote a low-rank Hankel matrix, we add nuclear norm regularization to the quadratic-loss objective and solve the regularized regression problem. 
Here we give a finite sample analysis for the recovery of the Hankel matrix and the impulse response found via this approach. We consider a random input matrix $\bar{\Ub}$ and observe the corresponding noisy output vector $y$ as in \eqref{eq:conv}. 

\subsection{Statistical guarantees for Hankel-regularization}
The theorem below shows that Hankel-regularization achieves near-optimal sample complexity with similarly strong estimation error rates that decay as $1/\sqrt{T}$.
\begin{theorem}\label{main_gauss}
    Consider the problem \eqref{eq:lasso_prob_h} in the MISO (multi-input single-output) setting ($m=1$, $p$ input channels). Suppose the system order is $R$, $\bar\Ub\in\mathbb{R}^{T\times (2n-1)p}$, each row consists of an input rollout $u^{(i)}\in\R^{(2n-1)p}$, and the scaled input $\Ub = \bar{\Ub}K^{-1}$ has i.i.d Gaussian entries. Let $\snr=\E[\|u\|^2/n]/\E[\|z\|^2]$ and $\sigma=1/\sqrt{\snr}$. For some constant $C$, let $\lambda = C\sigma\sqrt{\frac{pn}{T}}\log(n)$. Then, with probability $1-{\cal{O}}(R\log(n) \sqrt{p/ T})$, solving \eqref{eq:lasso_prob_h} returns $\hat h$ such that\footnote{In this theorem, we make the dependence on $\log(n)$ explicit.}
    \begin{align} 
     \quad \frac{\|\hat{h} - h\|_2}{\sqrt{2}} &\leq \|\cH(\hat{h} - h)\|\lesssim   
     \begin{cases} \sqrt{\frac{np}{\snr\times T}}\log(n) \quad\text{if}\quad T\gtrsim \bar{R}\\
     \sqrt{\frac{Rnp}{\snr\times T}}\log(n)\quad\text{if}\quad  R\lesssim T\lesssim \bar{R}\end{cases}\label{eq:reg_spec_error}
    \end{align}
    where $\bar{R}=\min(R^2\log^2(n),n)$.
\end{theorem}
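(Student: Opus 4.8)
The plan is to pass to the reweighted coordinates of \eqref{eq:lasso_prob}: set $\beta^\star = Kh$ for the true parameter, $\Ub = \bar\Ub K^{-1}$ (i.i.d.\ Gaussian by hypothesis, with the normalization making $\E[\|\Ub v\|_2^2]\asymp\|v\|_2^2$), and recall that the weights \eqref{eq K-shape} are chosen precisely so that $\cG$ is an isometry onto its range, $\|\cG(\beta)\|_F = \|\beta\|_2$ and $\cG^*\cG = \Ib$, while $\cG(\beta^\star) = \cH(h)$ has rank $R$. With $\Delta := \hat\beta - \beta^\star$ and $y = \Ub\beta^\star + z$, optimality of $\hat\beta$ gives the basic inequality
\begin{align*}
\tfrac12\|\Ub\Delta\|_F^2 \;\le\; \langle \cG(\Delta),\,\cG(\Ub^\top z)\rangle \;+\; \lambda\big(\|\cG(\beta^\star)\|_* - \|\cG(\beta^\star+\Delta)\|_*\big),
\end{align*}
where I used $\langle\Delta,\Ub^\top z\rangle = \langle\cG(\Delta),\cG(\Ub^\top z)\rangle$ (from $\cG^*\cG=\Ib$). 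The rest follows the standard template for regularized $M$-estimators, with three ingredients.

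\noindent\emph{(i) Calibrating $\lambda$, and (ii) the cone condition.} Conditioned on $z$, the vector $\Ub^\top z$ is Gaussian with covariance proportional to $\|z\|^2\Ib$, so $\cG(\Ub^\top z)$ is a random \emph{weighted Hankel} matrix; a bound on its spectral norm --- the weighted analogue of the $\Theta(\sqrt{n\log n})$ operator-norm estimate for random Hankel/Toeplitz matrices --- together with $\|z\|^2\approx T\,\E[z_t^2]$ and the $\snr$ normalization, yields $\|\cG(\Ub^\top z)\|\lesssim\sigma\sqrt{pn/T}\log n$, so the prescribed $\lambda$ satisfies $\lambda\ge 2\|\cG(\Ub^\top z)\|$ on the claimed high-probability event --- this is one source of the $\log n$ factors. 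On that event, the usual nuclear-norm decomposability bookkeeping --- decompose $\cG(\Delta)=\Delta_{\mathcal T}+\Delta_{\mathcal T^\perp}$, $\mathcal T$ the tangent space of rank-$R$ matrices at $\cG(\beta^\star)$ (so $\rank(\Delta_{\mathcal T})\le 2R$) --- forces the cone condition $\|\Delta_{\mathcal T^\perp}\|_*\le 3\|\Delta_{\mathcal T}\|_*$, hence $\|\cG(\Delta)\|_*\le 4\sqrt{2R}\,\|\cG(\Delta)\|_F = 4\sqrt{2R}\,\|\Delta\|_2$.

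\noindent\emph{(iii) Restricted strong convexity --- the crux.} The last ingredient is a restricted lower-isometry bound $\|\Ub\Delta\|_F^2\gtrsim\|\Delta\|_2^2$ uniformly over all $\Delta$ obeying the cone condition, valid once $T\gtrsim\bar R=\min(R^2\log^2 n,n)$ (for $T\gtrsim n$ this is the ordinary over-determined Gaussian RIP; the content is the regime $T\lesssim n$). This is a restricted-isometry property for Gaussian measurements of low-rank \emph{Hankel-structured} matrices, and here the Hankel constraint is indispensable: it cuts the effective dimension of the relevant cone from $\sim Rn$ (generic rank-$R$ matrices in $\R^{n\times np}$) down to $\sim R\,\mathrm{polylog}(n)$, which is exactly the structural gain established in \cite{cai2016robust}; it can be proved by a covering/$\gamma_2$-chaining argument over the (weighted) rank-$R$ Hankel manifold combined with Gordon's comparison inequality, or by importing their RIP statement. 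Plugging (iii) into the basic inequality and discarding the noise term (since $\lambda\ge2\|\cG(\Ub^\top z)\|$) gives $\|\Delta\|_2\lesssim\sqrt R\,\lambda\lesssim\sqrt{Rpn/(\snr\,T)}\log n$ --- the bound claimed for $R\lesssim T\lesssim\bar R$, which also yields noiseless exact recovery from $T\gtrsim R$ (set $\sigma=0$). When $T\gtrsim\bar R$ the restricted-isometry constant is genuinely small, and a sharper oracle-type bound --- controlling $\|\cG(\Delta)\|$ through the projection of the noise onto the low-dimensional Hankel tangent space rather than through $\|\cG(\Delta)\|_*$ --- removes the spurious $\sqrt R$, giving $\|\Delta\|_2\lesssim\sqrt{pn/(\snr\,T)}\log n$. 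Finally $\|\cH(\hat h - h)\| = \|\cG(\Delta)\|\le\|\cG(\Delta)\|_F = \|\Delta\|_2$ and the left inequality in \eqref{spec relate} upgrade this to the two-sided bound \eqref{eq:reg_spec_error}.

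\noindent The hard part is ingredient (iii): proving a uniform lower-isometry estimate for the structured family of reweighted rank-$R$ Hankel matrices with the near-optimal threshold $T\gtrsim R^2\log^2 n$ --- getting the polylog exponent and the $R$-dependence right --- and, separately, covering the small-sample regime $R\lesssim T\lesssim\bar R$ where this uniform RIP is unavailable, so that one must fall back on a weaker descent-cone / approximate-certificate argument at the price of the extra $\sqrt R$. The remaining pieces --- the basic inequality, the decomposability bookkeeping, and the spectral-norm concentration of the weighted random Hankel noise matrix --- are routine.
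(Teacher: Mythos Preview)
Your route for the second case ($R\lesssim T\lesssim\bar R$) is essentially the paper's: it too invokes the descent-cone Gaussian width $w(\Phi)\asymp\sqrt{pR}\log n$ from \cite{cai2016robust} and reads off $\|\Delta\|_2\lesssim\sqrt{R}\,\lambda$ via Gordon. One slip: that width bound means RSC over the cone already holds at $T\gtrsim pR\log^2 n$, not $T\gtrsim\bar R$ as you wrote; with your stated threshold the range $R\lesssim T\lesssim\bar R$ would be left uncovered.

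The real gap is the first case. You propose to show $\|\Delta\|_2\lesssim\lambda$ and then pass to $\|\cG(\Delta)\|\le\|\cG(\Delta)\|_F=\|\Delta\|_2$, but this cannot work: on the cone one has $\|\cG(\Delta)\|_F\lesssim\sqrt R\,\|\cG(\Delta)\|$, so the Frobenius error can genuinely be $\Theta(\sqrt R\,\lambda)$ even when the spectral error is $\Theta(\lambda)$, and no basic-inequality argument in $\|\cdot\|_2$ removes the $\sqrt R$ (both the noise term and the regularization term $\lambda\|\Delta_{\mathcal T}\|_*\le\sqrt{2R}\,\lambda\|\Delta\|_2$ carry it). The paper instead bounds the \emph{spectral} norm directly. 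It splits
\[
\|\cG(\Delta)\|\;\le\;\|\cG((I-\Ub^\top\Ub)\Delta)\|\;+\;\|\cG(\Ub^\top\Ub\Delta)\|,
\]
uses KKT stationarity $\Ub^\top(\Ub\hat\beta-y)=-\lambda\,\cG^*(\hat V_1\hat V_2^\top+\hat W)$ (subgradients of $\|\cdot\|_*$ have unit spectral norm) to bound the second piece by $\|\cG(\Ub^\top z)\|+\lambda$, and bounds the first by $\Gamma\,\|\cG(\Delta)\|$ for a ``spectral RSV'' $\Gamma$; solving gives $\|\cG(\Delta)\|\le(1-\Gamma)^{-1}(\lambda+\|\cG(\Ub^\top z)\|)$. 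The threshold $T\gtrsim\bar R$ is precisely what makes $\Gamma<1$: converting the Frobenius-RIP bound $\|(I-\Ub^\top\Ub)\Delta\|_2\le\varepsilon\|\Delta\|_2$ into the spectral inequality $\|\cG((I-\Ub^\top\Ub)\Delta)\|\le\|\cG(\Delta)\|$ over the rank-$R$ cone costs a factor $\sqrt R$, so one needs $\varepsilon\lesssim 1/\sqrt R$, i.e.\ $T\gtrsim R\cdot w(\Phi)^2\asymp pR^2\log^2 n$. Your ``project the noise onto the tangent space'' targets the wrong term --- the $\sqrt R$ also sits in the regularization contribution, and the remedy is this KKT-plus-spectral-RSV decomposition, not a refined noise bound.
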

Theorem \ref{main_gauss} jointly bounds the impulse response and Hankel spectral errors of the system under mild conditions. We highlight the improvements that our bounds provide:
When the system is low order, the sample complexity $T$ is logarithmic in $n$ and improves upon the ${\cal{O}}(n)$ bound of the least-squares algorithm. The number of samples for recovering an order-$R$ system is ${\cal{O}}(R\log(n))$, where the SISO case is also proven in \cite{cai2016robust}, and we follow a similar proof. 
The error rate with respect to the system parameters $n,R,T$ is same as \cite{oymak2018non,sarkar2019finite,tu2017non} (e.g. compare to Thm. \ref{thm ls spectral}). We can choose $\lambda$ from a range of a constant multiplicative factors, like $\lambda = [1,2]\cdot C\sigma\sqrt{\frac{pn}{T}}\log(n)$. When $\lambda$ is in this range, there is always a sample complexity $T$ such that \eqref{eq:reg_spec_error} holds. The flexibility of $\lambda$ makes tuning $\lambda$ for Algorithm \ref{algo:1} easier.

The regularized method also has the intrinsic advantage that it does not require knowledge of the rank or the singular values of the Hankel matrix beforehand. 
Tuning method Algorithm \ref{algo:1} and numerical experiments on real data in Sec. \ref{s:experiments} demonstrate the performance and robustness of the regularized method.

\subsection{Sample complexity lower bounds for IID inputs and the Importance of Input Shape}\label{sec:no_weight}

Theorem \ref{main_gauss} uses shaped inputs whereas, in practice, one might expect that i.i.d.~input sequence (without shaping matrix $K$) should be sufficient for recovering the impulse response with near-optimal sample size. For instance, \cite{oymak2018non} proves an optimal sample complexity bound for system identification via least-squares and i.i.d.~standard normal inputs. Naturally, we ask: Does Hankel-regularization enjoy similar performance guarantees with i.i.d.~inputs? Do we really need input design?

The following theorem proves that for a special system with order $r=1$, the sample complexity of the problem under i.i.d.~input is no less than $n^{1/3}$, compared to $O(\log n)$ under the shaped input setting. This is accomplished by carefully lower bounding the Gaussian width induced by the Hankel-regularization. Gaussian width directly corresponds to the square-root of the sample complexity of the problem required for recovery in high probability \cite[Thm. 1]{mccoy2013achievable}. Thus, Theorem \ref{thm:counter_iid} shows that the sample complexity with i.i.d.~input can indeed be provably larger than with shaped input.

\begin{theorem}\label{thm:counter_iid}
Suppose the impulse response $h$ of the system is $h_t=1,\ \forall t\ge 1$, which is order $1$. Consider the tangent cone associated with Hankel-regularization (normalized to unit $\ell_2$-norm) defined as $\{x/\|x\|\ |\ \|\cH(h+x)\|_* \le \|\cH(h)\|_*\}$. The Gaussian width of this set is lower bounded by $Cn^{1/6}$ for some constant $C>0$.
\end{theorem}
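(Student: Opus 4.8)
The first step is to put the tangent cone in a convenient form. Since the impulse response is $h=(1,\dots,1)\in\R^{2n-1}$, the associated Hankel matrix $\cH(h)$ is the rank-one all-ones matrix $\mathbf{1}_n\mathbf{1}_n^\top$, with $\|\cH(h)\|_*=n$ and leading singular vector $\mathbf{1}_n/\sqrt n$. Substituting $\delta=h+x$, the set in the statement equals $\{(\delta-\mathbf{1})/\|\delta-\mathbf{1}\|:\ \|\cH(\delta)\|_*\le n\}$, so its Gaussian width is $\E_g\sup\{\langle g,(\delta-\mathbf{1})/\|\delta-\mathbf{1}\|\rangle:\ \|\cH(\delta)\|_*\le n\}$; for a lower bound it suffices to exhibit, for each realization of $g$, a feasible $\delta(g)$ (i.e.\ with $\|\cH(\delta(g))\|_*\le n$) such that $\langle g,\delta(g)-\mathbf{1}\rangle/\|\delta(g)-\mathbf{1}\|$ is large in expectation.

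For the construction, fix an integer $L\asymp\sqrt n$ and a scalar $\mu\asymp n^{-1/4}$, and take $\delta(g)=(1-\mu)\mathbf{1}+\sum_{i=1}^{L}\mathrm{sgn}(g_i)\,e_i$, so that $\delta(g)-\mathbf{1}=-\mu\mathbf{1}+\sum_{i\le L}\mathrm{sgn}(g_i)e_i$. The point is that the Hankel amplification $K_i=\sqrt{\min(i,2n-i)}$ is mild for end indices: the matrix $E:=\cH(\sum_{i\le L}\mathrm{sgn}(g_i)e_i)$ is supported on the top-left $L\times L$ corner, hence $\mathrm{rank}(E)\le L$, $\|E\|_F^2=\sum_{i=1}^L i\le L^2$, and $|\langle \mathbf{1}\mathbf{1}^\top/n,E\rangle|\le L^2/n$. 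A rank-one perturbation estimate for $\cH(\delta(g))=(1-\mu)\mathbf{1}\mathbf{1}^\top+E$ — the top singular value is $(1-\mu)n+O(L^2/n+\|E\|_F^2/n)$ and the remaining ones contribute at most $\|E\|_*+O(\|E\|_F)\le\sqrt L\,\|E\|_F+O(\|E\|_F)\lesssim L^{3/2}$ — gives $\|\cH(\delta(g))\|_*\le(1-\mu)n+CL^{3/2}$, \emph{uniformly in $g$}, which is $\le n$ precisely because $\mu\ge CL^{3/2}/n\asymp n^{-1/4}$. Thus $\delta(g)$ is feasible for every $g$.

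It remains to evaluate the width. With this choice, $\langle g,\delta(g)-\mathbf{1}\rangle=\sum_{i\le L}|g_i|-\mu\langle g,\mathbf{1}\rangle$ and $\|\delta(g)-\mathbf{1}\|^2=\mu^2(2n-1)+L-2\mu\sum_{i\le L}\mathrm{sgn}(g_i)$. Since $\mu^2 n\asymp L\asymp\sqrt n$, standard concentration ($\sum_{i\le L}|g_i|\ge\frac12 L\sqrt{2/\pi}$, $|\langle g,\mathbf{1}\rangle|=O(\sqrt n)$, $|\sum_{i\le L}\mathrm{sgn}(g_i)|=O(\sqrt L)$, all with high probability) yields $\langle g,\delta(g)-\mathbf{1}\rangle\gtrsim\sqrt n$ and $\|\delta(g)-\mathbf{1}\|\lesssim n^{1/4}$, so the normalized inner product is $\gtrsim n^{1/4}$ with high probability; taking expectation gives Gaussian width $\gtrsim n^{1/4}$, in particular $\ge Cn^{1/6}$ as claimed. (One can also avoid the explicit evaluation: the $2^{\Omega(L)}$ sign vectors of pairwise Hamming distance $\ge L/4$ give an $\Omega(1)$-separated subset of the normalized cone of cardinality $2^{\Omega(\sqrt n)}$, and Sudakov minoration gives width $\gtrsim\sqrt L\asymp n^{1/4}$.)

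The main obstacle is the uniform nuclear-norm bound in the second step — controlling $\|(1-\mu)\mathbf{1}\mathbf{1}^\top+E\|_*$ for all corner perturbations $E$ simultaneously, so that a single contraction of order $n^{-1/4}$ along the all-ones direction absorbs the cost. The structural feature that makes it go through is exactly the one responsible for the gap with shaped inputs: $\cH$ is poorly conditioned because it inflates the end coordinates only mildly, so a $\Theta(\sqrt n)$-sparse corner perturbation is simultaneously large in the coordinates (unit entries, giving the $\sqrt L$ width gain) and small in nuclear norm ($O(n^{3/4})\ll n$). Carefully handling the second-order term of the perturbation expansion, the cross terms between $\mathbf{1}$ and the $e_i$'s, and the requirement that every estimate be uniform over $g$ are the remaining, routine, points.
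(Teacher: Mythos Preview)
Your proposal is correct and takes a genuinely different route from the paper. The paper works in the dual: it uses the polar-cone distance formulation $w(\Phi)\asymp \E\min_{\lambda,W}\|\lambda\mathcal H^*(V+W)-g\|_2$ (with $V=\tfrac1n\mathbf 1\mathbf 1^\top$), conditions on $\mathbf 1^\top g\le 0$, and argues that for small indices $i$ the $i$th entry of $\mathcal H^*(V+W)$ is a sum of only $i$ bounded terms, hence $|\lambda(\mathcal H^*(V+W))_i|\lesssim i\lambda$, which cannot track $g_i$ on an initial block of length $\sim 1/\lambda$; combined with the constraint $\lambda\lesssim\|z\|/\sqrt n$ coming from $\mathbf 1^\top z\ge\lambda n$, this forces $\|z\|\gtrsim n^{1/6}$. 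You instead work in the primal: you exhibit explicit descent directions $-\mu\mathbf 1+\sum_{i\le L}\mathrm{sgn}(g_i)e_i$ supported on the first $L\asymp\sqrt n$ coordinates, exploit that such corner-supported Hankel perturbations have nuclear norm only $O(L^{3/2})=O(n^{3/4})$, and absorb this by a contraction $\mu\asymp n^{-1/4}$ along $\mathbf 1$. Both proofs hinge on the same structural fact---$\mathcal H$ amplifies the end coordinates only mildly---but you use it on the primal side, which is more constructive and in fact yields the stronger bound $w(\Phi)\gtrsim n^{1/4}$ rather than $n^{1/6}$.

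Two minor remarks. First, the ``main obstacle'' you highlight (the uniform nuclear-norm bound) is actually simpler than your perturbation-expansion sketch suggests: the crude triangle inequality $\|(1-\mu)\mathbf 1\mathbf 1^\top+E\|_*\le (1-\mu)n+\|E\|_*$ together with $\|E\|_*\le\sqrt{\mathrm{rank}(E)}\,\|E\|_F\le L^{3/2}$ already gives feasibility for every sign pattern, with no second-order analysis needed. Second, when passing from ``ratio $\gtrsim n^{1/4}$ with high probability'' to a lower bound on the expectation, you should note that on the complementary low-probability event the supremum is at least $-\|g\|$, and $\E[\|g\|\mathbf 1_{\text{bad}}]$ is negligible because the bad-event probability decays faster than any polynomial in $n$; this is routine but worth one sentence. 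Your Sudakov alternative sidesteps this entirely and is the cleanest way to finish.
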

This implies that, in the noiseless setting, the sample complexity to recover the impulse response is $T \gtrsim n^{1/3}$, which is larger than $\log n$ dependence with shaped input. This result is rather counter-intuitive since i.i.d.~inputs are often optimal for structured parameter estimation tasks (e.g.~compressed sensing, low-rank matrix estimation). In contrast, our result shows the provable benefit of input shaping. 

\section{Refined Bounds for the Least-Squares Estimator}\label{s:ls}

To better understand the Hankel-regularization bound of Theorem \ref{main_gauss}, one can contrast it with the performance of unregularized least-squares. Interestingly, the existing least-squares bounds are not tight when it comes to Hankel spectral norm. In this section, we revisit the least-squares estimator, tighten existing bounds, and contrast the result with our Theorem \ref{main_gauss}. We consider the MIMO setup where $y\in\R^{T\times m}$ and $h\in\R^{(2n-1)p\times m}$. This is obtained by setting $\lambda=0$ in \eqref{eq:lasso_prob_h}, hence the estimator is given via the pseudo-inverse
\begin{align}\label{eq:generic_ls}
    \hat{h}:=h + \bar{\Ub}^\dag z=\min_{h'}~  \frac{1}{2}\|\bar{\Ub}h' - y\|_F^2.
\end{align}

The next theorem bounds the error when inputs and noise are randomly generated.
\begin{theorem}\label{thm ls spectral}
    Denote the solution to \eqref{eq:generic_ls} as $\hat h$. Let $\bar{\Ub}\in\R^{T\times (2n-1)p}$ be input matrix obtained from multiple rollouts, with i.i.d.~standard normal entries, $y\in\R^{T\times m}$ be the corresponding outputs and $z\in\R^{T\times m}$ be~the noise matrix with i.i.d. $\Nn(0,\sigma_z^2)$ entries. Then the spectral norm error obeys
    $\|\mathcal{H}(\hat h - h)\| \lesssim \sigma_ z \sqrt{\frac{mnp}{T}}\log (np)$ when $T\gtrsim mnp$.
\end{theorem}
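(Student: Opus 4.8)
The plan is to reduce the spectral-norm error of the Hankel matrix to a Frobenius-norm error on the impulse response together with the elementary inequality \eqref{spec relate}, and then use the latter's right-hand side only where it is harmless. More precisely, write the estimation error as $\hat h - h = \bar\Ub^\dag z = (\bar\Ub^\top\bar\Ub)^{-1}\bar\Ub^\top z$. The first step is to bound the operator norm of $\bar\Ub^\dag$: since $\bar\Ub\in\R^{T\times(2n-1)p}$ has i.i.d.\ standard normal entries and $T\gtrsim mnp\gtrsim (2n-1)p$, standard non-asymptotic bounds on the extreme singular values of Gaussian matrices (e.g.\ Vershynin) give $\sigma_{\min}(\bar\Ub)\gtrsim\sqrt T$ with probability at least $1-2e^{-cT}$, so $\|\bar\Ub^\dag\|\lesssim 1/\sqrt T$. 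The second step handles the noise: conditioned on $\bar\Ub$, each column of $\bar\Ub^\dag z$ is a Gaussian vector; one wants a high-probability bound on $\|\mathcal H(\hat h - h)\|$ directly rather than going through $\|\hat h - h\|_F$, because the latter would lose a spurious $\sqrt{np}$.

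The key step — and the one that produces the $\sqrt{mnp/T}\,\log(np)$ rate rather than a weaker $(np)/\sqrt T$ rate — is an $\eps$-net / matrix-concentration argument applied to the Hankel operator acting on the Gaussian error. Write $\mathcal H(\hat h - h) = \sum_{i,j}\mathcal H(\mathbf e_{i})\,(\hat h - h)_{i,j}\,\mathbf f_j^\top$ expanded over the $m$ output channels and the $(2n-1)p$ impulse-response coordinates, and note that for a fixed unit pair $(a,b)$ the scalar $a^\top\mathcal H(\hat h - h)b$ is, conditioned on $\bar\Ub$, a mean-zero Gaussian whose variance is controlled by $\|\bar\Ub^\dag\|^2\sigma_z^2$ times the "energy" that the test vectors $a,b$ place on each Markov parameter; crucially each Markov parameter appears in at most $\min(i,2n-i)\le n$ anti-diagonal positions of the Hankel matrix, which is exactly where the single factor of $n$ (and no more) enters. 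Taking a union bound over a $1/4$-net of the unit spheres in $\R^{mn}$ and $\R^{pn}$ — of cardinality $e^{O(mn)}$ and $e^{O(pn)}$ — and choosing the deviation level $t\asymp\sigma_z\sqrt{mnp/T}\,\log(np)$ makes the failure probability of the net bound $o(1)$; a standard net-to-sup conversion (the $4/3$ trick) then upgrades this to the operator norm over the full spheres. Combining with the singular-value bound from Step 1 and a final union bound over the two good events yields the claimed inequality.

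I expect the main obstacle to be the second step done sharply: one must argue that applying the (unbalanced, non-norm-preserving) Hankel map $\mathcal H$ to the Gaussian vector $\bar\Ub^\dag z$ inflates the operator norm by only $\sqrt n$ rather than by $n$. A loose route — bound $\|\mathcal H(\hat h - h)\|\le\sqrt n\|\hat h - h\|_F$ via \eqref{spec relate}, then bound $\|\hat h - h\|_F\lesssim\sigma_z\sqrt{(np)\cdot(np)/T}$ — gives the wrong $(np)/\sqrt T$ scaling; the paper explicitly advertises an improvement over exactly this. So the net argument has to be run on the Hankel matrix itself, exploiting that each coordinate of the error contributes to at most $n$ entries of $H$ and that these entries are spread along a single anti-diagonal (so the relevant column/row test-vector mass is $\ell_2$-controlled), which is what buys back the factor $\sqrt n$. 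The $\log(np)$ factor is then the price of the sub-exponential tail of the quadratic form $\|\bar\Ub^\dag z\|$ after conditioning, i.e.\ it comes from integrating the Gaussian tail against the $e^{O((m+p)n)}$ net — essentially a $\chi^2$-type concentration, which is routine once the variance proxy is identified correctly.
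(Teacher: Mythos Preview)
Your net argument does not give the rate you claim. Consider the SISO case ($m=p=1$): for unit $a,b\in\R^n$ you correctly note that, conditioned on $\bar\Ub$, the scalar $a^\top\mathcal H(\hat h-h)b=\sum_k (a*b)_k\,e_k$ is Gaussian with variance $\sigma_z^2\,c^\top(\bar\Ub^\top\bar\Ub)^{-1}c\lesssim\sigma_z^2\|a*b\|_2^2/T\le\sigma_z^2 n/T$ (Young's inequality on the convolution is where the single factor of $n$ enters). But the $1/4$-net on $\mathbb S^{n-1}\times\mathbb S^{n-1}$ has cardinality $e^{Cn}$, so the union bound reads
\[
\Pr\bigl(\sup_{\text{net}}|a^\top\mathcal H(e)b|>t\bigr)\le e^{Cn}\,e^{-t^2T/(2n\sigma_z^2)},
\]
which is $o(1)$ only for $t\gtrsim\sigma_z\,n/\sqrt T$, not $t\asymp\sigma_z\sqrt{n/T}\,\log n$. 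Your proposed deviation level gives an exponent $(\log n)^2-Cn$, which diverges to $-\infty$. In other words, a raw net on the spheres recovers exactly the naive bound $\|\mathcal H(\hat h-h)\|\le\sqrt n\,\|\hat h-h\|_F\lesssim\sigma_z n/\sqrt T$ that the theorem is meant to improve upon; the net is simply too large for a $\log$ factor to appear.

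The paper's proof avoids this by never touching a net on $\R^n$. Instead it embeds the Hankel matrix into a circulant matrix $H_{\bar z}$ and uses the DFT diagonalization $H_{\bar z}=F^{-1}\mathrm{diag}(F\bar z)F$ (citing Krahmer et al.), which gives the deterministic bound $\|\mathcal H(\bar z)\|\le\|H_{\bar z}\|=\|F\bar z\|_\infty$. Now one only needs the maximum of $O(np)$ Gaussian coordinates, each of variance $\lesssim\sigma_z^2 np/T$ (since $F(\bar\Ub^\top\bar\Ub)^{-1}F^\top\preceq\frac{3n}{2T}I$ once $T\gtrsim np$), and the $\log(np)$ is the usual max-of-Gaussians factor over $O(np)$ terms rather than $e^{O(n)}$ terms. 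The MIMO case is handled by treating the $m$ output channels as a concatenation, costing an extra $\sqrt m$. So the missing idea in your plan is precisely this circulant/Fourier reduction of the spectral norm to an $\ell_\infty$ norm; without it, the covering argument cannot reach the target rate.
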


This theorem improves the spectral norm bound compared to \cite{oymak2018non}, which naively bounds the spectral norm in terms of IR error using the right-hand side of \eqref{spec relate}. Instead, we show that spectral error is same as the IR error up to a log factor (when there is only output noise).  
We remark that $O(\sigma_{z}\sqrt{np/T})$ is a tight lower bound for $\|\Hc(h-\hat{h})\|$ as well as $\|h - \hat h\|$ \cite{oymak2018non,arias2012fundamental}. The proof of the theorem above is in Appendix \ref{s:ls_append}. Note that, we apply the i.i.d input here for least squares which is different from regularized algorithm. It works with at least $\order{n}$ samples, whereas the sample complexity for the regularized algorithm in Theorem \ref{main_gauss} is $\order{R}$.

\section{Model Selection for Hankel-Regularized System ID}\label{s:model_selection}
In Thm. \ref{main_gauss}, we established the recovery error for system's impulse response for a particular parameter choice $\lambda$, which depends on the noise level $\sigma$. In practice, we do not know the noise level and the optimal $\lambda$ choice is data-dependent. Thus, given a set of parameter candidates $\Lambda\subset\R^+$, one can evaluate $\lambda\in\Lambda$ and minimize the validation error to perform model selection. Algorithm \ref{algo:1} summarizes our training and validation procedure where $|\Lambda|$ denotes the cardinality of $\Lambda$. The theorem below states the performance guarantee for this algorithm. 

\begin{theorem}\label{thm:e2e} Consider the setting of Theorem~\ref{main_gauss}. Sample $T$ i.i.d.~training rollouts $(\Ub,y)$ and $T_{\text{val}}$ i.i.d.~validation rollouts $(\Ub_{\text{val}},y_{\text{val}})$. Set $\lambda^* = C\sigma\sqrt{\frac{pn}{T}}\log(n)$ which is the choice in Thm.~\ref{main_gauss}. Fix failure probability $P\in(0,1)$. Suppose that:

\textbf{(a)} There is a candidate $\hat \lambda\in\Lambda$ obeying $\lambda^*/2 \le \hat\lambda \le 2\lambda^*$.

\textbf{(b)} Validation set obeys $T_{\text{val}} \gtrsim \left( \frac{T\log^2(|\Lambda|/P)}{R\log^2(n)}\right)^{1/3}$.

Set $\bar{R}=\min(R^2,n)$. With probability at least $1-P$, Algorithm \ref{algo:1}
achieves an estimation error equivalent to \eqref{eq:reg_spec_error}:
\begin{align} 
  \|\cH(\hat{h} - h)\| \lesssim   
 \begin{cases} \sqrt{\frac{np}{\snr\times T}}\log(n),~\text{if}\ T\gtrsim \bar{R},\\
 \sqrt{\frac{Rnp}{\snr\times T}}\log(n),~\text{if}\ R\lesssim T\lesssim \bar{R}.\end{cases}
 \label{eq:reg_spec_error_model_select}
\end{align}
\end{theorem}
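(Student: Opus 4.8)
The plan is to reduce the model-selection guarantee to the deterministic bound of Theorem~\ref{main_gauss} via a uniform-deviation argument over the finite candidate set $\Lambda$. The key observation is that for the particular $\hat\lambda$ promised in hypothesis (a), running \eqref{eq:lasso_prob_h} on the $T$ training rollouts already yields an $\hat h_{\hat\lambda}$ satisfying \eqref{eq:reg_spec_error}: the analysis behind Theorem~\ref{main_gauss} is robust to a constant-factor perturbation of $\lambda$ (the paper explicitly notes $\lambda$ may be taken in a range $[1,2]\cdot C\sigma\sqrt{pn/T}\log n$), so $\lambda^*/2 \le \hat\lambda \le 2\lambda^*$ suffices. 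Thus there exists at least one good candidate; the task of the validation phase is merely to \emph{identify} a candidate whose true error is comparable to the best one, which is exactly what a hold-out estimate of the prediction error does.

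First I would set up the validation risk functional: for each $\lambda\in\Lambda$ with associated solution $\hat h_\lambda$, define the empirical validation error $\widehat{\mathcal{E}}(\lambda) = \frac{1}{T_{\text{val}}}\|\Ub_{\text{val}}\hat h_\lambda - y_{\text{val}}\|_F^2$ and its population counterpart $\mathcal{E}(\lambda) = \E[\widehat{\mathcal{E}}(\lambda)] = \|\hat h_\lambda - h\|_{K^{-1}}^2 + \text{const}$, where the quadratic form comes from $\E[\Ub_{\text{val}}^\top\Ub_{\text{val}}]$ (diagonal, since the scaled input has i.i.d.\ Gaussian entries). Because $\hat h_\lambda$ is computed on the independent training set, conditioning on the training data makes each $\hat h_\lambda$ deterministic, and $\widehat{\mathcal{E}}(\lambda) - \mathcal{E}(\lambda)$ is a centered quadratic-plus-linear function of the fresh Gaussian validation inputs/noise. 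Second, I would prove a concentration bound: with probability $1-P$, simultaneously over all $\lambda\in\Lambda$,
\begin{align}
|\widehat{\mathcal{E}}(\lambda) - \mathcal{E}(\lambda)| \lesssim \left(\|\hat h_\lambda - h\|_2 + \sigma\right)\cdot\sigma\cdot\sqrt{\frac{n\log(|\Lambda|/P)}{T_{\text{val}}}} + \frac{\sigma^2 n\log(|\Lambda|/P)}{T_{\text{val}}},\nonumber
\end{align}
via a Hanson–Wright / Bernstein-type bound for sub-exponential quadratic forms plus a union bound over $|\Lambda|$ candidates. Third, the standard oracle-inequality step: since Algorithm~\ref{algo:1} picks $\hat\lambda_{\text{sel}} = \argmin_\lambda \widehat{\mathcal{E}}(\lambda)$, we get $\mathcal{E}(\hat\lambda_{\text{sel}}) \le \mathcal{E}(\hat\lambda) + 2\sup_\lambda|\widehat{\mathcal{E}} - \mathcal{E}|$, and $\mathcal{E}(\hat\lambda)$ is controlled by \eqref{eq:reg_spec_error} through the equivalence $\|\cdot\|_{K^{-1}} \sim \|\cdot\|_2$ up to the $K$-weights (which cost at most a $\log n$ factor, already absorbed). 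Finally, I would translate the resulting bound on $\|\hat h_{\hat\lambda_{\text{sel}}} - h\|_2$ back to the Hankel spectral norm using the left inequality of \eqref{spec relate}, and verify that the requirement (b) on $T_{\text{val}}$ is exactly what makes the deviation term $\sqrt{n\log(|\Lambda|/P)/T_{\text{val}}}$ no larger than the target rate $\sqrt{Rn/(\snr\cdot T)}\log n$ (equivalently $T_{\text{val}}\gtrsim (T\log^2(|\Lambda|/P)/(R\log^2 n))^{1/3}$, after squaring and matching powers — note the cube root reflects that the error being estimated scales like $T^{-1/2}$ while the validation noise floor also enters, producing the $1/3$ exponent).

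The main obstacle I anticipate is the concentration step for the validation error: because $\hat h_\lambda$ can itself be large in norm in the small-sample regime $R\lesssim T\lesssim\bar R$, the cross term between $(\hat h_\lambda - h)$ and the validation noise is not a pure quadratic form but a product of a data-dependent (but training-measurable) vector with fresh Gaussians, and one must carefully track how $\|\hat h_\lambda - h\|_2$ — which is bounded by \eqref{eq:reg_spec_error} only for \emph{good} $\lambda$ — enters the deviation uniformly over \emph{all} $\lambda\in\Lambda$, including bad ones. The fix is to note we only need a \emph{one-sided} comparison: for bad $\lambda$ the population error $\mathcal{E}(\lambda)$ is already large, so even a crude upper bound on the deviation (or a self-bounding argument where the deviation is a small fraction of $\mathcal{E}(\lambda)$ itself) suffices to ensure the minimizer is not bad; meanwhile for the good candidate $\hat\lambda$ we have the sharp norm control. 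A secondary technical point is handling the $K$-weighting consistently between the training objective \eqref{eq:lasso_prob_h} and the (unshaped) prediction error that naturally appears in validation — but since $K$ has entries in $[1,\sqrt n]$ this only moves things by $\log n$ factors, consistent with the statement's convention of hiding such factors.
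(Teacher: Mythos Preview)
Your proposal follows essentially the same route as the paper: a hold-out oracle inequality obtained by concentrating the empirical validation error around its population counterpart via Bernstein/sub-exponential bounds, then union-bounding over the $|\Lambda|$ candidates. The decomposition into the quadratic term $\|\Ub_{\text{val}}(\hat\beta-\beta)\|_2^2$ and the cross term $2\xi_{\text{val}}^\top\Ub_{\text{val}}(\hat\beta-\beta)$, and the recognition that the cross term is the delicate part because its scale depends on $\|\hat\beta-\beta\|_2$, match the paper exactly.

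Two points of execution differ from the paper and deserve attention. First, your remark that the $K$-weighting ``only moves things by $\log n$ factors'' is not right: $K$ has entries ranging over $[1,\sqrt n]$, so passing between $\|\hat h-h\|_2$ and the natural population quantity $\|K(\hat h-h)\|_2$ costs up to $\sqrt n$, which would destroy the bound. The paper sidesteps this by working throughout in the reparameterization $\beta=Kh$, where the scaled validation design $\Ub_{\text{val}}$ is isotropic Gaussian and the population error is exactly $\|\hat\beta-\beta\|_2^2$; since $\|\hat h-h\|_2\le\|\hat\beta-\beta\|_2$, one then reads off the bound on $h$ for free. Second, your explanation of the cube-root in condition~(b) is too vague. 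In the paper the $1/3$ arises from an explicit optimization: one introduces a multiplicative slack $\delta$ so that, with probability $1-P$, the centered validation error lies in $(a_1,a_2)\cdot\|\hat\beta-\beta\|_2^2$ with $a_2/a_1=1+\delta$; Bernstein forces $T_{\text{val}}\gtrsim\delta^{-2}\log(|\Lambda|/P)$, while in the noise-dominated regime the inflation factor is $T_0^{1/4}$ with $T_0=T/(T_{\text{val}}R\log^2 n)$, and requiring $T_0^{1/4}\delta\lesssim 1$ gives $T_{\text{val}}\gtrsim\delta^4 T/(R\log^2 n)$. Equating the two lower bounds yields $\delta^2=(R\log^2(n)\log(|\Lambda|/P)/T)^{1/3}$ and hence the stated $T_{\text{val}}$. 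Your self-bounding idea for ``bad'' $\lambda$ is precisely what the paper's multiplicative bound accomplishes directly.
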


\begin{algorithm}[tp!]
\caption{Joint System Identification \& Model Selection}\label{algo:1}
\begin{algorithmic}
 \REQUIRE{\emph{\red{$T$ training rollouts:}}~~~Input features $\bar \Ub$, outputs $y$\\
 \emph{\red{$T_{\text{val}}$ validation rollouts:}}~~~Input features $\bar \Ub_{\mathrm{val}}$, outputs $y_{\mathrm{val}}$
 \hspace{5pt}\emph{\red{Hyperparameters:}} Hankel dimension $n$, candidate set $\Lambda$.}
\STATE{\texttt{\green{\% Train a model for each $\lambda\in\Lambda$}}}
\FOR{$\lambda \in \Lambda$}
\STATE{\emph{\red{Training phase:}} Solve \eqref{eq:lasso_prob_h}. Record the solution $\hat{h}_{\lambda}$.
}\ENDFOR
\STATE{\texttt{\green{\% Return the minimum validation error}}}
\STATE{\emph{\red{Model selection:}} $\hat h=\underset{\hat{h}_\lambda:\lambda \in\Lambda}{\arg\min}\|\bar \Ub_{\mathrm{val}} \hat h_{\lambda} - y_{\mathrm{val}}\|_2^2$.
}
\RETURN{$\hat h$}
\end{algorithmic}
\end{algorithm}

In a nutshell, this result shows that as long as candidate set contains a reasonable hyperparameter (Condition \textbf{(a)}), using few validation data (Condition \textbf{(b)}), one can compete with the hindsight parameter choice of Theorem~\ref{main_gauss}. $T_{\text{val}}$ mildly depends on $T$ and only scales poly-logarithmically in $|\Lambda|$ which is consistent with the model selection literature \cite{arlot2010survey}.

In Algorithm \ref{algo:1}, we fix the size of the Hankel matrix (which is usually large/overparameterized) and tune $\lambda>0$. In contrast to this, for OLS and the least-squares procedure of \cite{sarkar2019finite}, model selection is accomplished by varying the size of the Hankel matrix. The next section provides experiments and contrasts these methods and provides insights on how regularization can improve over least-squares for certain class of dynamical systems.

In Appendix \ref{sec: e2e hankel}, we further compare the model selection methods of Hankel-regularization and least-squares, and argue that, the proposed Algorithm \ref{algo:1} requires less data than tuning the least squares method.

\section{Experiments and Insights}\label{s:experiments}

\subsection{Experiments with Synthetic Data}\label{s:experiments slow decay}

 \textbf{When does regularization beat least-squares? Low-order slow-decay systems (Fig. \ref{fig:heavy}).} We use an experiment with synthetic data to answer this question. So far, we showed that for fixed Hankel size, nuclear norm regularization requires less data than unregularized least-squares especially when the Hankel size is set to be large (Table \ref{tab:1}). However, for least squares, one can choose to use a \emph{smaller Hankel size} with $n\approx R$, so that we solve a problem of small dimension compared to $n\gg R$. We ask if there is a scenario in which fine-tuned nuclear norm regularization strictly outperforms fine-tuned least-squares. 

In what follows, we discuss a single trajectory scenario. 
An advantage of the Hankel-regularization is that, it addresses scenarios which can benefit from large $n$ while both the sample complexity $T$ and the system order $R$ are small. Given choice of $n$, in a single trajectory setting, least-squares suffers an error of order $\rho(A)^{2n}(1-\rho(A)^n)^{-1}$ (Thm 3.1 of \cite{oymak2018non}). 
This error arises from FIR truncation of impulse response (to $n$ terms) and occurs for both regularized and unregularized algorithms. In essence, due to FIR truncation, the problem effectively incurs an output noise of order $\rho(A)^{2n}(1-\rho(A)^n)^{-1}$. Thus, if the system decays slowly, i.e., $\rho(A)\approx 1$, we will suffer from significant truncation error.

\begin{figure*}[t!]
\centering{
{
\includegraphics[width=0.3\textwidth,height=0.21\textwidth]{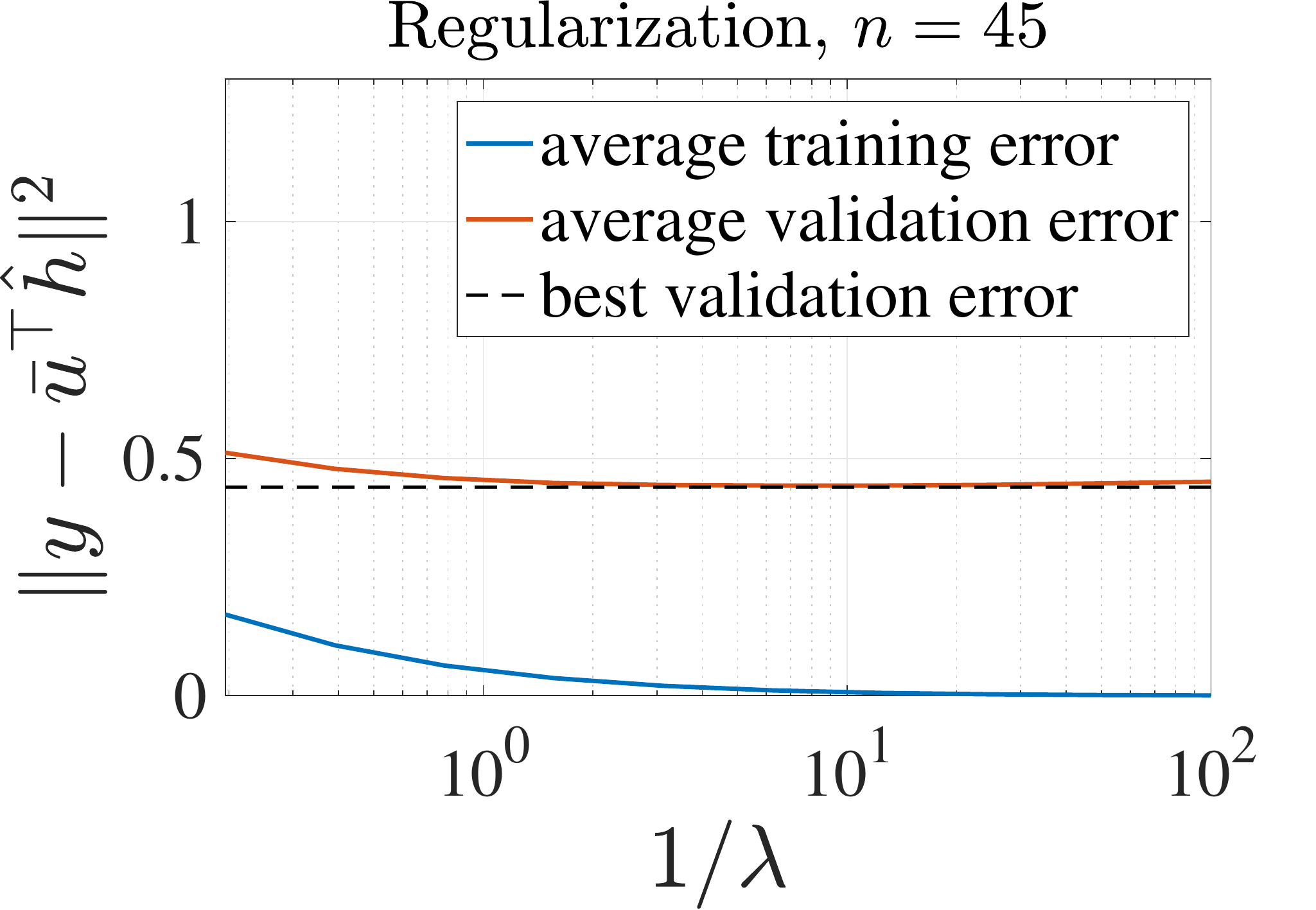}
}
{
\includegraphics[width=0.3\textwidth,height=0.21\textwidth]{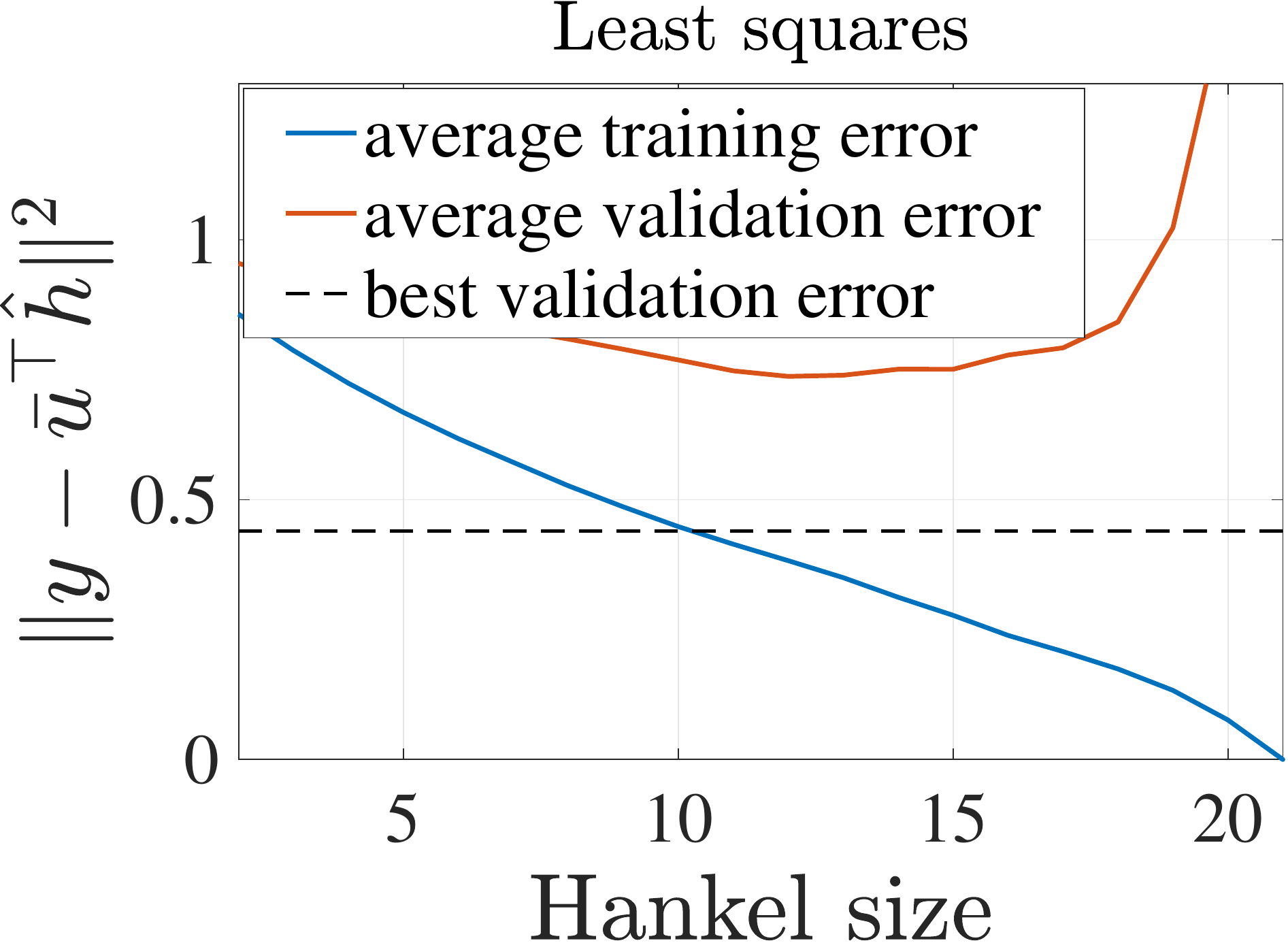}
}
{
\includegraphics[width=0.3\textwidth,height=0.21\textwidth]{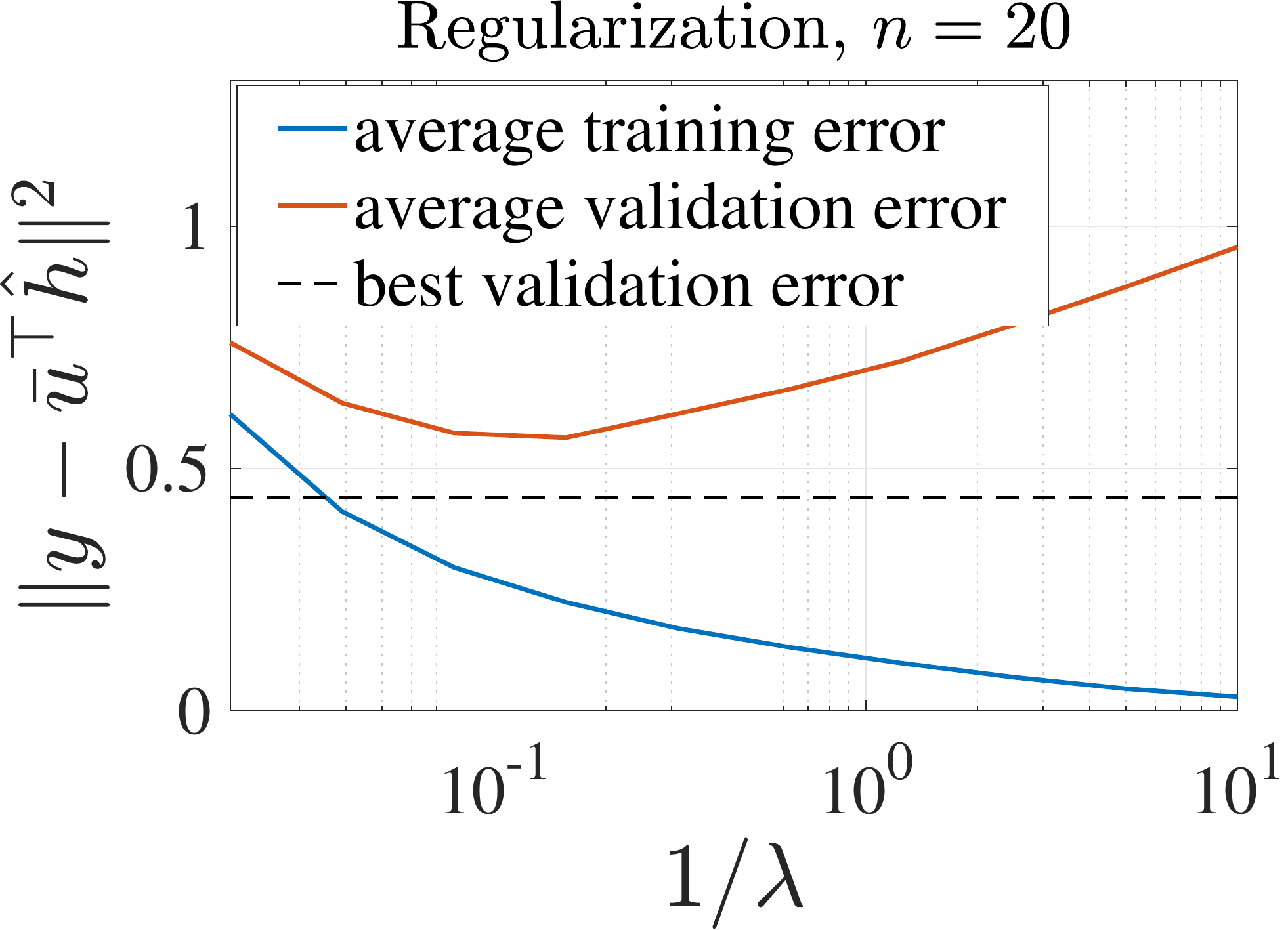}
}
}
{\caption{Synthetic data, single rollout. Order-1 system with pole$=0.98$. Recovery by \emph{Hankel-regularized} algorithm varying $\lambda$ and \emph{least squares} varying Hankel size $n$. Training sample size is $40$ and validation sample size is $800$. The figures are the training/validation error with (1) Hankel-regularized algorithm, varying $\lambda$ with $n = 45$; and (2) least-squares with varying $n$; (3) Hankel-regularized algorithm, varying $\lambda$ with $n = 20$ (small Hankel size). Black line is the best validation error achieved in the first figure.
} \label{fig:heavy}}
\end{figure*}
 
As an example, suppose sample size is $T=40$ and $\rho(A) = 0.98$. If the problem is kept over-determined (i.e. $n< 40$), then $n$ will not be large enough to make the truncation error $\rho(A)^{2n}(1-\rho(A)^n)^{-1}\approx 0.36$ vanishingly small. In contrast, Hankel-regularization can intuitively recover such slowly-decaying systems by choosing a large Hankel size $n$ as its sample complexity is (mostly) independent of $n$ (as in the setting of Theorem \ref{main_gauss}). This motivates us to compare the performance on recovering systems with \emph{low-order slow-decay}. In Fig. \ref{fig:heavy}, we set up an order-$1$ system with a pole at $0.98$ and generate single rollout data with size $40$. We tune $\lambda$ when applying regularized algorithm with $n=45$ (as it is safe to choose a large Hankel dimension), whereas in unregularized method, Hankel size cannot be larger than $20$ ($n\times n$ Hankel has $2n-1$ parameters and we need least squares to remain overdetermined). With these in mind, in the first two figures we can see that the best validation error of regularization algorithm is $0.44$, which is significantly smaller than the unregularized validation error $0.73$. In the third figure, we use regularized least squares with $n=20$, which also causes large truncation error (due to large $\rho(A)$) compared to the initial choice of $n=45$ (the first figure). In this case, the best validation error is $0.56$ which is again noticeably worse than the $0.44$ error in the first figure. 

When the number of variables $2n-1$ is larger than $T$, the system identification problem is overparameterized and there can be infinitely many impulse responses that achieve zero squared loss on the training dataset. This happens in the first figure when $\lambda\rightarrow 0$, and in the second figure when $n$ is large.  In this case, regularized algorithm chooses the solution with the smallest Hankel nuclear norm and the least squares chooses the one with smallest $\ell_2$ norm.   The first figure has smaller validation error when $1/\lambda$ tends to infinity. We verified that, when regularization weight is $0$, the setup in the first figure achieves validation error $0.58$, which is better than least squares for $n=45$ with validation error $1.60$. So among the solutions that overfits the training dataset, the one with small Hankel nuclear norm has better generalization performance when the true system is low order. 

\subsection{Experiments with DaISy Dataset}

Our experiment uses the DaISy dataset \cite{de1997daisy}, where a known input signal (not random) is applied and the resulting noisy output trajectory is measured. Using the input and output matrices
\begin{align}\label{eq:single_input_rip}
    \Ub &= \begin{bmatrix}
    u_{2n-1}^T & u_{2n-2}^T & ... & u_1^T\\
    u_{2n}^T & u_{2n-1}^T & ... & u_2^T\\
    ...\\
    u_{2n+T-2}^T&  u_{2n+T-3}^T & ... & u_{T}^T
    \end{bmatrix},\\
    y &= [y_{2n-1},...,y_{2n + T-2}],
\end{align}
we solve the optimization problem \eqref{eq:lasso_prob_h} using single trajectory data.

While the input model is single instead of multiple rollout, experiments will demonstrate the advantage of Hankel-regularization over least-squares in terms of sample complexity, singular value gap and ease of tuning.

\begin{figure*}[t!]
\centering{
{
\includegraphics[width=0.3\textwidth,height=0.21\textwidth]{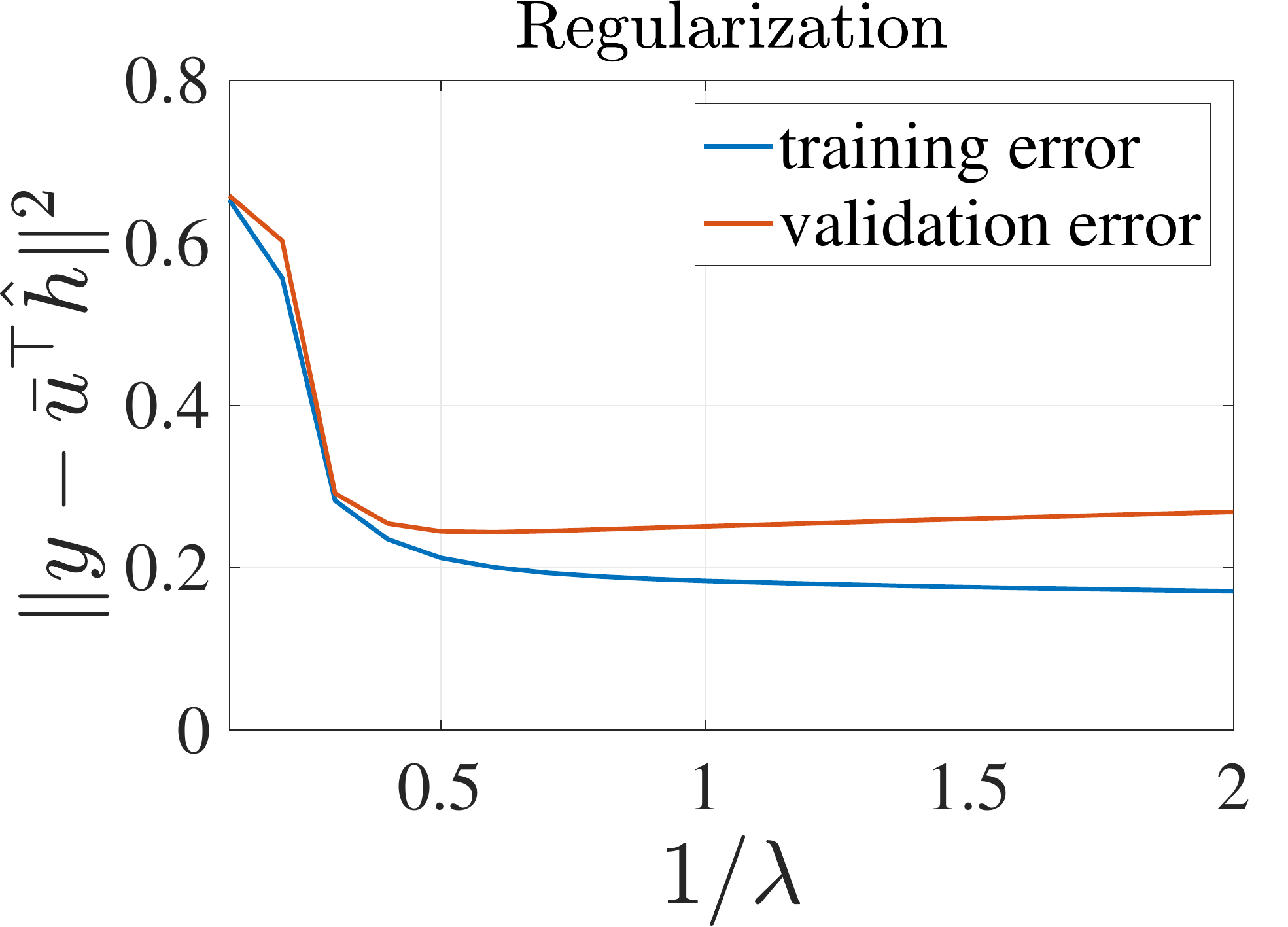}
}\hspace{-0.5em}
{
\includegraphics[width=0.3\textwidth,height=0.21\textwidth]{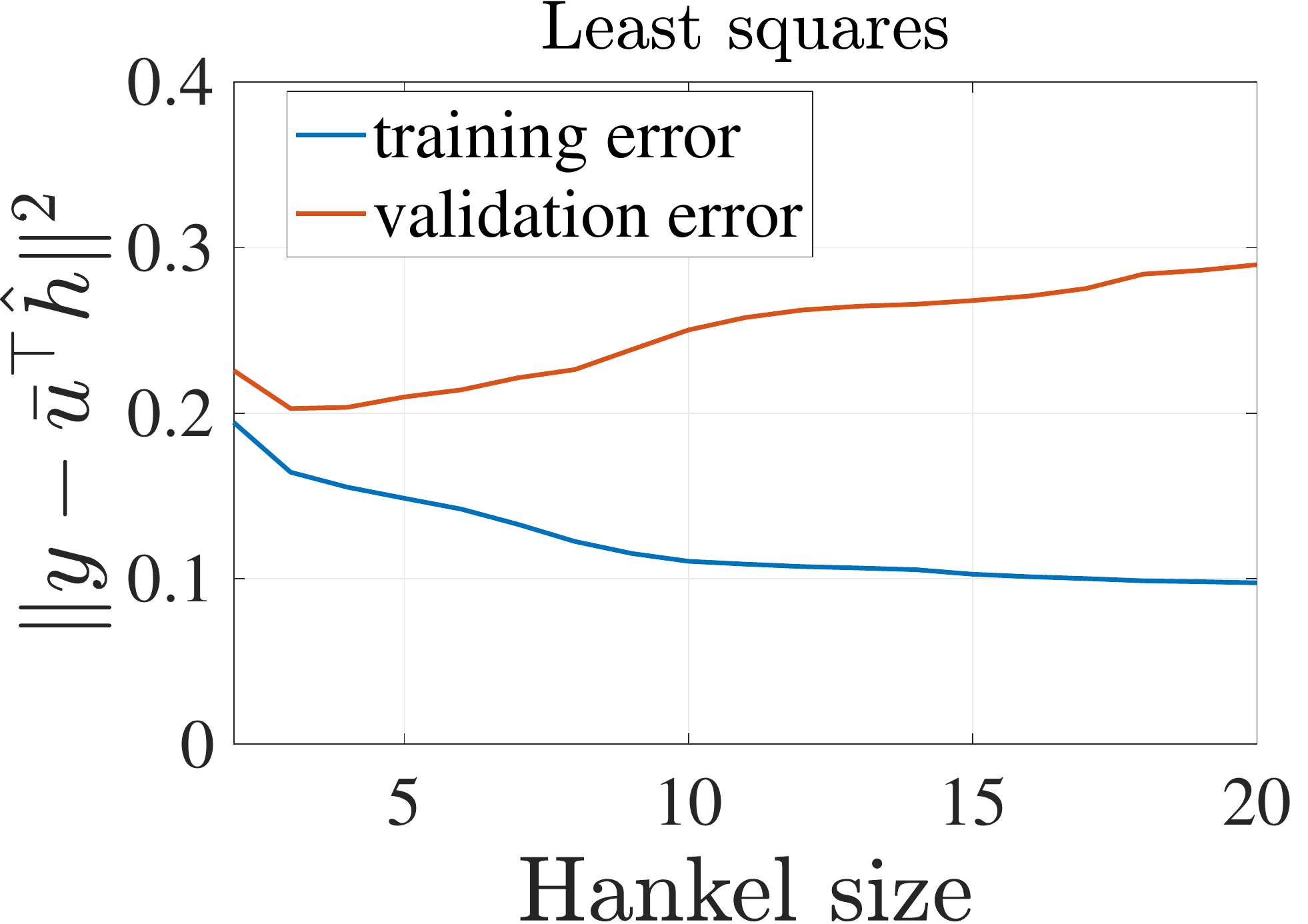}
}\hspace{-0.5em}
{
\includegraphics[width=0.3\textwidth,height=0.21\textwidth]{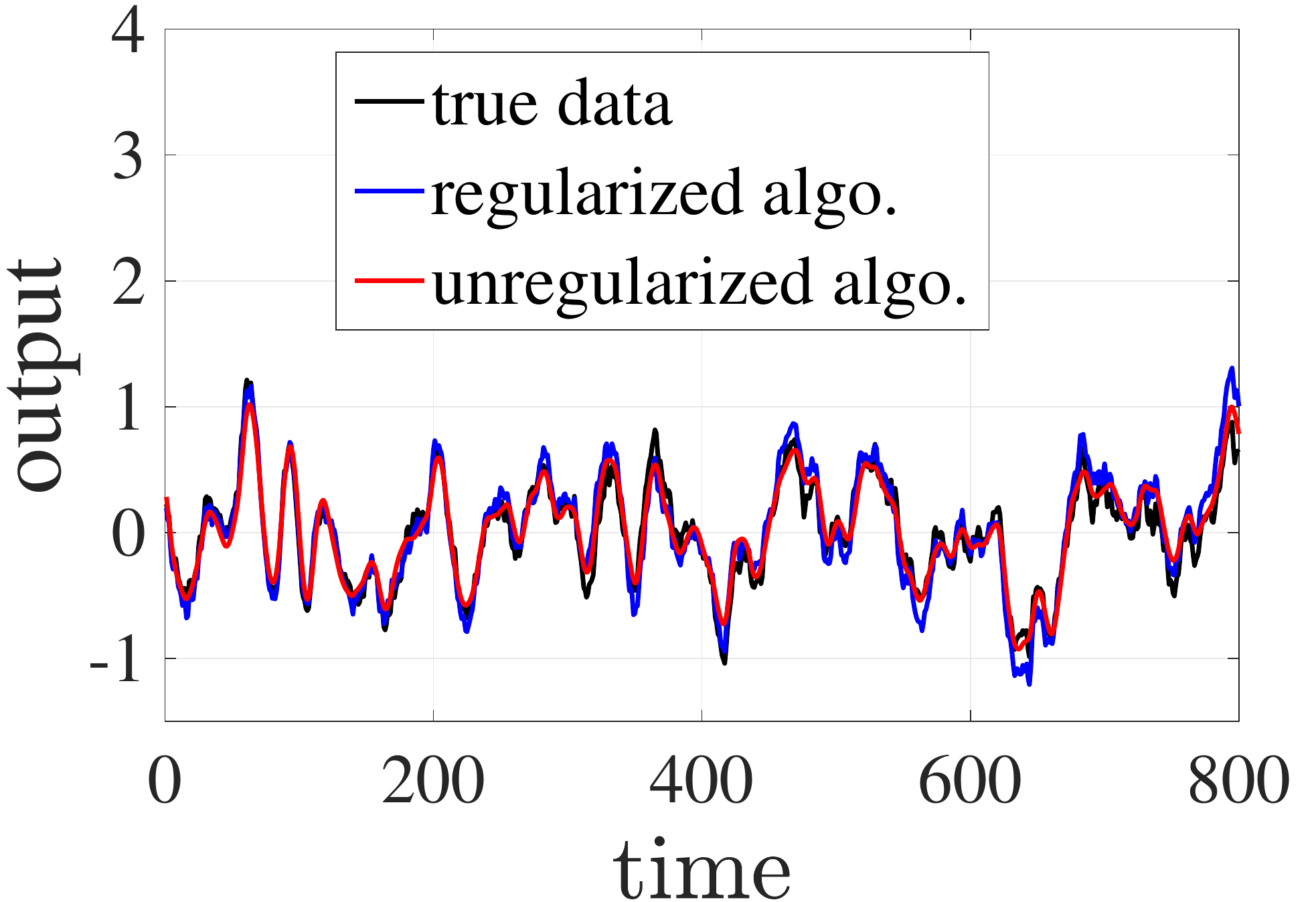}
}
}
{\caption{System identification for CD player arm data. Training data size = 200 and validation data size = 600. The first two figures are the training/validation errors of varying $\lambda$ in Hankel regularized algorithm ($n=10$), and training/validation errors of varying Hankel size $n$ in least squares. The last figure is the output trajectory of the true system and the recovered systems (best validation chosen for each).} \label{fig:error_bigr_CD}}
\end{figure*}

\begin{figure}[t!]
\centering{
{
\includegraphics[width=0.23\textwidth]{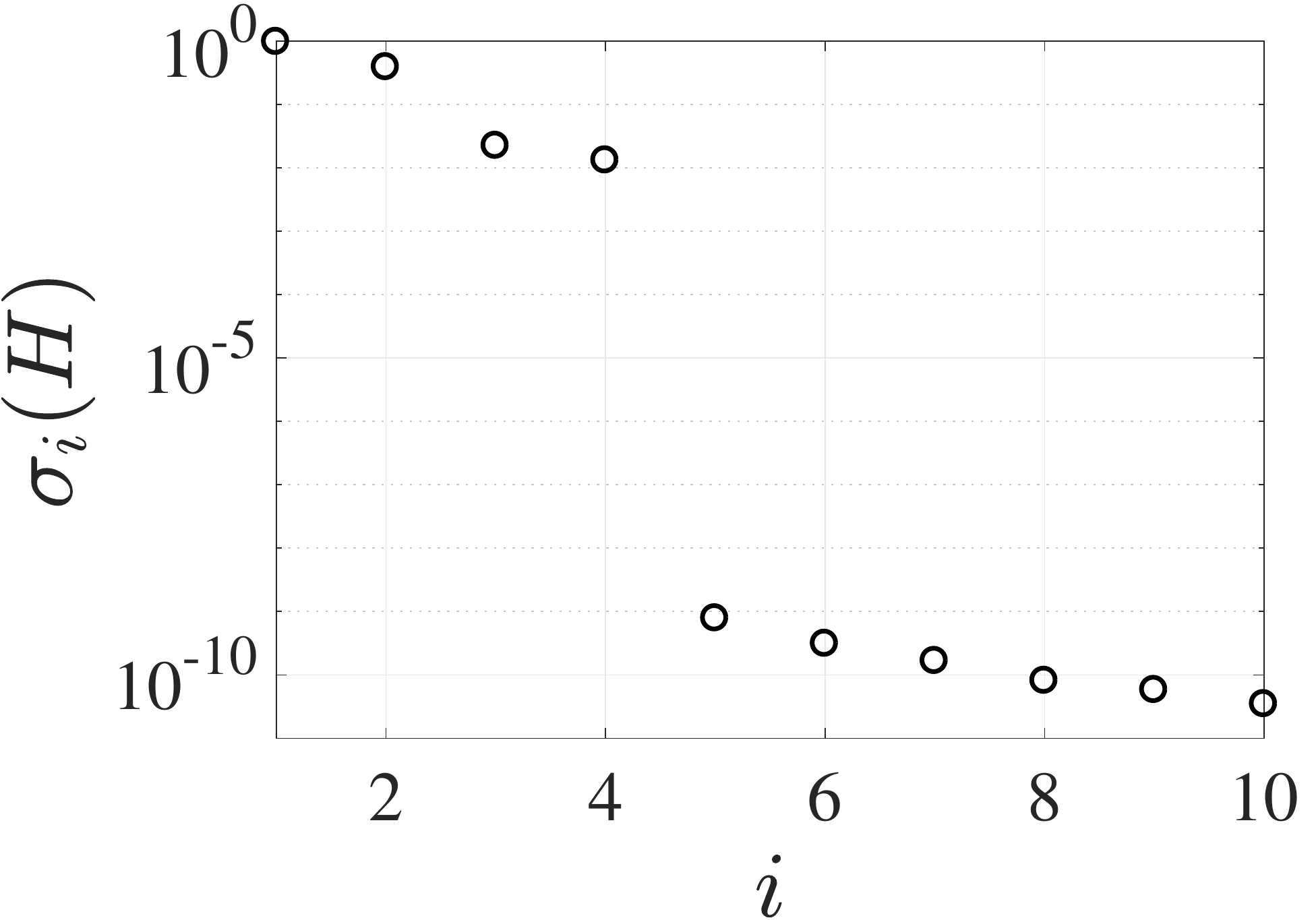}
}\hspace{-0.5em}
{
\includegraphics[width=0.23\textwidth]{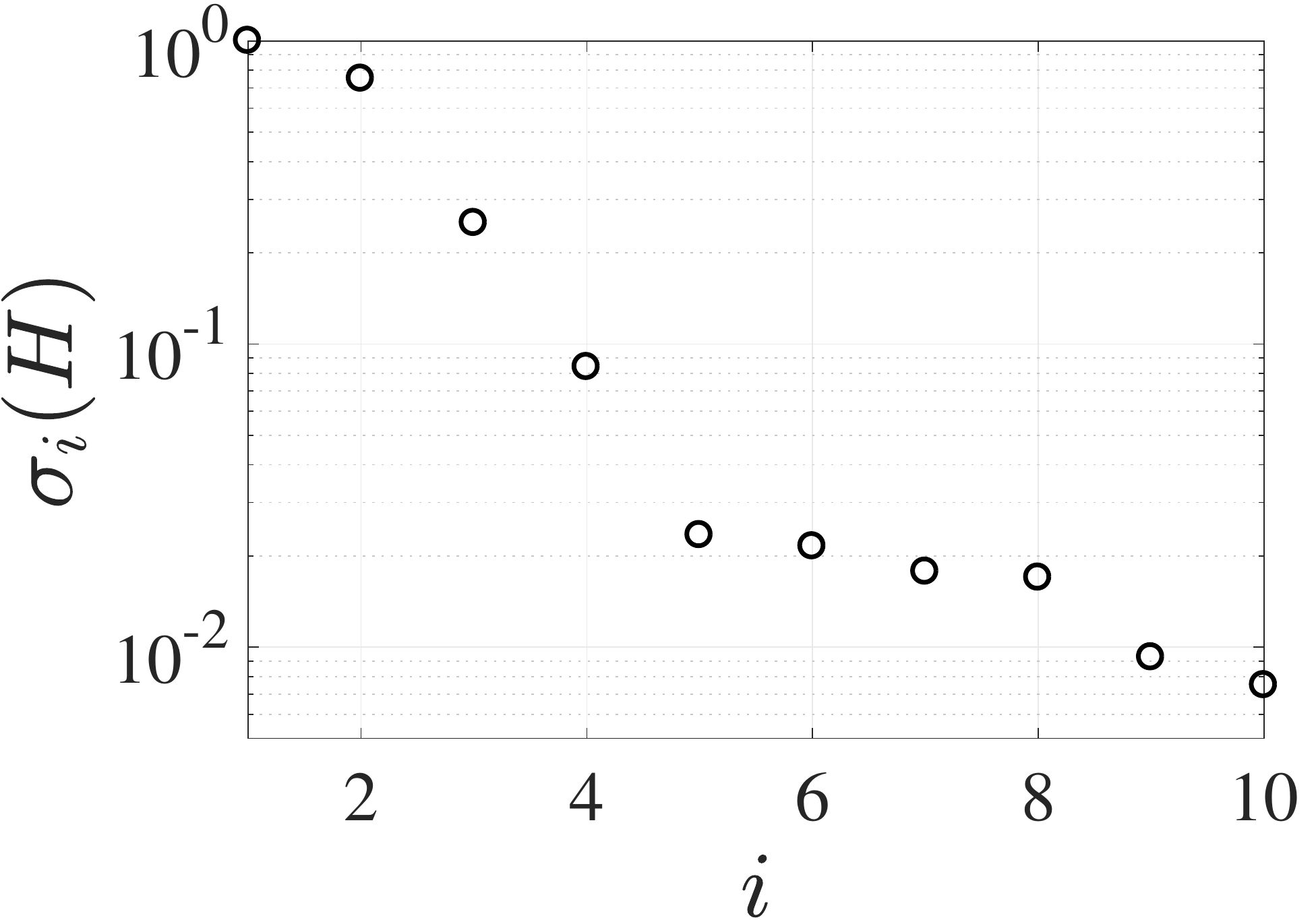}
}
{
\includegraphics[width=0.23\textwidth]{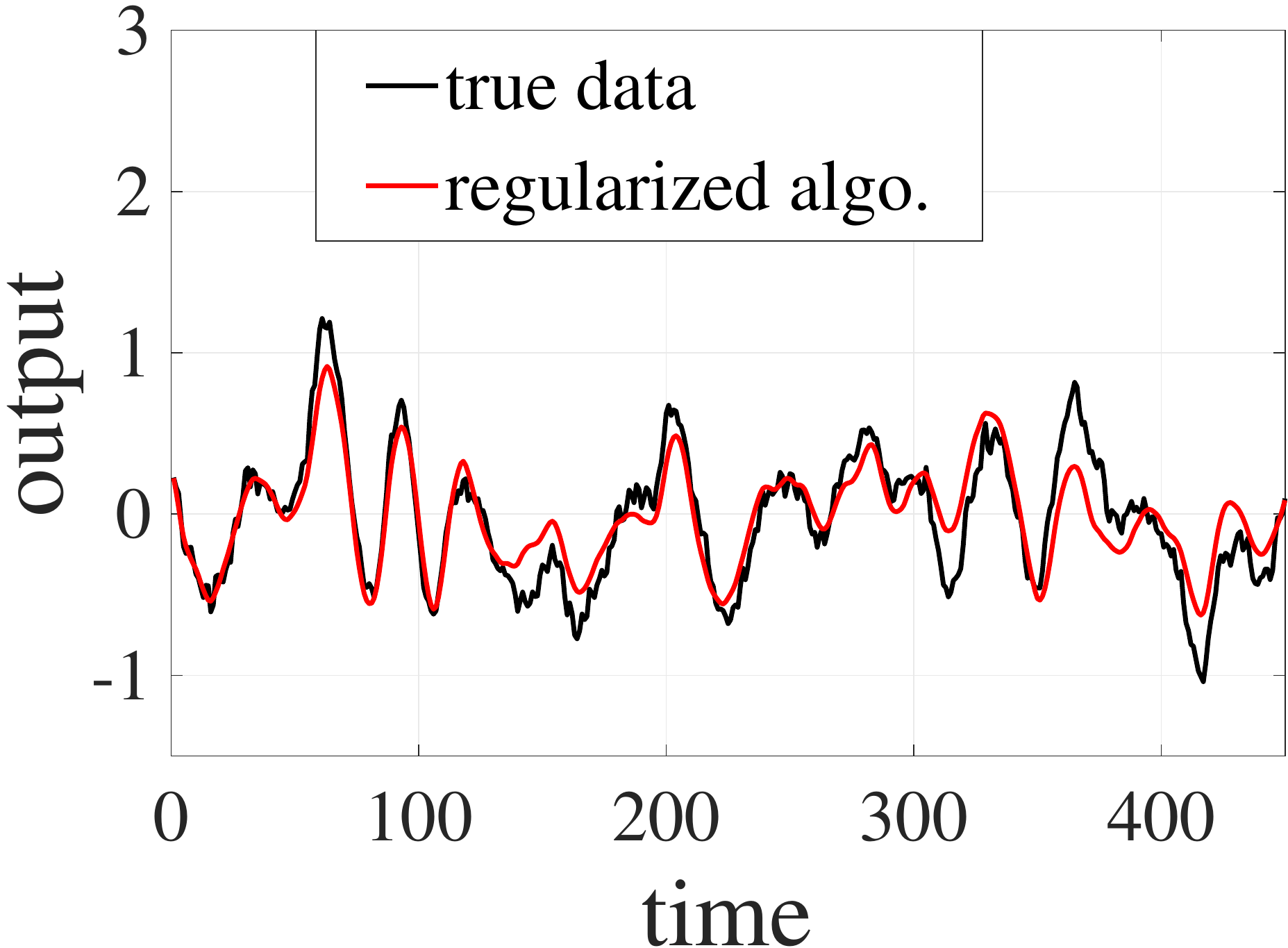}
}\hspace{-0.5em}
{
\includegraphics[width=0.23\textwidth]{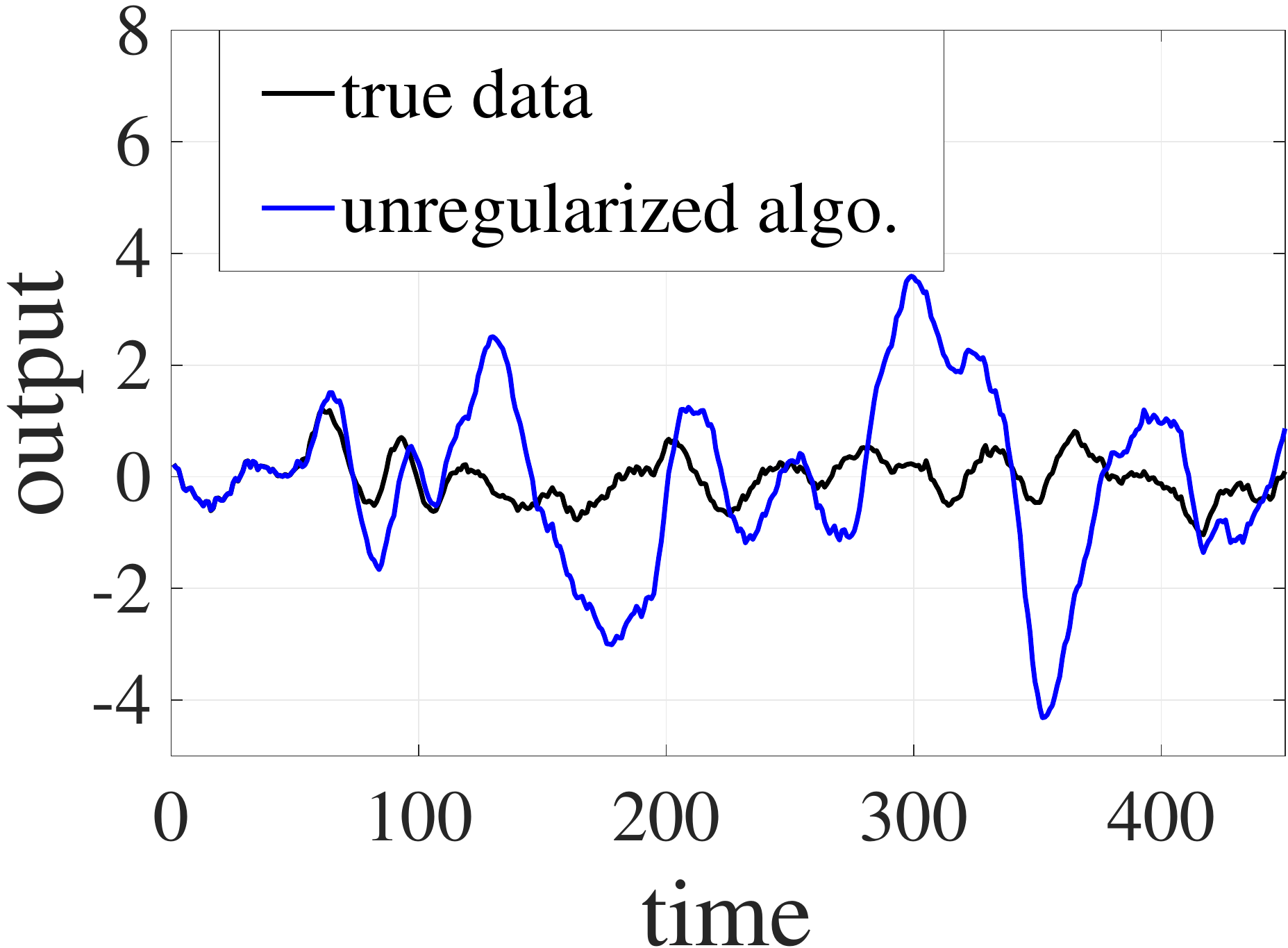}
}
}
{\caption{The first two: CD player arm data, singular values of the Hankel matrix with Hankel regularized algorithm and least squares. The last two: Recovery by Hankel regularized algorithm and least squares when Hankel matrix is $10\times10$. Training data size is $50$ and validation data size is $400$.} \label{fig:bigr_CD_eig}}
\end{figure}

\begin{figure}[t!]
\centering{
{
\includegraphics[width=0.23\textwidth]{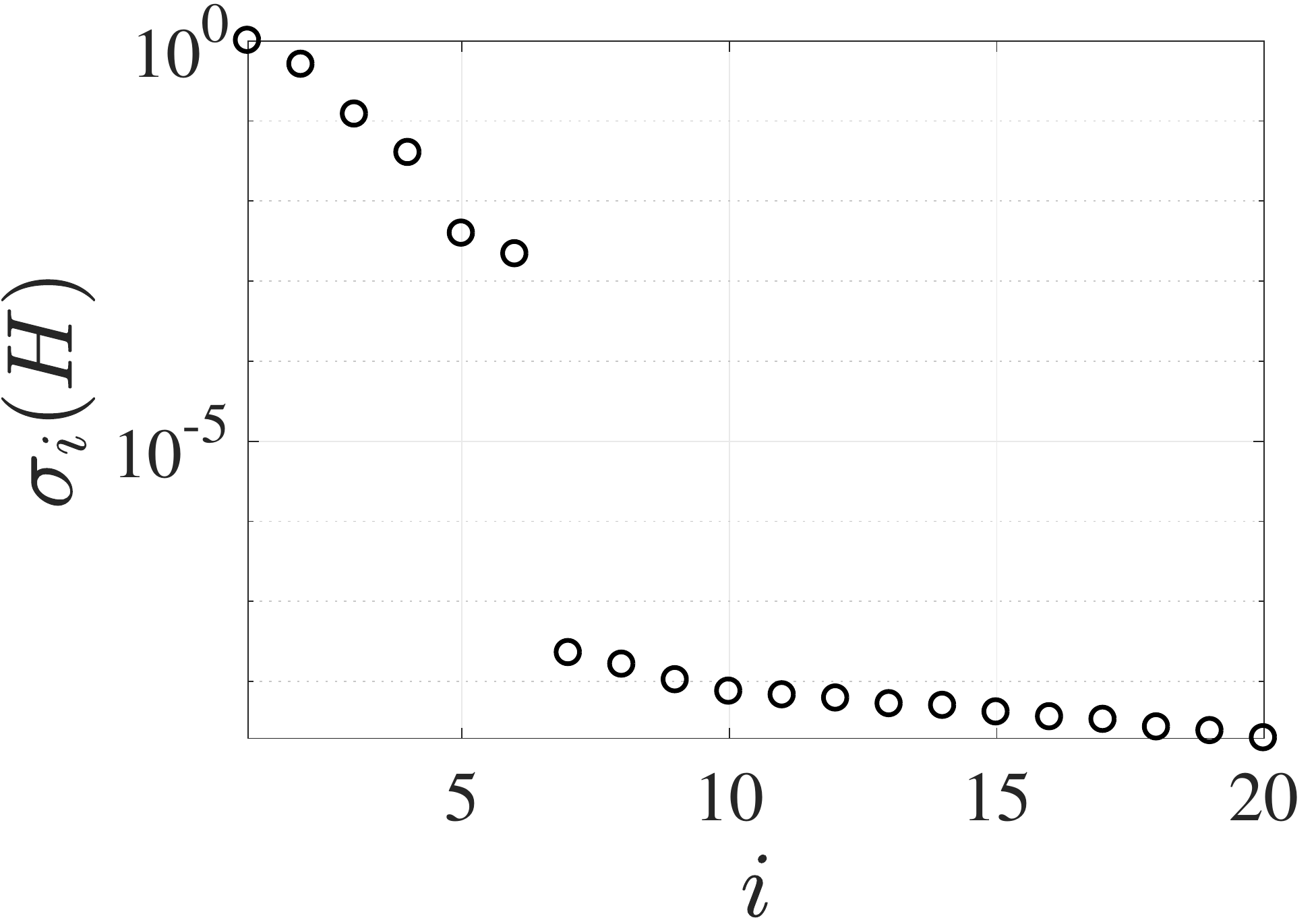}
}\hspace{-0.5em}
{
\includegraphics[width=0.23\textwidth]{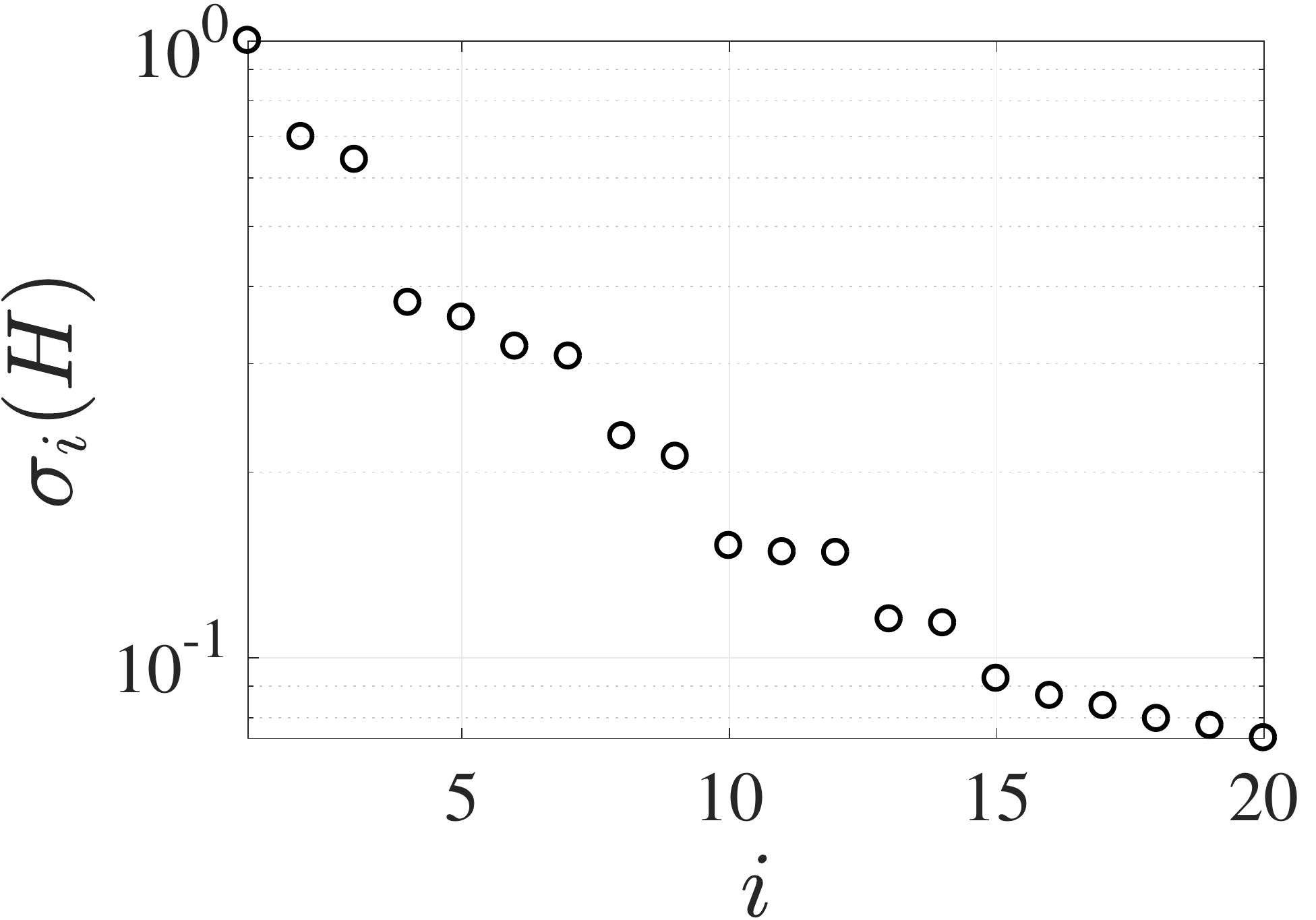}
}
{
\includegraphics[width=0.23\textwidth]{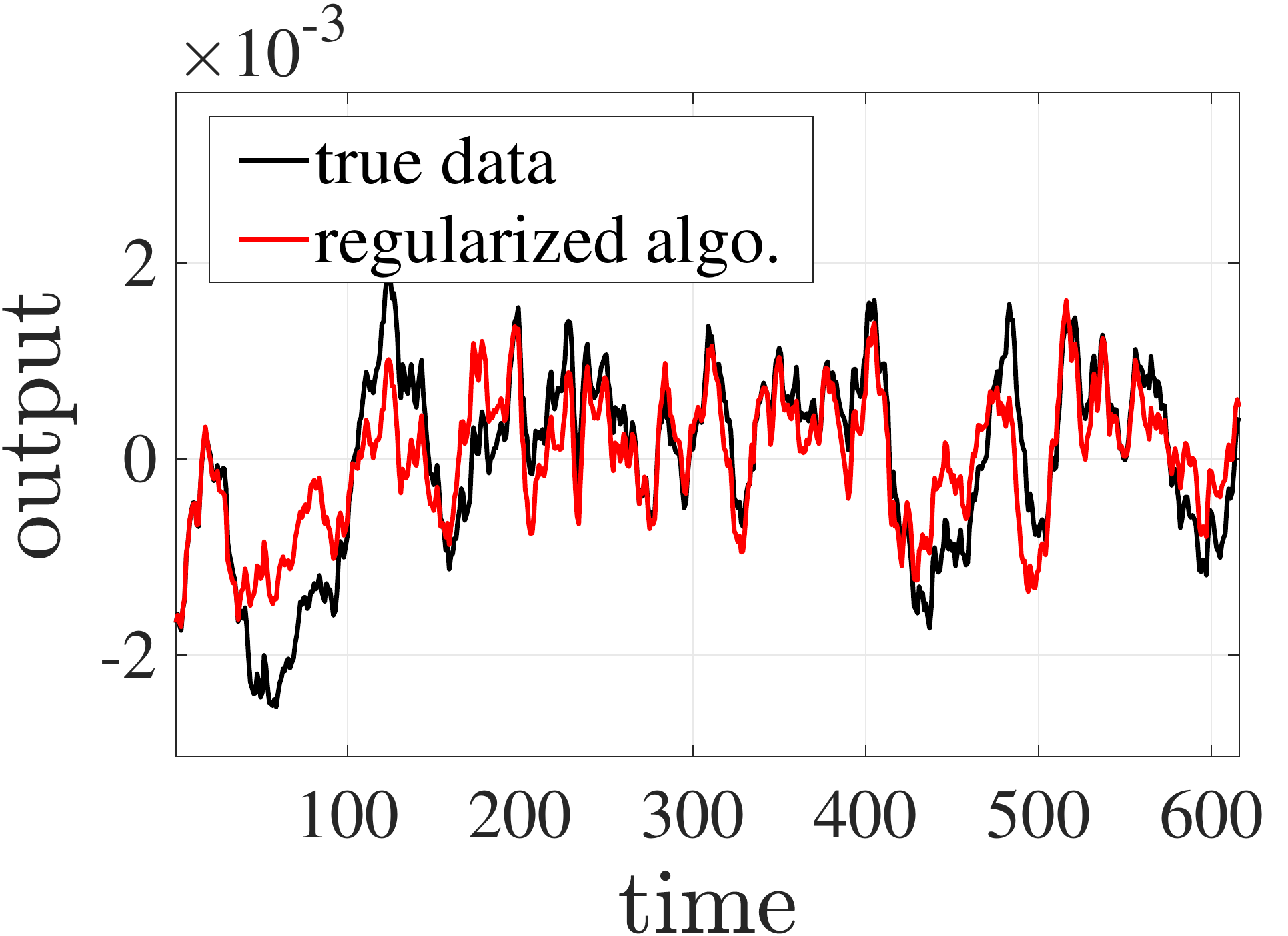}
}\hspace{-0.5em}
{
\includegraphics[width=0.23\textwidth]{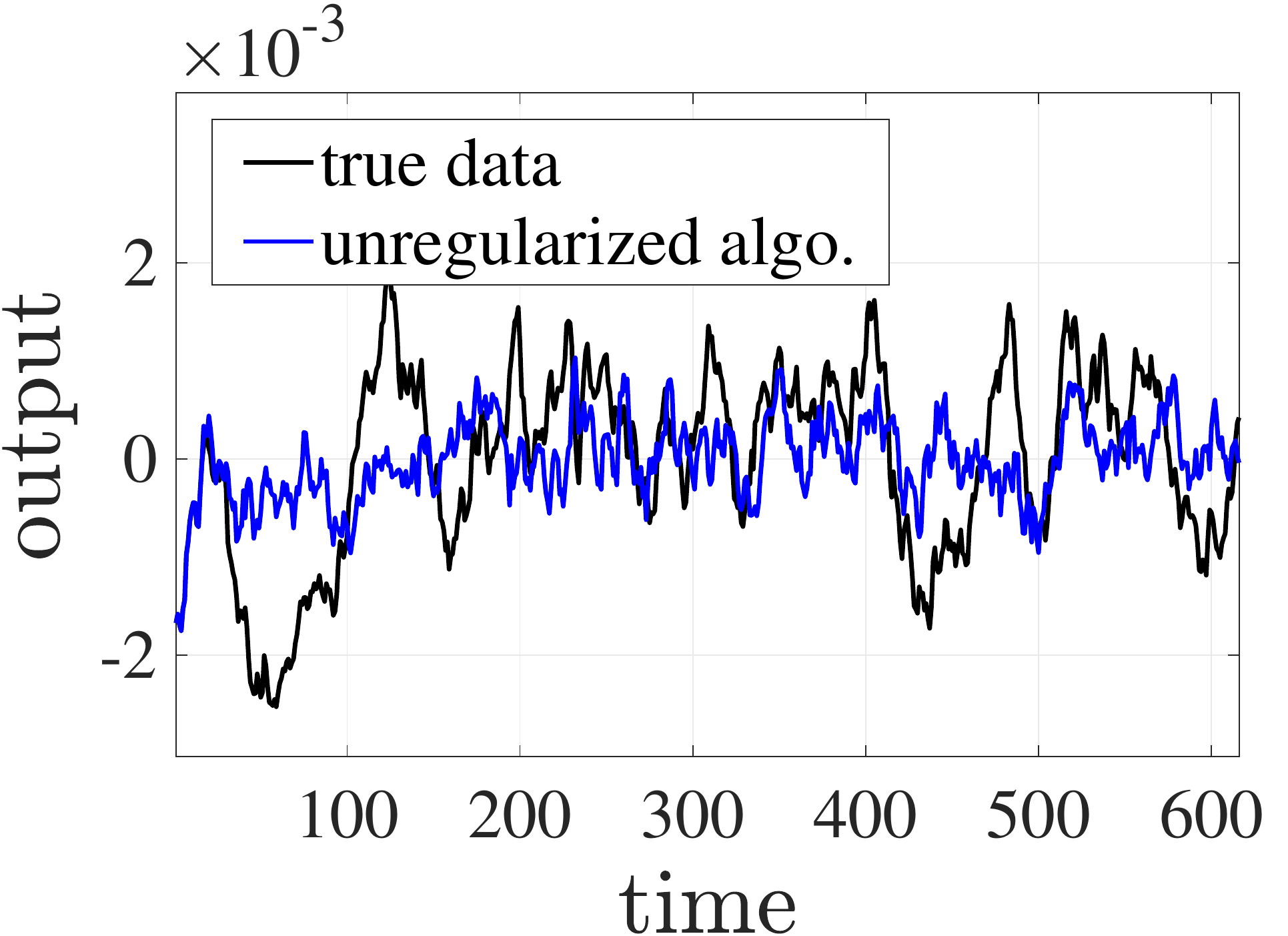}
}
}
{\caption{The first two: Stabilized inverted pendulum data, singular values of the Hankel regularized algorithm and least squares. The last two: Recovery by Hankel regularized algorithm and least squares when Hankel matrix is $40\times40$. Training data size is $16$ and validation data size is $600$.} \label{fig:bigr_inv_pend_eig}}
\end{figure}

\textbf{Large data regime: Both Hankel and least-squares algorithms work well (Fig. \ref{fig:error_bigr_CD}).} The first two figures in Fig. \ref{fig:error_bigr_CD} show the training and validation errors of Hankel-regularized and unregularized methods with hyperparameters $\lambda$ and $n$ respectively. We then choose the best system by tuning the hyperparameters to achieve the smallest validation error. The third figure in Fig.~\ref{fig:error_bigr_CD} plots the training and validation output sequence of the dataset for these algorithms. We see that with sufficient sample size, the system is recovered well. However, the validation error is more flat as a function of $1/\lambda$ (first figure) whereas it is sensitive to the choice of $n$ (second figure), thus $\lambda$ is easier to tune compared to $n$. 

\textbf{Small data regime: Hankel-regularization succeeds while least-squares may fail due to overfitting (Fig. \ref{fig:bigr_CD_eig}).}
The first two figures in Fig.~\ref{fig:bigr_CD_eig} show that the Hankel spectrums of the two algorithms have a notable difference: The system recovered by Hankel-regularization is low-order and has larger singular value gap. The last two figures in Fig.~\ref{fig:bigr_CD_eig} show the advantage of regularization with much better validation performance. As expected from our theory, the difference is most visible in small sample size (this experiment uses 50 training samples). When the number of observations $T$ is small, Hankel-regularization still returns a solution close to the true system while least-squares cannot recover the system properly.

\textbf{Learning a linear approximation of a nonlinear system with few data (Fig.~\ref{fig:bigr_inv_pend_eig}).} Finally, we show that Hankel-regularization can identify a stable nonlinear system via its linearized approximation as well. We consider the inverted pendulum as the experimental environment. First we use a linearized controller to stabilize the system around the equilibrium, and apply single rollout input to the closed-loop system, which is i.i.d.~random input of dimension $1$. The dimension of the state is $4$, and we observe the output of dimension $1$, which is the displacement of the system. We then use the Hankel-regularization and least-squares to estimate the closed-loop system with a linear system model and predict the trajectory using the estimated impulse response. We use $T = 16$ observations for training, and set the dimension to $n = 45$. Fig.~\ref{fig:bigr_inv_pend_eig} shows the singular values and estimated trajectory of these two methods. Despite the nonlinearity of the ground-truth system, the regularized algorithm finds a linear model with order $6$ and the predicted output has small error, while the correct order is not visible in the singular value spectrum of the unregularized least-squares.

\section{Future directions}
This paper established new sample complexity and estimation error bounds for system identification. We showed that nuclear norm penalization works well with small sample size regardless of the misspecification of the problem (i.e.~fitting impulse response with a much larger length rather than the true order). For least-squares, we provided the first guarantee that is optimal in sample complexity and the Hankel spectral norm error. These results can be refined in several directions. In the proof of Theorem \ref{main_gauss}, we used a weighted Hankel operator. We expect that directly computing the Gaussian width of the original Hankel operator will also lead to improvements over least-squares. We also hope to extend the results to account for single trajectory analysis and process noise. In both cases, an accurate analysis of the regularized problem will likely lead to new algorithmic insights.

\bibliographystyle{IEEEtran}
\bibliography{Bibfiles}
\appendix
\section{Sample Complexity for MISO and MIMO Problems} 
This section establishes sample complexity bounds for MISO and MIMO systems. The technical argument builds on \cite{cai2016robust} and extends their results from from SISO case to MISO. We consider recovering a MISO system impulse response. The system is given in \eqref{eq:linear_system}), with output size $m=1$ and the system is order $R$. For multi-rollout case, we only observe the output at time $2n-1$, and let $u_{2n}=0$, we have
\begin{align}\label{eq:mimo_y}
    y_{2n-1} = \sum_{i=1}^{2n-2} CA^{2n-2-i}B u_i + Du_{2n-1}.
\end{align}
 Denote the impulse response by $h\in\mathbb{R}^{p(2n-1)}$, which is a block vector
\begin{align*}
    h = \begin{bmatrix}
    h^{(1)\top}&
    h^{(2)\top}&
    ...
    h^{(2n-1)\top}
    \end{bmatrix}^\top
\end{align*}
where each block $h^{(i)}\in \mathbb{R}^{p}$. $\beta\in\mathbb{R}^{p(2n-1)}$ is a weighted version of $h$, with weights $\beta^{(i)} = K_i h^{(i)}$ and $\beta = \begin{bmatrix}
    \beta^{(1)\top}&
    \beta^{(2)\top}&
    ...&
    \beta^{(2n-1)\top}
    \end{bmatrix}^\top$.
Define the reweighted Hankel map for the same $h$ by
\begin{align*}
    \mathcal{G}(\beta) = \begin{bmatrix}
    \beta^{(1)}/K_1 & \beta^{(2)}/K_2 & \beta^{(3)}/K_3 & ...\\
    \beta^{(2)}/K_2 & \beta^{(3)}/K_3 & \beta^{(4)}/K_2 & ...\\
    ...
    \end{bmatrix}^\top\in\mathbb{R}^{n\times pn}
\end{align*}
and $\mathcal{G}^*$ is the adjoint of $\mathcal{G}$. We define each rollout input $u_1,...,u_{2n-1}$ as independent Gaussian vectors with 
\begin{align}
    u_i \sim \mathcal{N}(0,K_i^2 {\bf I})\label{eq:weight_input}
\end{align}
Now let $\Ub\in \mathbb{R}^{T\times p(2n-1)}$, each entry is iid standard Gaussian. Let $y\in \mathbb{R}^{T}$ be the concatenation of outputs
\begin{align*}
    y = \begin{bmatrix}
    y_1^\top &
    y_2^\top &
    ... &
    y_T^\top
    \end{bmatrix}^\top
\end{align*}
where $y_i\in\R^m$ is defined in \eqref{eq:mimo_y}. We consider the question
\begin{equation}\label{miso_problem}
    \begin{split}
        \min_{\beta'}&\quad \|\mathcal{G}(\beta')\|_*\\
        \mbox{s.t.,}&\quad \|\Ub \beta' - y\|_2 \le \delta
    \end{split}
\end{equation}
where the norm of overall (state and output) noise is bounded by $\delta$. We will present the following theorem, which generalizes the result of \cite{cai2016robust} from SISO case to MISO case.
\begin{theorem}\label{thm:miso_mimo}
Let $\beta$ be the true impulse response. If $T = \Omega((\sqrt{pR}\log n + \epsilon)^2)$ is the number of output observations, $C$ is some constant, the solution $\hat\beta$ to \eqref{miso_problem} satisfies $\|\beta - \hat\beta\|_2\le 2\delta/\epsilon$ with probability 
\begin{align*}
    1- \exp\left(-\frac{1}{2}(\sqrt{T-1} - C(\sqrt{pR}\log n + \epsilon) - \epsilon)^2 \right).
\end{align*}
When the system output is $y = \Ub \beta +  z$ and $z$ is i.i.d. Gaussian noise with variance $\sigma_z^2$, we have that $\|\beta - \hat\beta\|_2\lesssim (\sqrt{pR + \epsilon})  \sigma_z\log n$ with probability (\cite[Thm 1]{oymak2013simple}) 
\begin{align*}
    1- 6\exp\left(-\frac{1}{2}(\sqrt{T-1} - C(\sqrt{pR}\log n + \epsilon) - \epsilon)^2 \right).
\end{align*}
\end{theorem}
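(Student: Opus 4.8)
The plan is to use the standard descent-cone analysis for constrained convex recovery together with Gordon's comparison inequality, and then to establish the only genuinely new ingredient: a Gaussian-width bound for the block-Hankel nuclear-norm descent cone that extends the SISO estimate of \cite{cai2016robust} to $p$ input channels. First I would note that, since the total noise in \eqref{miso_problem} has norm at most $\delta$, the true $\beta$ is feasible, so optimality of $\hat\beta$ gives $\|\cG(\hat\beta)\|_* \le \|\cG(\beta)\|_*$; hence $v := \hat\beta - \beta$ lies in the descent (tangent) cone $\Cc := \overline{\{ t v' : t \ge 0,\ \|\cG(\beta + v')\|_* \le \|\cG(\beta)\|_* \}}$, and the triangle inequality gives $\|\Ub v\|_2 \le \|\Ub\hat\beta - y\|_2 + \|y - \Ub\beta\|_2 \le 2\delta$. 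Therefore $\|\beta - \hat\beta\|_2 \le 2\delta/\sigma_{\min}(\Ub;\Cc)$, where $\sigma_{\min}(\Ub;\Cc) := \inf_{u\in\Cc,\ \|u\|_2 = 1}\|\Ub u\|_2$, so it suffices to lower bound this restricted minimum singular value.

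Next, because $\Ub$ has i.i.d.\ $\mathcal{N}(0,1)$ entries, I would invoke Gordon's escape-through-a-mesh inequality, which gives $\E[\sigma_{\min}(\Ub;\Cc)] \ge \sqrt{T-1} - \omega$ with $\omega := w(\Cc \cap \mathbb{S}^{p(2n-1)-1})$ the Gaussian width; combined with Gaussian concentration of the $1$-Lipschitz map $\Ub \mapsto \sigma_{\min}(\Ub;\Cc)$, this yields $\Pro[\sigma_{\min}(\Ub;\Cc) \le \sqrt{T-1} - \omega - s] \le \exp(-s^2/2)$ for every $s \ge 0$. Taking $s = \sqrt{T-1} - \omega - \epsilon$ (nonnegative once $T = \Omega((\omega + \epsilon)^2)$) gives $\sigma_{\min}(\Ub;\Cc) \ge \epsilon$, hence $\|\beta - \hat\beta\|_2 \le 2\delta/\epsilon$, except with probability $\exp(-\tfrac12(\sqrt{T-1} - \omega - \epsilon)^2)$ --- matching the claimed failure probability provided $\omega \lesssim \sqrt{pR}\log n$.

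The hard part is the remaining step: proving $\omega \lesssim \sqrt{pR}\log n$. I would use the standard bound $\omega^2 \le \E\,\dist{g,\ \cone{\partial(\|\cG(\cdot)\|_*)(\beta)}}^2$ for $g \sim \mathcal{N}(0,I)$ and exhibit a dual certificate close to $g$. Every matrix subgradient $UV^\top + W$ of $\|\cdot\|_*$ at the rank-$R$ block-Hankel matrix $\cG(\beta)$ (with $UV^\top$ from its SVD and $\|W\|\le 1$, $U^\top W = WV = 0$) pulls back via $\cG^*$ to a subgradient of $\beta' \mapsto \|\cG(\beta')\|_*$ at $\beta$; since the system is order $R$, $\cG(\beta)$ admits a block-Vandermonde factorization through the $R$ system poles, and I would construct $W$ by mimicking the certificate of \cite{cai2016robust} with each scalar Markov coefficient replaced by its $\R^p$-valued counterpart --- essentially tensoring the SISO construction with $I_p$. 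Pushing their estimates through this block structure should give expected distance $O(\sqrt{pR}\log n)$, the $\sqrt{p}$ coming from the $p$ channels and the $\log n$ inherited from the smoothing weights $K_j$ in the definition of $\cG$. The main obstacle is verifying that this construction indeed goes through for block-Hankel matrices with no hidden $n$-dependence and exactly the advertised rate; the surrounding bookkeeping is already present in \cite{cai2016robust}.

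Finally, for the Gaussian-noise conclusion I would specialize $\delta$: with $y = \Ub\beta + z$ and $z$ having i.i.d.\ $\mathcal{N}(0,\sigma_z^2)$ entries, $\|z\|_2 \lesssim \sigma_z\sqrt{T}$ except on an exponentially small event (a $\chi^2$ tail bound), so applying the first two steps with this $\delta$ and $T = \Omega((\sqrt{pR}\log n + \epsilon)^2)$ gives $\|\beta - \hat\beta\|_2 \lesssim (\sqrt{pR} + \epsilon)\sigma_z\log n$; alternatively one can invoke \cite[Thm.~1]{oymak2013simple} directly, which bounds the constrained-estimator error in terms of $\sigma_z$ and $\omega$ alone. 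The extra randomness in $z$ is what changes the constant in front of the exponential in the failure probability from $1$ to $6$.
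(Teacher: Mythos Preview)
Your outer framework---descent-cone membership of $\hat\beta-\beta$, Gordon's escape inequality for the restricted minimum singular value, the polar-cone distance bound $\omega \le \E\,\dist{g,\cone{\partial\|\cG(\cdot)\|_*(\beta)}}$, and the Gaussian-noise wrap-up via \cite{oymak2013simple}---is exactly what the paper does.

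The divergence is precisely at the step you flag as the main obstacle. You propose to exploit a block-Vandermonde factorization of $\cG(\beta)$ through the $R$ system poles and to build the certificate $W$ by ``tensoring the SISO construction with $I_p$.'' The paper uses no pole structure whatsoever: it simply takes the generic choice $\lambda=\|\cP_W\cG(g)\|$, $W=\cP_W\cG(g)/\lambda$ (with $\cP_W$ the projection off the row and column spaces of $\cG(\beta)$), which from rank alone gives $\omega\le 3\sqrt{R}\,\E\|\cG(g)\|$. The MISO extension is then a one-line spectral-norm estimate on the \emph{random} block Hankel: after permuting columns, $\cG(g)=[G_1\ \cdots\ G_p]$ with each $G_i$ an independent SISO weighted Hankel of standard Gaussians, so $\|\cG(g)\|\le\sqrt{p}\max_i\|G_i\|$ by Cauchy--Schwarz, and the SISO bound $\E\|G_i\|=O(\log n)$ from \cite{cai2016robust} finishes. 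Your route is both harder and not clearly well-posed---the $n\times pn$ block Hankel is not a Kronecker product, so ``tensoring with $I_p$'' has no direct meaning, and Vandermonde structure needs distinct poles, which is nowhere assumed. The paper's argument sidesteps all of this by pushing the $p$-dependence entirely into the random-matrix norm $\E\|\cG(g)\|$ rather than into the certificate construction.
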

This theorem says that when the input dimension is $p$, the sample complexity is $O(\sqrt{pR}\log n)$. The proof strongly depends on the following lemma \cite{cai2016robust, gordon1988milman}:
\begin{lemma}\label{lem:Gaussian_width_prob}
Define the Gaussian width
\begin{align*}
    w(S):= E_g (\sup_{\gamma\in S} \gamma^\top g)
\end{align*}
where $g$ is standard Gaussian vector of size $p$. The Gaussian width of the normal cone of \eqref{eq:lasso_prob} and \eqref{miso_problem} are different up to a constant \cite{banerjee2014estimation}. Let $\Phi=\mathcal{I}(\beta)\cap \mathbb{S}$ where $\mathbb{S}$ is unit sphere. We have 
\begin{align*}
    P( \min_{z\in \Phi}\|\Ub z\|_2 < \epsilon )
    \le  \exp\left(-\frac{1}{2}(\sqrt{T-1} - w(\Phi) - \epsilon)^2 \right) .
\end{align*}
\end{lemma}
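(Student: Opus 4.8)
This bound is Gordon's ``escape through a mesh'' inequality applied to the descent cone $\mathcal{I}(\beta)$ of the reweighted Hankel nuclear norm at the true signal, and the plan is to derive it from two classical ingredients: Gordon's Gaussian min--max comparison inequality, which controls the \emph{expectation} of the restricted minimum gain, and Gaussian concentration of measure, which promotes this to a tail bound. Set $F(\Ub):=\min_{z\in\Phi}\|\Ub z\|_2$; since $\Phi=\mathcal{I}(\beta)\cap\mathbb{S}$ lies on the unit sphere, every competing $z$ has $\|z\|_2=1$. First I would write $F(\Ub)=\min_{z\in\Phi}\max_{w\in\mathbb{S}^{T-1}}w^\top\Ub z$ and compare the Gaussian process $X_{z,w}:=w^\top\Ub z$ with the decoupled process $Y_{z,w}:=\langle g,w\rangle+\langle h,z\rangle$, where $g\sim\mathcal{N}(0,I_T)$ and $h\sim\mathcal{N}(0,I_d)$ are independent and $d$ is the ambient dimension of $\beta$. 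For fixed $z$ the $w$-increments of $X$ and $Y$ both have variance $\|w-w'\|_2^2$, while for $z\neq z'$ one has $\E(X_{z,w}-X_{z',w'})^2=2-2(w^\top w')(z^\top z')\le 4-2w^\top w'-2z^\top z'=\E(Y_{z,w}-Y_{z',w'})^2$, the gap being $2(1-w^\top w')(1-z^\top z')\ge 0$ by Cauchy--Schwarz. These are exactly the increment hypotheses of Gordon's min--max lemma, so $\E F(\Ub)\ge\E\min_{z\in\Phi}\max_{w}Y_{z,w}=\E\|g\|_2+\E\min_{z\in\Phi}\langle h,z\rangle=\lambda_T-w(\Phi)$, where $\lambda_T:=\E\|g\|_2$ and $\E\min_{z\in\Phi}\langle h,z\rangle=-w(\Phi)$ by symmetry of $h$. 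Finally $\lambda_T^2=T-\mathrm{Var}(\|g\|_2)\ge T-1$ by the Gaussian Poincar\'e inequality, so $\E F(\Ub)\ge\sqrt{T-1}-w(\Phi)$.

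Second, I would note that $\Ub\mapsto F(\Ub)$ is $1$-Lipschitz for the Frobenius norm: for any $z\in\Phi$, $\|\Ub z\|_2\le\|\Ub'z\|_2+\|(\Ub-\Ub')z\|_2\le\|\Ub'z\|_2+\|\Ub-\Ub'\|_F$ since $\|z\|_2=1$, and minimizing over $z$ preserves the estimate. Gaussian Lipschitz concentration then gives $\P(F(\Ub)<\E F(\Ub)-t)\le e^{-t^2/2}$ for every $t\ge 0$. In the regime $\sqrt{T-1}\ge w(\Phi)+\epsilon$ relevant to Theorem~\ref{thm:miso_mimo} (outside which the stated bound carries no content), taking $t=\E F(\Ub)-\epsilon\ge\sqrt{T-1}-w(\Phi)-\epsilon\ge 0$ and using monotonicity of squaring on $[0,\infty)$ yields $\P\big(\min_{z\in\Phi}\|\Ub z\|_2<\epsilon\big)\le\exp\!\big(-\tfrac{1}{2}(\sqrt{T-1}-w(\Phi)-\epsilon)^2\big)$, which is the claim.

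Finally, the first sentence of the lemma -- that the Gaussian widths of the cones attached to the constrained program \eqref{miso_problem} and the regularized program \eqref{eq:lasso_prob} agree up to an absolute constant -- is a standard fact about convex $M$-estimators \cite{banerjee2014estimation}: recovery (and its robustness to noise) in both formulations is governed by the width of the descent cone $\mathcal{I}(\beta)$ of the nuclear-norm ball pulled back through $\mathcal{G}$, so it is enough to bound $w(\Phi)$, which the later sections do. The only genuinely delicate point I anticipate is orienting Gordon's increment conditions correctly -- i.e.\ choosing the decoupled process $Y$ so that the comparison produces a \emph{lower} rather than an upper bound on the bilinear min--max; the Lipschitz estimate and the Gaussian tail are routine bookkeeping. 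Since Gordon's inequality is itself classical \cite{gordon1988milman}, one may alternatively invoke the escape-through-a-mesh theorem verbatim, as in \cite{cai2016robust}, and carry out only the concentration step of the second paragraph.
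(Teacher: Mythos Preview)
Your proposal is correct and is the standard derivation of Gordon's escape-through-a-mesh inequality: Gordon's min--max comparison to lower bound the expected restricted minimum singular value, followed by Gaussian Lipschitz concentration to obtain the tail. The increment computations, the bound $\lambda_T\ge\sqrt{T-1}$ via Poincar\'e, and the $1$-Lipschitz estimate are all accurate.

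The paper, however, does not prove this lemma at all---it simply cites it as a known result from \cite{cai2016robust,gordon1988milman} and moves on to bounding $w(\Phi)$. You explicitly anticipated this option in your final paragraph (``one may alternatively invoke the escape-through-a-mesh theorem verbatim, as in \cite{cai2016robust}''), which is exactly what the authors do. So there is no divergence in approach to compare: you supplied a proof where the paper supplied a citation, and the proof you supplied is the canonical one.
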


We will present the proof in Appendix \ref{s:miso_mimo}.

{\bf MIMO.} For MIMO case, we say output size is $m$. We take each channel of output as a system of at most order $R$, and solve $m$ problems
\begin{align*}
    \mtx{P}_i: \min_{\beta_i}&\quad \|\mathcal{G}(\beta_i)\|_*\\
    \mbox{s.t.,}&\quad \|\Ub x_i - y_i\|_2 \le \delta,\\
    y_i\in\mathbb{R}^T &\mbox{ is the $i$th output.}
\end{align*}
and for each problem we have failure probability equal to \eqref{prob}, which means the total failure probability is 
\begin{align*}
    m\exp\left(-\frac{1}{2}(\sqrt{T-1} - w(\Phi) - \epsilon)^2 \right)
\end{align*}
so we need $T = O((\sqrt{pR}\log n+\log(m) + \epsilon)^2 )$. Let the solution to those optimization problems be $[x_1^*,...,x_m^*]$, and the true impulse response be $[\hat x_1,...,\hat x_m]$, then $\|[x_1^*,...,x_m^*]-[\hat x_1,...,\hat x_m]\|_F\le \sqrt{m}\delta/\epsilon$ with probability 
\begin{align*}
    1- \exp\left(-\frac{1}{2}(\sqrt{T-1} - w(\Phi) - \epsilon)^2 \right)
\end{align*}

We propose another way of MIMO system identification. For each rollout of input data, the output is $m$ dimensional, but we take $1$ channel of output from the observation and throw away other $m-1$ output. And we uniformly pick among channels and get $T$ observations for each channel, and in total $mT$ observations/input rollouts. In this case, when the sample complexity is $m\sqrt{pR}\log n$ ($m$ times of before), we can recover the impulse response with Frobenius norm $\sqrt{m}\delta/\epsilon$ with probability 
\begin{align*}
    1- \exp\left(-\frac{1}{2}(\sqrt{T/m-1} - w(\Phi) - \epsilon)^2 \right)
\end{align*}
\subsection{Gaussian Width of Regularization Problem with MISO and MIMO System}\label{s:miso_mimo}
The following sequence of theorems/lemmas are similar to \cite{cai2016robust}, and we present them here for the integrity of the proof.
\begin{theorem}\label{thm:miso_mimo_2}
Let $\beta$ be the true impulse response. If $T = \Omega((\sqrt{pR}\log(n) + \epsilon)^2)$ is the number of output observations, $C$ is some constant, the solution $\hat\beta$ to \eqref{miso_problem} satisfies $\|\beta - \hat\beta\|_2\le 2\delta/\epsilon$ with probability 
\begin{align*}
    1- \exp\left(-\frac{1}{2}(\sqrt{T-1} - C(\sqrt{pR}\log(n) + \epsilon) - \epsilon)^2 \right).
\end{align*}
\end{theorem}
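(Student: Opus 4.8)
The plan is to reduce the constrained program \eqref{miso_problem} to a deterministic lower bound on the measurement operator restricted to a descent cone, and then control the Gaussian width of that cone. By the standard convex-geometry analysis of regularized linear inverse problems (following \cite{cai2016robust}), the solution $\hat\beta$ of \eqref{miso_problem} obeys $\|\hat\beta-\beta\|_2\le 2\delta/\epsilon$ \emph{whenever} $\min_{z\in\Phi}\|\Ub z\|_2\ge\epsilon$, where $\Phi=\mathcal{I}(\beta)\cap\mathbb{S}$ is the intersection with the unit sphere of the descent cone of $\beta'\mapsto\|\mathcal{G}(\beta')\|_*$ at $\beta$. Lemma~\ref{lem:Gaussian_width_prob} then bounds the probability that this fails by $\exp(-\tfrac12(\sqrt{T-1}-w(\Phi)-\epsilon)^2)$ as soon as $\sqrt{T-1}>w(\Phi)+\epsilon$. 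Hence the whole statement follows once we establish the Gaussian-width bound $w(\Phi)\lesssim\sqrt{pR}\log n$, which immediately yields the sample complexity $T=\Omega((\sqrt{pR}\log n+\epsilon)^2)$ and the claimed failure probability (with the constant $C$ absorbing the width bound).

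To bound $w(\Phi)$ I would use the polar-cone estimate $w(\Phi)^2\le\E_g\,\mathrm{dist}\!\big(g,\ \mathrm{cone}(\partial\|\mathcal{G}(\cdot)\|_*(\beta))\big)^2$ for a standard Gaussian $g\in\R^{p(2n-1)}$. The crucial structural fact — and the reason the weights $K_j$ in \eqref{eq K-shape} are chosen as they are — is that $\mathcal{G}$ is a Frobenius isometry, so $\mathcal{G}^*\mathcal{G}=I$ and $\|\mathcal{G}^*(M)\|_2\le\|M\|_F$; therefore the subdifferential in $\beta$-coordinates is $\mathcal{G}^*$ of the nuclear-norm subdifferential at $\cH(h)=\mathcal{G}(\beta)$, and distances can only contract, giving $w(\Phi)^2\le\E_g\,\mathrm{dist}\!\big(\mathcal{G}g,\ \mathrm{cone}(\partial\|\cdot\|_*(\cH(h)))\big)^2$. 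The point is that $\mathcal{G}g$ is a \emph{random reweighted Hankel matrix} carrying only $p(2n-1)$ Gaussian degrees of freedom rather than a dense $n\times pn$ matrix, which is exactly what lets us beat the naive $\order{Rpn}$ statistical-dimension bound. Writing the rank-$R$ SVD $\cH(h)=U\Sigma V^*$ with tangent space $T$, and choosing the subgradient element $tUV^*+\mathcal{P}_{T^\perp}(\mathcal{G}g)$ with $t=\|\mathcal{P}_{T^\perp}(\mathcal{G}g)\|$, one gets $\mathrm{dist}(\mathcal{G}g,\mathrm{cone}(\partial))\le\|\mathcal{P}_T(\mathcal{G}g)\|_F+\sqrt{R}\,\|\mathcal{P}_{T^\perp}(\mathcal{G}g)\|$.

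It then remains to take expectations of the two terms. For the tangent part, $\mathcal{P}_T(M)=P_UM+MP_V-P_UMP_V$ gives $\|\mathcal{P}_T(\mathcal{G}g)\|_F\le\|U^*\mathcal{G}g\|_F+\|\mathcal{G}g\,V\|_F$; each entry of $U^*\mathcal{G}g$ and of $\mathcal{G}g\,V$ is a Gaussian linear functional of $g$ whose variance, after the $1/K_i$ reweighting, is a harmonic-type sum $\sum_j K_{i+j-1}^{-2}\lesssim\log n$, so summing over the $R$ singular directions, the $n$ Hankel block-columns and the $p$ input channels yields $\E\|\mathcal{P}_T(\mathcal{G}g)\|_F^2\lesssim pR\log n$. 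For the orthogonal part, $\|\mathcal{P}_{T^\perp}(\mathcal{G}g)\|\le\|\mathcal{G}g\|$, and the operator norm of the random reweighted $n\times pn$ Hankel matrix is $\lesssim\sqrt{p}\,\log n$ by an $\varepsilon$-net/chaining argument with Gaussian concentration (the reweighting keeping it poly-logarithmic), so $R\,\E\|\mathcal{P}_{T^\perp}(\mathcal{G}g)\|^2\lesssim pR\log^2 n$. Combining, $w(\Phi)^2\lesssim pR\log^2 n$, i.e.\ $w(\Phi)\lesssim\sqrt{pR}\log n$, and plugging into Lemma~\ref{lem:Gaussian_width_prob} finishes the proof. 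I expect the main obstacle to be the tangent-part bound: extending Cai et al.'s control of the column and row spaces of $\cH(h)$ from the SISO Vandermonde picture to the MISO block setting, and in particular showing that the $K_j$ weighting interacts with these spaces tightly enough to keep the bound at $pR\log n$ rather than something larger; the spectral-norm bound on the reweighted random Hankel matrix is the secondary technical ingredient.
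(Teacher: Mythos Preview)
Your proposal is correct and follows the same overall architecture as the paper: reduce to a restricted lower bound on $\Ub$ over the descent cone, invoke Gordon's escape-through-a-mesh inequality (Lemma~\ref{lem:Gaussian_width_prob}), bound the Gaussian width via the polar cone / subdifferential, push through the isometry $\mathcal{G}^*\mathcal{G}=I$ to work in matrix space, and pick the subgradient $tUV^*+\mathcal{P}_{T^\perp}(\mathcal{G}g)$ with $t=\|\mathcal{P}_{T^\perp}(\mathcal{G}g)\|$.

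The one place you diverge is the tangent-part estimate, and you make it harder than necessary. The paper does \emph{not} compute $\E\|\mathcal{P}_T(\mathcal{G}g)\|_F^2$ via entrywise variances or invoke any structure of the singular vectors; it simply uses the rank bound
\[
\|\mathcal{P}_V(\mathcal{G}g)\|_F \;\le\; \|V_1V_1^\top\mathcal{G}g\|_F+\|\mathcal{G}g\,V_2V_2^\top\|_F \;\le\; 2\sqrt{R}\,\|\mathcal{G}g\|,
\]
so \emph{both} terms in the split are controlled by $\sqrt{R}\,\|\mathcal{G}g\|$ and the whole width bound collapses to $w(\Phi)\le 3\sqrt{R}\,\E\|\mathcal{G}g\|$. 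Your anticipated ``main obstacle'' --- extending Cai et al.'s Vandermonde-based control of the row/column spaces to the MISO block setting --- therefore never arises; no property of $U,V$ beyond $\rank=R$ is used. (Your variance computation would also go through without the Vandermonde picture, since summing $\sum_j K_{i+j-1}^{-2}$ over columns gives the harmonic bound uniformly in $U,V$; but it is extra work.)

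For the remaining spectral-norm ingredient $\E\|\mathcal{G}g\|\lesssim\sqrt{p}\log n$, the paper also takes a shorter path than your $\varepsilon$-net/chaining: it permutes columns to write $\mathcal{G}g$ as a horizontal concatenation of $p$ independent SISO reweighted Hankel blocks $G_1,\dots,G_p$, each with $\E\|G_i\|\lesssim\log n$, and then bounds the norm of the concatenation by $\sqrt{p}\max_i\|G_i\|$ via Cauchy--Schwarz on the block decomposition of the right test vector.
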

\begin{proof}
Let $\mathcal{I}(\beta)$ be the descent cone of $\|\mathcal{G}(\beta)\|_*$ at $\beta$, we have the following lemma:
\begin{lemma}\label{err}
Assume 
\begin{align*}
   \min_{z\in \mathcal{I}(\beta)}\frac{\|\Ub z\|_2}{\|z\|_2} \ge \epsilon,
\end{align*}
then $\|\beta - \hat\beta\|_2\le 2\delta/\epsilon$.
\end{lemma}
This is proven in \cite[Lemma 1]{cai2016robust}. To prove Theorem \ref{thm:miso_mimo_2}, we only need lower bound the left hand side with Lemma \ref{err}. The following lemma gives the probability that the left hand side is lower bounded. 

\begin{lemma}\label{lem:Gaussian_width_prob_2}
Define the Gaussian width 
\begin{align}
    w(S):= E_g (\sup_{\gamma\in S} \gamma^Tg)\label{eq:gauss_wid_G}
\end{align}
where $g$ is standard Gaussian vector of size $p$. Let $\Phi=\mathcal{I}(\beta)\cap \mathbb{S}$ where $\mathbb{S}$ is unit sphere. We have 
\begin{align}\label{prob}
    P( \min_{z\in \Phi}\|\Ub z\|_2 < \epsilon )
    \le  \exp\left(-\frac{1}{2}(\sqrt{T-1} - w(\Phi) - \epsilon)^2 \right) .
\end{align}
\end{lemma}
Now we need to study $w(\Phi)$. 
\begin{lemma}\label{lem:dist}
(\cite[Eq.(17)]{cai2016robust}) Let $\mathcal{I}^*(\beta)$ be the dual cone of $\mathcal{I}(\beta)$, then 
\begin{align}
    w(\Phi) \le E(\min_{\gamma\in \mathcal{I}^*(\beta)} \|g - \gamma\|_2).
\end{align}\label{eq:gauss_wid_G_cvx}
\end{lemma}

\vspace{-1em} Note that $\mathcal{I}^*(\beta)$ is just the cone of subgradient of $\mathcal{G}(\beta)$, so it can be written as 
\begin{align*}
    \mathcal{I}^*(\beta) = \{ \mathcal{G}^*(V_1V_2^T + W) |V_1^TW = 0, WV_2=0, \|W\|\le 1 \}
\end{align*}
where $\mathcal{G}(\beta) = V_1\Sigma V_2^T$ is the SVD of $\mathcal{G}(\beta)$. So 
\begin{align*}
    \min_{\gamma\in \mathcal{I}^*(\hat x)} \|g - \gamma\|_2 = \min_{\lambda,W} \| \lambda \mathcal{G}^*(V_1V_2^T + W) - g \|_2.
\end{align*}
For right hand side, we have 
\begin{align*}
    &\quad \| \lambda \mathcal{G}^*(V_1V_2^T + W) - g \|_2\\
    &= \|\lambda \mathcal{G}\mathcal{G}^*(V_1V_2^T + W) - \mathcal{G}(g)\|_F\\
    &= \|\lambda (V_1V_2^T + W) - \mathcal{G}(g)\|_F + \|\lambda (I - \mathcal{G}\mathcal{G}^*)(V_1V_2^T + W)\|_F\\
    &\le \|\lambda (V_1V_2^T + W) - \mathcal{G}(g)\|_F.
\end{align*}
Let $\mathcal{P}_W$ be projection operator onto subspace spanned by $W$, i.e., 
\begin{align*}
   \{ W |V_1^TW = 0, WV_2=0 \}
\end{align*}
and $\mathcal{P}_V$ be projection onto its orthogonal complement. 

We have
\begin{align*}
    w(\Phi) & \le E(\min_{\lambda,W} \| \lambda \mathcal{G}^*(V_1V_2^T + W) - g \|_2).
\end{align*}
We will upper bound it by the specific choice $\lambda = \|\mathcal{P}_W(\mathcal{G}(g))\|$, $W = \mathcal{P}_W(\mathcal{G}(g))/\lambda$. 
\begin{align*}
    &\quad \|\lambda (V_1V_2^T + W) - \mathcal{G}(g)\|_F\\
    &=
    \|\mathcal{G}(g) - \mathcal{P}_W(\mathcal{G}(g)) - \|\mathcal{P}_W(\mathcal{G}(g))\|V_1V_2^T\|_F\\
    &\le \|\mathcal{P}_V(\mathcal{G}(g)) - \|\mathcal{P}_W(\mathcal{G}(g))\|V_1V_2^T\|_F\\
    &\le \|\mathcal{P}_V(\mathcal{G}(g))\|_F + \|\mathcal{P}_W(\mathcal{G}(g))\|\|V_1V_2^T\|_F\\
    &= \|\mathcal{P}_V(\mathcal{G}(g))\|_F + \sqrt{R} \|\mathcal{P}_W(\mathcal{G}(g))\|\\
    &= \|\mathcal{P}_V(\mathcal{G}(g))\|_F + \sqrt{R} \|\mathcal{G}(g)\|.
\end{align*}
Bound the first term by (note $V_1$ and $V_2$ span $R$ dimensional space, so $V_1\in\mathbb{R}^{n\times R}$ and $V_2\in\mathbb{R}^{pn\times R}$)
\begin{align*}
    \|\mathcal{P}_V(\mathcal{G}(g))\|_F &= 
    \|V_1V_1^T\mathcal{G}(g) + (I - V_1V_1^T)\mathcal{G}(g)V_2V_2^T\|_F\\
    &\le \|V_1V_1^T\mathcal{G}(g)\|_F + \|\mathcal{G}(g)V_2V_2^T\|_F\\
    &\le 2\sqrt{R} \|\mathcal{G}(g)\|.
\end{align*}
So we get
\begin{align*}
    w(\Phi) \le 3\sqrt{R}\|\mathcal{G}(g)\|.
\end{align*}
We know that, if $p=1$, then $E\|\mathcal{G}(g)\| = O(\log(n))$. For general $p$, let 
\begin{align*}
    g^{(i)} = [g^{(i)}_1,...,g^{(i)}_p]^T,
\end{align*}
we rearrange the matrix as 
\begin{align*}
    \bar{\mathcal{G}}(g) = [G_1,...,G_p]
\end{align*}
where 
\begin{align*}
    G_i =  \begin{bmatrix} g^{(1)}_i& g^{(2)}_i/\sqrt{2} & ... \\
    g^{(2)}_i/\sqrt{2} & g^{(3)}_i/\sqrt{3} & ...\\
    ...\end{bmatrix}
\end{align*}
and the expectation of operator norm of each block is $\log(n)$. Then (note $v$ below also has a block structure $[v^{(1)};...;v^{(n)}]$)
\begin{align*}
    \|\bar{\mathcal{G}}(g)\|
    &= \max_{u,v} \frac{u^T\bar{\mathcal{G}}(g)v}{\|u\|\|v\|}
    = \max_{u,v^1,...,v^p} \sum_{i=1}^{p} \frac{u^TG_iv^{(i)}}{\|u\|\|v\|}\\
    &\le \max_{v^1,...,v^p}  O(\log(n))\frac{\sum_{i=1}^{p}\|v^{(i)}\|}{\sqrt{\sum_{i=1}^{p}\|v^{(i)}\|^2}}\\
    &\le O(\sqrt{p}\log(n)).
\end{align*}

$\|\bar{\mathcal{G}}(g)\| = \|\mathcal{G}(g)\|$. So we have $\|\mathcal{G}(g)\| = \sqrt{p}\log(n)$. So $w(\Phi) = C\sqrt{pR}\log(n)$. Get back to \eqref{prob}, we want the probability be smaller than 1, and we get 
\begin{align*}
    \sqrt{T-1} - C\sqrt{pR}\log n - \epsilon > 0
\end{align*}
thus $T = O((\sqrt{pR}\log(n) + \epsilon)^2)$. 
\end{proof}
At the end, we give a different version of Theorem \ref{thm:miso_mimo_2}. Theorem \ref{thm:miso_mimo_2} in \cite{cai2016robust} works for the any noise with bounded norm. Here we consider the iid Gaussian noise, and use the result in \cite{oymak2013simple}, we have the following theorem.
\begin{theorem}\label{thm:weiyu_random}
Let the system output $y = \Ub \beta +  z$ where $\Ub$ entries are iid Gaussian $\mathcal{N}(0,1/T)$, $\beta$ is the true system parameter and $ z\sim\mathcal{N}(0,\sigma_z^2)$. Then \eqref{miso_problem} recovers $\hat \beta$ with error $\|\hat \beta - \beta\|_2 \le w(\Phi)\| z\|_2 / \sqrt{T}  \lesssim \sqrt{pR}\sigma_z\log n$ with high probability. 
\end{theorem}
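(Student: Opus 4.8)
The plan is to run the standard descent-cone analysis for constrained convex recovery with Gaussian measurements (exactly the argument underlying \cite[Thm~1]{oymak2013simple}) and then substitute the Gaussian-width estimate $w(\Phi)\le 3\sqrt{R}\,\|\mathcal{G}(g)\|\le C\sqrt{pR}\log n$ that was already established in the proof of Theorem~\ref{thm:miso_mimo_2}. Concretely, I would run \eqref{miso_problem} with the constraint radius $\delta$ set to (a high-probability upper bound on) $\|z\|_2$, so that the true parameter $\beta$ is feasible. Optimality of $\hat\beta$ then gives $\|\mathcal{G}(\hat\beta)\|_*\le\|\mathcal{G}(\beta)\|_*$, hence the error $e:=\hat\beta-\beta$ lies in the descent cone $\mathcal{I}(\beta)$ of $\|\mathcal{G}(\cdot)\|_*$ at $\beta$; feasibility gives the basic inequality $\|\Ub e-z\|_2\le\|z\|_2$.

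Next I would split $\|\Ub e-z\|_2^2\le\|z\|_2^2$ into $\|\Ub e\|_2^2\le 2\langle \Ub e,z\rangle=2\langle e,\Ub^\top z\rangle$ and control each side. For the left side, Gordon's escape bound (Lemma~\ref{lem:Gaussian_width_prob_2}), which in the $\mathcal{N}(0,1/T)$ scaling of the present theorem reads $\min_{v\in\Phi}\|\Ub v\|_2\ge 1-(w(\Phi)+\epsilon)/\sqrt{T}$ with probability $1-\exp(-\tfrac12(\sqrt{T-1}-w(\Phi)-\epsilon)^2)$, shows that $\min_{v\in\Phi}\|\Ub v\|_2\ge c$ for a fixed constant $c>0$ as soon as $T\gtrsim (w(\Phi))^2=\Omega(pR\log^2 n)$; hence $\|\Ub e\|_2\ge c\|e\|_2$ since $e\in\mathcal{I}(\beta)$. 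For the right side, I would condition on $z$, which is independent of $\Ub$; since $\Ub$ has $\mathcal{N}(0,1/T)$ entries, $\Ub^\top z$ is distributed as $(\|z\|_2/\sqrt{T})\,g$ with $g$ a standard Gaussian of dimension $p(2n-1)$, so $\langle e,\Ub^\top z\rangle\le\|e\|_2\cdot(\|z\|_2/\sqrt{T})\sup_{v\in\Phi}\langle v,g\rangle$, and by Gaussian Lipschitz concentration $\sup_{v\in\Phi}\langle v,g\rangle\le w(\Phi)+\epsilon'$ with high probability.

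Combining the two estimates gives $c^2\|e\|_2^2\le 2\|e\|_2\,(\|z\|_2/\sqrt{T})(w(\Phi)+\epsilon')$, i.e.\ $\|\hat\beta-\beta\|_2\lesssim w(\Phi)\|z\|_2/\sqrt{T}$, which is the intermediate claim in the statement. Finally, $\|z\|_2\le(1+o(1))\sigma_z\sqrt{T}$ with high probability by $\chi^2$-concentration, and plugging in $w(\Phi)\le C\sqrt{pR}\log n$ yields $\|\hat\beta-\beta\|_2\lesssim\sqrt{pR}\,\sigma_z\log n$; a union bound over the Gordon event, the Gaussian-width concentration event, and the $\|z\|_2$ concentration event gives a failure probability of the same exponential form as in Theorem~\ref{thm:miso_mimo_2}.

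The step I expect to be the main obstacle is the interface rather than the probabilistic content: making the conditioning-on-$z$ argument rigorous while ensuring the event on which Gordon's lower bound holds and the event on which $\sup_{v\in\Phi}\langle v,g\rangle$ concentrates can be controlled jointly, so the union bound is clean; and tracking constants in the restricted minimum singular value $c$ so that it is genuinely bounded away from $0$ in the regime $T\gtrsim pR\log^2 n$ — if $T$ is only marginally above $(w(\Phi))^2$ the constant degrades and one must either strengthen the sample-complexity hypothesis slightly or absorb the degradation into the $\lesssim$ constant. Everything else parallels \cite{cai2016robust,oymak2013simple} and is routine.
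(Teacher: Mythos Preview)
Your proposal is correct and matches the paper's approach: the paper gives no detailed argument for Theorem~\ref{thm:weiyu_random} but simply invokes \cite[Thm~1]{oymak2013simple} together with the Gaussian-width bound $w(\Phi)\le C\sqrt{pR}\log n$ already established in the proof of Theorem~\ref{thm:miso_mimo_2}, and your descent-cone analysis is precisely the standard argument underlying that cited result. The union-bound concern you flag is not a genuine obstacle, since the Gordon event on $\Ub$ and the concentration event for $\sup_{v\in\Phi}\langle v,\Ub^\top z\rangle$ (conditioned on $z$) each have exponentially small failure probability in their own right, so dependence between them is irrelevant for the union bound.
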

\begin{remark}
Since the power of $\Ub$ is $n$ times of that of $\bar \Ub$ and the variance of $\Ub$ is $1/T$, $\sigma_z = \sqrt{n/T}\sigma$, we have $\|\hat h - h\|_2 \le \|\hat \beta - \beta\|_2\lesssim \sqrt{\frac{pnR}{T}}\sigma\log n$.
\end{remark}
\section{Proof of Regularization Algorithm's Spectral Norm Error (Thm. \ref{main_gauss})}
We will prove the first case of \eqref{eq:reg_spec_error}. The second case is a direct application of \cite{cai2016robust}.

\begin{theorem}\label{main_supp}
We study the problem 
\begin{align}\label{eq:lasso_prob app}
    \min_{\beta^\prime}&\quad  \frac{1}{2}\|\Ub \hat \beta^\prime - y\|^2 + \lambda\|\mathcal{G}(\hat \beta^\prime)\|_*,
\end{align}
in the MISO setting ($m$=1, $p$ inputs), where $\Ub\in\mathbb{R}^{T\times (2n-1)p}$. Let $\beta$ denote the (weighted) impulse response of the true system which has order $R$, i.e., $\rank(\mathcal{G}(\beta))=R$, and let $y = \Ub \beta + \xi$ be the measured output, where $\xi$ is the measurement noise. 
Finally, denote the minimizer of \eqref{eq:lasso_prob app} by $\hat \beta$.
Define 
\begin{align*}
    \mathcal{J}(\beta) &:= \left\{ v \ \big|\ \langle v, \partial (\frac{1}{2}\|\Ub^\top\beta - y\|^2 + \lambda\|\cG(\beta)\|_*)\rangle\le 0 \right\},\\
    \Gamma &:= \|I - \Ub^\top\Ub\|_{\mathcal{J}(\beta)},
\end{align*}
$\mathcal{J}(\beta)$ is the normal cone at $\beta$, and $\Gamma$ is the spectral RSV. If $\Gamma<1$, $\hat \beta$ satisfies
\begin{align*}
    \|\cG(\hat \beta - \beta)\| \le \frac{\|\cG(\Ub^\top\xi)\| + \lambda}{1-\Gamma}. 
\end{align*}
\end{theorem}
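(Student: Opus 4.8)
The plan is a deterministic restricted-strong-convexity / fixed-point argument, carried out directly in the $\cG$-induced spectral norm. Write $e := \hat\beta - \beta$ for the estimation error and let $f(\beta') := \tfrac12\|\Ub\beta' - y\|^2 + \lambda\|\cG(\beta')\|_*$ be the objective of the regularized program, so that $\mathcal{J}(\beta)$ is the set of $v$ with $\langle v, g\rangle\le 0$ for every subgradient $g\in\partial f(\beta)$ (the cone appearing in the statement). One preliminary fact is used throughout: the weights $K_j$ are chosen precisely so that $\cG$ is a linear isometry, $\|\cG(\beta')\|_F = \|\beta'\|_2$ for all $\beta'$; consequently $\cG^*\cG = I$ and $\cG\cG^*$ is the orthogonal projection onto the range of $\cG$.

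First I would show that the error direction lies in $\mathcal{J}(\beta)$. Since $\hat\beta$ minimizes $f$ we have $f(\hat\beta)\le f(\beta)$, while convexity of $f$ gives $f(\hat\beta)\ge f(\beta) + \langle g, \hat\beta - \beta\rangle$ for every $g\in\partial f(\beta)$; subtracting, $\langle g, e\rangle \le f(\hat\beta) - f(\beta)\le 0$ for all such $g$, hence $e\in\mathcal{J}(\beta)$.

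Next I would write down first-order optimality at $\hat\beta$. Because $\cG$ is linear, the subdifferential chain rule shows that any subgradient of $\beta'\mapsto\|\cG(\beta')\|_*$ at $\hat\beta$ has the form $\cG^*(G)$ with $G$ a subgradient of $\|\cdot\|_*$ at $\cG(\hat\beta)$, so in particular $\|G\|\le 1$. Stationarity of $f$ at $\hat\beta$ then produces such a $G$ with $\Ub^\top(\Ub\hat\beta - y) + \lambda\cG^*(G) = 0$; substituting $y = \Ub\beta + \xi$ and rearranging yields $\Ub^\top\Ub e = \Ub^\top\xi - \lambda\cG^*(G)$, so that
\begin{align*}
e = (I - \Ub^\top\Ub)e + \Ub^\top\xi - \lambda\cG^*(G).
\end{align*}
Applying $\cG$ and the triangle inequality in the spectral norm,
\begin{align*}
\|\cG(e)\| \le \|\cG((I - \Ub^\top\Ub)e)\| + \|\cG(\Ub^\top\xi)\| + \lambda\|\cG\cG^*(G)\|.
\end{align*}
Now $\cG\cG^*$ is a projection, so $\|\cG\cG^*(G)\|\le\|G\|\le 1$; and since $e\in\mathcal{J}(\beta)$, the definition of $\Gamma = \|I - \Ub^\top\Ub\|_{\mathcal{J}(\beta)}$ (the restricted singular value measured through $\cG$) gives $\|\cG((I-\Ub^\top\Ub)e)\|\le\Gamma\|\cG(e)\|$. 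Substituting and, since $\Gamma<1$, moving $\Gamma\|\cG(e)\|$ to the left-hand side gives $\|\cG(e)\|\le(\|\cG(\Ub^\top\xi)\| + \lambda)/(1-\Gamma)$, which is the stated bound.

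The genuinely routine parts are the isometry check for $\cG$ and the subdifferential chain rule; the only point needing care is that $\Gamma$ must be read as the restricted singular value of $I - \Ub^\top\Ub$ with both numerator and denominator in the Hankel spectral norm $\|\cG(\cdot)\|$, which is exactly what makes the three error contributions combine into the claimed form. I expect no serious obstacle inside this deterministic lemma itself; the real work lies downstream in the probabilistic companion estimates — bounding $\|\cG(\Ub^\top\xi)\|$ and establishing $\Gamma<1$ with the advertised probability via a Gordon/Gaussian-width bound on the cone $\mathcal{J}(\beta)$, along the lines of the MISO width computation given earlier in the appendix.
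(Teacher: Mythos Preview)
Your proposal is correct and follows essentially the same route as the paper: split $\cG(e)$ into the piece $\cG((I-\Ub^\top\Ub)e)$, controlled by $\Gamma\|\cG(e)\|$ on the cone $\mathcal{J}(\beta)$, and the piece $\cG(\Ub^\top\Ub e)$, handled via first-order optimality at $\hat\beta$ and the bound $\|\cG\cG^*(G)\|\le\|G\|\le 1$ for the nuclear-norm subgradient $G$. The only cosmetic differences are that you make the inclusion $e\in\mathcal{J}(\beta)$ explicit (the paper uses it silently) and that you write the additive decomposition of $e$ before applying $\cG$, whereas the paper applies $\cG$ first and then splits; the substance is identical.
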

\begin{lemma}\label{rank_reduction}
Suppose $\xi\sim \mathcal{N}(0,\sigma_\xi I)$, $T \lesssim pR^2\log^2 n$, and $\Ub$ has iid Gaussian entries with $\mathbf{E}(\Ub^\top  \Ub) = 1$. Then, we have that $\mathbf{E}(\Gamma)<0.5$, and $P(\Gamma<0.5)\geq 1-O(R\log n \sqrt{p/ T})$. In this case $ \|\cG(\hat \beta - \beta)\| \lesssim \sigma_\xi \sqrt{p}\log n$.
\end{lemma}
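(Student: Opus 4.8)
\emph{Proof plan.} The plan is to feed a high-probability bound on the restricted spectral variation $\Gamma$ into the deterministic estimate of Theorem~\ref{main_supp}; the lemma then reduces to three steps: (i) bound $\mathbf{E}(\Gamma)$; (ii) upgrade this to $P(\Gamma<1/2)\ge 1-O(R\log n\sqrt{p/T})$; (iii) on $\{\Gamma<1/2\}$, bound the numerator $\|\cG(\Ub^\top\xi)\|+\lambda$ of Theorem~\ref{main_supp} and choose $\lambda$. Step (iii) is the quick one: conditioned on $\xi$, the vector $\Ub^\top\xi=\sum_{i=1}^T\xi_i u^{(i)}$ is a sum of independent centered Gaussian vectors, hence Gaussian with covariance $(\|\xi\|_2^2/T)I$ because the rows $u^{(i)}$ of $\Ub$ are i.i.d.\ $\mathcal{N}(0,T^{-1}I)$ (which is the meaning of $\mathbf{E}(\Ub^\top\Ub)=I$); since $\|\xi\|_2^2/T$ concentrates around $\sigma_\xi^2$, $\cG(\Ub^\top\xi)$ is, up to a universal constant, stochastically dominated by $\sigma_\xi\,\cG(g)$ for a standard Gaussian $g$, so the appendix estimate $\mathbf{E}\|\cG(g)\|\lesssim\sqrt p\log n$ (the $p=1$ bound $O(\log n)$ tensorised over the $p$ channels) together with Gaussian concentration of the spectral norm gives $\|\cG(\Ub^\top\xi)\|\lesssim\sigma_\xi\sqrt p\log n$ with the stated probability. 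Choosing $\lambda\asymp\sigma_\xi\sqrt p\log n$ — which is the choice of Theorem~\ref{main_gauss} after the substitution $\sigma_\xi=\sqrt{n/T}\,\sigma$ — then gives $\|\cG(\hat\beta-\beta)\|\le 2\big(\|\cG(\Ub^\top\xi)\|+\lambda\big)\lesssim\sigma_\xi\sqrt p\log n$.

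The substance is step (i), whose engine is the ``rank reduction'' in the cone $\mathcal{J}(\beta)$ at a matrix $\cG(\beta)$ of rank $R$. Writing $\mathcal{T}$ for the tangent space of the rank-$R$ manifold at $\cG(\beta)$, the subdifferential of $\|\cG(\cdot)\|_*$ forces every $v\in\mathcal{J}(\beta)$ to obey $\|P_{\mathcal{T}^\perp}\cG(v)\|_*\le\|P_{\mathcal{T}}\cG(v)\|_*+\frac{1}{\lambda}\langle\Ub v,\xi\rangle$; for the prescribed $\lambda$ the last term is a lower-order correction, so $\mathcal{J}(\beta)$ sits inside a mild enlargement of the descent cone of $\|\cG(\cdot)\|_*$. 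Since $P_{\mathcal{T}}\cG(v)$ has rank $\le 2R$, this yields $\|\cG(v)\|_*\lesssim\sqrt{R}\,\|\cG(v)\|_F$ and, via $\|P_{\mathcal{T}}(\cdot)\|\le 3\|\cdot\|$, also $\|\cG(v)\|\gtrsim\|\cG(v)\|_F/\sqrt{R}$: in short $\cG(v)$ behaves like a rank-$O(R)$ matrix for every $v$ in the cone. Combining this with the weighting identity $\|\cG(v)\|_F=\|v\|_2$ built into the weights $K_j$ of \eqref{eq K-shape} (the $i$-th anti-diagonal of an $n\times n$ Hankel matrix has multiplicity $K_i^2$), one encloses $\mathcal{J}(\beta)\cap\mathbb{S}$ inside $\{v:\|v\|_2\le1,\ \|\cG(v)\|_*\lesssim\sqrt{R}\}$, whose Gaussian width is controlled by the Hankel estimate above: $w(\mathcal{J}(\beta)\cap\mathbb{S})\lesssim\sqrt{R}\,\mathbf{E}\|\cG(g)\|\lesssim\sqrt{pR}\,\log n$.

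It then remains to run a chaining / Gordon-type comparison for the random operator $\Ub$ to turn this width bound into $\mathbf{E}(\Gamma)<1/2$. The delicate point — and what I expect to be the main obstacle — is that $\Gamma$ measures the deviation of $\Ub^\top\Ub$ in the \emph{spectral} norm of $\cG$, so one cannot pass to $\|\cdot\|_F$ on the test side of the variational problem (that would cost a full $\sqrt{np/T}$ and destroy the estimate); instead the Hankel spectral-norm estimate — that $\|\cG(w)\|$ is $O(\sqrt p\log n)$ rather than $\sqrt n$ times the natural scale of $w$ — must be propagated through the entire covering argument, handling the correlated entries of the Hankel pattern. Doing so produces $\mathbf{E}(\Gamma)\lesssim\sqrt p\,R\log n/\sqrt T$, which lies below $1/2$ precisely in the sample-size regime singled out in the lemma; the extra factor of $R$ relative to the naive width $\sqrt{pR}\log n$ is exactly the price of the Frobenius-to-spectral comparison $\|\cG(v)\|\gtrsim\|\cG(v)\|_F/\sqrt{R}$ around the cone. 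Finally, step (ii) is not an exponential tail bound but plain Markov's inequality applied to the nonnegative $\Gamma$ with this bound on $\mathbf{E}(\Gamma)$ — which is exactly why the failure probability is only polynomially small, $O(R\log n\sqrt{p/T})$ — and substituting $\{\Gamma<1/2\}$ into Theorem~\ref{main_supp} together with step (iii) completes the proof.
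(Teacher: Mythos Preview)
Your overall architecture --- (i) bound $\mathbf{E}(\Gamma)$, (ii) Markov, (iii) bound the numerator and pick $\lambda$ --- matches the paper exactly, and steps (ii)--(iii) are right (the paper literally writes ``it naively turns into high probability bound since $\Gamma\ge0$'', which is your Markov step; and $\Ub^\top\xi\sim\mathcal N(0,(\|\xi\|_2^2/T)I)$ so $\|\cG(\Ub^\top\xi)\|\lesssim\sigma_\xi\sqrt p\log n$ is the right numerator bound). Your Gaussian-width estimate $w(\mathcal{J}(\beta)\cap\mathbb S)\lesssim\sqrt{pR}\log n$ and the cone inequality $\|\cG(\delta)\|\gtrsim\|\cG(\delta)\|_F/\sqrt R$ are also the paper's ingredients.

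Where you diverge is step (i): you assert that ``one cannot pass to $\|\cdot\|_F$ on the test side'' and propose pushing the Hankel \emph{spectral} estimate through a chaining/covering argument. This is a misconception, and the spectral-chaining plan is both vague (the bound $\|\cG(g)\|\lesssim\sqrt p\log n$ is a \emph{random} estimate for Gaussian $g$, not a deterministic norm inequality you can carry through a net) and unnecessary. The paper does exactly the Frobenius move you reject: for $\delta\in\mathcal{J}(\beta)$,
\[
\|\cG((I-\Ub^\top\Ub)\delta)\|\ \le\ \|\cG((I-\Ub^\top\Ub)\delta)\|_F\ =\ \|(I-\Ub^\top\Ub)\delta\|_2,
\]
and the right-hand side is controlled \emph{on the cone} by the standard Gordon/RIP estimate $\|(I-\Ub^\top\Ub)\delta\|_2\lesssim (w(\mathcal{J}(\beta))/\sqrt T)\,\|\delta\|_2$ --- this is where the width enters, so the cost is $\sqrt{pR}\log n/\sqrt T$, not $\sqrt{np/T}$. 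Your own denominator bound then closes the loop:
\[
\Gamma\ \le\ \sup_{\delta\in\mathcal{J}(\beta)}\frac{\|(I-\Ub^\top\Ub)\delta\|_2}{\|\delta\|_2/\sqrt R}\ \lesssim\ \sqrt R\cdot\frac{\sqrt{pR}\,\log n}{\sqrt T}\ =\ \frac{R\sqrt p\,\log n}{\sqrt T},
\]
so $\mathbf{E}(\Gamma)<1/2$ exactly when $T\gtrsim pR^2\log^2 n$. The ``extra factor of $R$'' you correctly attribute to the Frobenius-to-spectral comparison lives entirely on the \emph{denominator}; nothing beyond the trivial $\|\cdot\|\le\|\cdot\|_F$ is needed on the numerator. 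Your plan would land on the right bound, but via a detour whose hardest leg the paper bypasses in three lines.
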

\begin{remark}
To be consistent with the main theorem in the paper, we need to find the relation between $\sigma_\xi$ and SNR, or $\sigma$.
We do the following computation: (1) $\cG(\hat \beta - \beta) = \cH(\hat h - h)$, so we are bounding the Hankel spectral norm error here; (2) Each column of the input is unit norm, so each input is $\mathcal{N}(0,1/T)$, and the average power of input is $1/T$; (3) Because of the scaling matrix $K$, the actual input of $\bar \Ub$ is $n$ times the power of entries in $\Ub$. With all above discussion, we have $\sigma_\xi = \sigma\sqrt{n/T}$, which results in $ \|\cG(\hat \beta - \beta)\| \lesssim  \sqrt{\frac{np}{T}}\sigma\log n$.
\end{remark}
\begin{proof}
Now we bound $\|\cG(\hat \beta - \beta)\|$ by partitioning it to $\|\cG(I - \Ub^\top\Ub)(\hat \beta - \beta)\|$ and $\|\cG (\Ub^\top\Ub(\hat \beta - \beta))\|$. We have 

\begin{equation}\label{eq:lasso_1}  
\begin{split}
     &\quad \|\cG(I - \Ub^\top\Ub)(\hat \beta - \beta)\| \\
     &= \|\cG(I - \Ub^\top\Ub)\cG^*\cG(\hat \beta - \beta)\|\\
    &\le \|\cG(I - \Ub^\top\Ub)\cG^*\|_{2,\mathcal{GJ}(\beta)} \|\cG(\hat \beta - \beta)\|\\
    & = \Gamma\|\cG(\hat \beta - \beta)\|.
    \end{split}
\end{equation}
And then we also have 
\begin{align*}
  \|\cG(\Ub^\top\Ub(\hat \beta - \beta))\| = \|\cG\Ub^\top(\Ub\hat \beta - y + \xi)\| \le \|\cG\Ub^\top(\Ub\hat \beta - y)\| + \|\cG(\Ub^\top\xi)\|.
\end{align*}
Since $\hat \beta$ is the optimizer, we have $\Ub^\top(\Ub\hat \beta - y) + \lambda\cG^*(\hat V_1\hat V_2^\top + \hat W) = 0$ where $\cG(\hat \beta) = \hat V_1\hat \Sigma\hat V_2^\top $ is the SVD of $\cG(\hat \beta)$, $\hat W\in\mathbb{R}^{n\times n}$ where $\hat V_1^\top\hat W = 0, \hat W\hat V_2=0, \|\hat W\|\le 1$. 
We have 
\begin{align}\label{eq:lasso_2}
   \|\cG(\Ub^\top\Ub(\hat \beta - \beta))\| \le \|\cG(\Ub^\top\xi)\|+\lambda.
\end{align}

Combining \eqref{eq:lasso_1} and \eqref{eq:lasso_2}, we have 
\begin{align*}
\|\cG(\hat \beta - \beta)\| 
    &\le \|\cG(I - \Ub^\top\Ub)(\hat \beta - \beta)\| +  \|\cG(\Ub^\top\Ub(\hat \beta - \beta))\|\\
    &\le 
    \Gamma\|\cG(\hat \beta - \beta)\| + \|\cG(\Ub^\top\xi)\|+\lambda
\end{align*}
or equivalently,$
    \|\cG(\hat \beta - \beta)\| \le \frac{\|\cG(\Ub^\top\xi)\|+\lambda}{1-\Gamma},
    $ and $\Gamma = \|\cG(I - \Ub^\top\Ub)\cG^*\|_{2,\mathcal{GJ}(\beta)}.$

\textbf{Bounding $\Gamma$.} Denote the SVD of $\cG(\beta) = V_1\Sigma V_2^\top$. Denote projection operators $\mathcal{P}_V(M) = V_1V_1^\top M + MV_2V_2^\top - V_1V_1^\top MV_2V_2^\top$ and $\mathcal{P}_W(M) = M - \mathcal{P}_V(M)$. First we prove a result for later use. 
\begin{align}
    &\frac{1}{2}\|y - \Ub \hat \beta\|^2 + \lambda\|\cG \hat \beta\|_* \le \frac{1}{2}\|y - \Ub \beta\|^2 + \lambda\|\cG \beta\|_* \notag\\
    \Rightarrow\quad & \frac{1}{2}\|y - \Ub \hat \beta\|^2 + \lambda\|\cG \hat \beta\|_* \le \frac{1}{2}\|\xi\|^2 + \lambda\|\cG \beta\|_* \notag \\
    \Rightarrow\quad & \frac{1}{2}\|\Ub \beta + \xi - \Ub \hat \beta\|^2 + \lambda\|\cG \hat \beta\|_* \le \frac{1}{2}\|\xi\|^2 + \lambda\|\cG \beta\|_* \notag \\
    \Rightarrow\quad&\frac{1}{2}\|\Ub (\beta - \hat \beta)\|^2 + \xi^\top\Ub(\beta - \hat \beta) + \lambda\|\cG \hat \beta\|_* \le \lambda\|\cG \beta\|_* \notag \\
    \Rightarrow\quad& \lambda\|\cG \hat \beta\|_* \le \lambda\|\cG \beta\|_* + \xi^\top\Ub(\hat \beta -  \beta) \notag \\
    \Rightarrow\quad& \|\cG \hat \beta\|_* - \|\cG \beta\|_* \le \frac{\|\cG(\Ub^\top\xi)\|}{\lambda} \|\cG(\hat \beta -  \beta)\|_*\label{eq:lasso_tangent}
    \end{align}
    \eqref{eq:lasso_tangent} is an important result to note, and following that,
    \begin{align}
    & \|\cG \hat \beta\|_* - \|\cG \beta\|_* \le \frac{\|\cG(\Ub^\top\xi)\|}{\lambda} \|\cG(\hat \beta -  \beta)\|_*\notag\\
    \Rightarrow\quad& \langle \cG(\hat \beta - \beta), V_1V_2^\top+W\rangle \le \frac{\|\cG(\Ub^\top\xi)\|}{\lambda} \|\cG(\hat \beta -  \beta)\|_*\notag\\
    \Rightarrow\quad& \|\cP_W\cG(\hat \beta - \beta)\|_* \le -\langle \cG(\hat \beta - \beta), V_1V_2^\top\rangle + \frac{\|\cG(\Ub^\top\xi)\|}{\lambda} \|\cG(\hat \beta -  \beta)\|_* \notag\\
    \Rightarrow\quad& \|\cP_W\cG(\hat \beta - \beta)\|_* \le \|\cP_V\cG(\hat \beta - \beta) \|_* + \frac{\|\cG(\Ub^\top\xi)\|}{\lambda} (\|\cP_V\cG(\hat \beta -  \beta)\|_* + \|\cP_{W}\cG(\hat \beta -  \beta)\|_*)\notag\\
    \Rightarrow\quad& \|\cP_{W}\cG(\hat \beta -  \beta)\|_* \le \frac{1 + \frac{\|\cG(\Ub^\top\xi)\|}{\lambda}}{1 - \frac{\|\cG(\Ub^\top\xi)\|}{\lambda}}\|\cP_V\cG(\hat \beta -  \beta)\|_*\label{eq:lasso_approx_low_rank}
\end{align}
Let $\Ub$ be iid Gaussian matrix with scaling $\mtx{E}(\Ub^\top\Ub) = I$. Here we need to study the Gaussian width of the normal cone $w(\mathcal{J}(\beta))$ of \eqref{eq:lasso_prob app}. \cite{banerjee2014estimation} proves that, if \eqref{eq:lasso_tangent} is true, and $\lambda\ge 2\|\cG(\Ub^\top\xi)\|$, then the Gaussian width of this set (intersecting with unit ball) is less than $3$ times of Gaussian width of $\{\hat \beta: \|\cG(\hat \beta)\|_* \le \|\cG(\beta)\|_* \}$, which is $O(\sqrt{R}\log n)$ \cite{cai2016robust}.\newline
A simple bound is that, let $\delta = \hat \beta - \beta$, $\Gamma$ can be replaced by $\max \|\cG((I - \Ub^\top\Ub)\delta)\| / \|\cG(\delta)\|$
subject to $\hat \beta\in\mathcal{J}(\beta)$. With \eqref{eq:lasso_approx_low_rank}, we have $\|\cP_{W}\cG(\delta)\|_* \le 3\|\cP_V\cG(\delta)\|_*$.

Denote $\sigma = \|\cG(\delta)\|$, we know that
\begin{align*}
    \sigma &\ge \max\{\|\cP_{W}\cG(\delta)\|, \|\cP_{V}\cG(\delta)\|\},\\ 
    \|\cP_{V}\cG(\delta)\|&\ge \|\cP_{V}\cG(\delta)\|_*/(2R).
\end{align*}And simple algebra gives that $\max_{0<\sigma_i<\sigma, \sum_i\sigma = S} \sum_i \sigma_i^2 \le S\sigma$. 
So let $\sigma_i$ be singular values of $\cP_{V}\cG(\delta)$ or $\cP_{W}\cG(\delta)$, and $S = \|\cP_{V}\cG(\delta)\|_*$ or $\|\cP_{W}\cG(\delta)\|_*$,
\begin{align*}
    \frac{\sigma}{\|\cP_{V}\cG(\delta)\|_F} &\ge
    \sqrt{\frac{\|\cP_{V}\cG(\delta)\|_*}{2R\|\cP_{V}\cG(\delta)\|_*}} \ge \sqrt{1/2R},\\
     \frac{\sigma}{\|\cP_{W}\cG(\delta)\|_F} &\ge \sqrt{\frac{\|\cP_{V}\cG(\delta)\|_*}{2R\|\cP_{W}\cG(\delta)\|_*}} 
     \ge \sqrt{1/6R}
\end{align*}
the second last inequality comes from \eqref{eq:lasso_approx_low_rank}. Thus if $\|(I - \Ub^\top\Ub)\delta\| = O(1/\sqrt{R}) \|\delta\|$, in other words, $\|\cG((I - \Ub^\top\Ub)\delta)\|_F = O(1/\sqrt{R}) \|\cG(\delta)\|_F$, whenever $\delta$ in normal cone, we have
\begin{align}
    &\quad \|\cG((I - \Ub^\top\Ub)\delta)\| \le  \|\cG((I - \Ub^\top\Ub)\delta)\|_F\notag\\
    &\le O(1/\sqrt{R})  \|\cG(\delta)\|_F \le \|\cG(\delta)\|.\label{eq:spec_relation}
\end{align}
so $\Gamma<1$. To get this, we need $\sqrt{T}/w(\mathcal{J}(\beta)) \gtrsim \sqrt{R}$ where $T \gtrsim pR^2\log^2 n$ \cite[Thm 9.1.1]{vershynin2018high}, still not tight in $R$, but $O(\min\{n, R^2\log^2n\})$ is as good as \cite{oymak2018non} and better than \cite{sarkar2019finite}, which are $O(n)$ and $O(n^2)$ correspondingly. \cite[Thm 9.1.1]{vershynin2018high} is a bound in expectation, but it naively turns into high probability bound since $\Gamma\ge 0$. 
\end{proof} 

\section{Proof of Suboptimal Recovery Guarantee with IID Input (Thm. \ref{thm:counter_iid})}

We consider the Gaussian width $w(\Phi)$ defined in this specific case.

Let $V = \frac{1}{n} \ones\ones^\top$, and $\mathcal{I}^*(h) = \{ \mathcal{H}^*(V + W) |\ones^\top W = 0, W\ones=0, \|W\|\le 1 \}$, we have\footnote{We slightly change the definition of Gaussian width. We refer readers to \cite[Thm 1]{mccoy2013achievable}. It is known to be as tight and the probability of failure is order constant if the number of measurements is smaller than order square of the quantity.} 
$w(\Phi) = E( \min_{\lambda, W} \|\lambda \mathcal{H}^*(V+W)-g \|_2)$. In the instance, $V = \frac{1}{n}\ones\ones^\top$. and we take $W$ such that $\|W\|\le 1$ and $W\ones = W^\top\ones = 0$. 

First, we note that 
\begin{align}
    &\quad E( \min_{\lambda, W} \|\lambda \mathcal{H}^*(V+W)-g \|_2)\notag \\
    &= \frac{1}{2}\left(E( \min_{\lambda, W} \|\lambda \mathcal{H}^*(V+W)-g \|_2\ |\ \ones^\top g\le 0) + E( \min_{\lambda, W} \|\lambda \mathcal{H}^*(V+W)-g \|_2\ |\ \ones^\top g> 0)\right)\notag\\
    &\ge \frac{1}{2}E( \min_{\lambda, W} \|\lambda \mathcal{H}^*(V+W)-g \|_2\ |\ \ones^\top g\le 0).\label{eq:counter_gauss_w}
\end{align}

\emph{Proof strategy: Based on the previous derivation, we focus on the case when $\ones^\top  g \le 0$. Denote $z = \lambda\cH^*(V + W) - g$, and the vector $z_{1:k}$ is the first $1$ to $k$ entries of $z$. Then we prove that $(1)\ \lambda \le \|z\|_2/\sqrt{n},\quad 
    (2)\ \|z_{1:1/\lambda}\|_2 \gtrsim\lambda^{-1/2}.$
Then we have 
\begin{align*}
    \|z\|_2 \ge \|z_{1:1/\lambda}\|_2 \gtrsim\lambda^{-1/2} \gtrsim(\|z\|_2/\sqrt{n})^{-1/2}
\end{align*}
which suggests $\|z\|_2 \gtrsim n^{1/6}$.
}
\begin{lemma}
Let $g$ be a standard Gaussian vector of size $2n-1$ conditioned on $\ones^\top g \le 0$. Let $z = \lambda\cH^*(V + W) - g$ where $V = \frac{1}{n}\ones\ones^\top$, and $W\ones = W^\top\ones = 0$, $\|W\| \le 1$. Then we have that  $\lambda \le \|z\|_2/\sqrt{n}$.
\end{lemma}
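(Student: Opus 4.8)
The plan is to test $z$ against a single fixed vector --- the all-ones vector --- and then close with Cauchy--Schwarz. Write $\ones_k$ for the all-ones vector in $\R^k$; the conditioning event and the vector $z$ live in $\R^{2n-1}$. If $\lambda<0$ the claimed inequality is vacuous, so I would assume $\lambda\ge0$, which is anyway the only regime arising from the normal cone.

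First I would record the identity $\ones_{2n-1}^\top\cH^*(M)=\ones_n^\top M\ones_n$, valid for every $M\in\R^{n\times n}$: summing all $2n-1$ coordinates of $\cH^*(M)$ simply regroups $\sum_{j,k}M_{jk}$ along anti-diagonals. (This is also why $\ones$ is the right test vector: forcing the inner product to be $W$-independent forces the test vector, viewed through $\cH$, to give a row-plus-column-constant matrix, hence to be an affine sequence, and among affine sequences only the constant one has its inner product with $g$ pinned down by the event $\ones_{2n-1}^\top g\le0$.) Applying the identity with $M=V+W$, using $W\ones_n=W^\top\ones_n=0$ and $V=\tfrac1n\ones_n\ones_n^\top$, I get $\ones_{2n-1}^\top\cH^*(V+W)=\ones_n^\top V\ones_n=n$. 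Hence $\ones_{2n-1}^\top z=\lambda n-\ones_{2n-1}^\top g$, which is $\ge\lambda n$ on the conditioning event.

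To finish, Cauchy--Schwarz gives $\ones_{2n-1}^\top z\le\|\ones_{2n-1}\|_2\,\|z\|_2=\sqrt{2n-1}\,\|z\|_2$, so $\lambda n\le\sqrt{2n-1}\,\|z\|_2$ and therefore $\lambda\le\tfrac{\sqrt{2n-1}}{n}\|z\|_2\le\sqrt{2}\,\|z\|_2/\sqrt n$. This is the stated bound up to the absolute constant $\sqrt2$, which is harmless since only the order of $\|z\|_2$ (ultimately $n^{1/6}$) enters the Gaussian-width lower bound. I do not foresee any real obstacle; the only points requiring care are keeping the sign of the conditioning term $-\ones_{2n-1}^\top g$ straight, and the bookkeeping that $V$ and $W$ are $n\times n$ matrices --- so that $\ones_n^\top V\ones_n=n$ rather than $1$ --- which comes from the SVD $\cH(h)=\ones_n\ones_n^\top=n\cdot\tfrac{\ones_n}{\sqrt n}\tfrac{\ones_n^\top}{\sqrt n}$ of the all-ones, rank-one Hankel matrix.
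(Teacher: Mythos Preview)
Your approach is essentially identical to the paper's: test $z$ against the all-ones vector, use the identity $\ones_{2n-1}^\top\cH^*(M)=\ones_n^\top M\ones_n$ to get $\ones_{2n-1}^\top z\ge\lambda n$ on the event, then finish with Cauchy--Schwarz. You are in fact slightly more careful than the paper about the constant, correctly noting that $\|\ones_{2n-1}\|_2=\sqrt{2n-1}$ yields $\lambda\le\sqrt{2}\,\|z\|_2/\sqrt{n}$ rather than the exact $\|z\|_2/\sqrt{n}$; as you say, this absolute constant is immaterial for the $n^{1/6}$ lower bound.
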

We observe that $\ones^\top\cH^*(X)$ is the summation of every entry in $X$ for any matrix $X$. Thus $\ones^\top\cH^*(W) = 0$ since $W\ones = 0$. Conditioned on $\ones^\top g\le 0$, we have 
\begin{align*}
    \ones^\top(\lambda\cH^*(V + W) - g) \ge \lambda\ones^\top\cH^*(V) = \lambda n.
\end{align*}
And so that $\|\lambda\cH^*(V + W) - g\|_2 \ge \lambda\sqrt{n}$. Then $\|z\|_2/\sqrt{n} \ge \lambda$, we have proven the first point. 
\begin{lemma}
Let $g$ be a standard Gaussian vector of size $2n-1$ conditioned on $\ones^\top g \le 0$. Let $z = \lambda\cH^*(V + W) - g$ where $V = \frac{1}{n}\ones\ones^\top$, and $W\ones = W^\top\ones = 0$, $\|W\| \le 1$. Let  the vector $z_{1:k}$ is the first $1$ to $k$ entries of $z$.Then we have that $\|z_{1:1/\lambda}\|_2 \gtrsim \lambda^{-1/2}$.
\end{lemma}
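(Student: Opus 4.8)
The plan is to exploit a scale mismatch on the first $k := \lfloor 1/\lambda\rfloor$ coordinates: the entire ``signal'' $\lambda\cH^*(V+W)$ that the dual variable can inject there has $\ell_2$-norm only $O(1)$, whereas the Gaussian part $g_{1:k}$ has norm $\Theta(\sqrt k)=\Theta(\lambda^{-1/2})$; hence $\|z_{1:k}\|_2 \ge \|g_{1:k}\|_2 - \|(\lambda\cH^*(V+W))_{1:k}\|_2 \gtrsim \sqrt k$. I would restrict attention to $1/n \le \lambda \le 1$, so that $1 \le k \le n$; the complementary regimes are immediate — for $\lambda>1$ the previous lemma already gives $\|z\|_2 \ge \lambda\sqrt n \gg n^{1/6}$, and for $\lambda<1/n$ the triangle inequality gives $\|z\|_2 \ge \|g\|_2 - \lambda\|\cH^*(V+W)\|_2 \gtrsim \sqrt n$ since $\|\cH^*(V+W)\|_2 \lesssim n$.

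First I would split $[\cH^*(V+W)]_i = [\cH^*(V)]_i + [\cH^*(W)]_i$ on $1\le i\le k$. Because $V=\tfrac1n\ones\ones^\top$ has all entries $1/n$ and the $i$-th anti-diagonal of an $n\times n$ matrix has $i$ entries for $i\le n$, one gets $[\cH^*(V)]_i = i/n$, so $\|(\lambda\cH^*(V))_{1:k}\|_2 \le \lambda\sqrt{\sum_{i\le k} i^2}/n \le \lambda k^{3/2}/n \le \sqrt k/n$, which is negligible.

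The main step — and the part I expect to be the real obstacle — is bounding $\cH^*(W)$ on these coordinates, since a single entry $[\cH^*(W)]_i$ (the sum of the $i$-th anti-diagonal of $W$) can on its own be as large as $i$, so one must use the joint constraint $\|W\|\le 1$. Write $[\cH^*(W)]_i = \langle P_i, W\rangle$, where $P_i$ is the all-ones matrix supported on $\{(r,c):r+c=i+1\}$. For $i\le k\le n$ these supports are pairwise disjoint, each lies inside the top-left $k\times k$ block, and $\|P_i\|_F^2 = i$; hence $\{P_i/\sqrt i\}_{i\le k}$ is an orthonormal family supported in that block, and $\sum_{i=1}^k \tfrac{[\cH^*(W)]_i^2}{i} = \sum_{i=1}^k \langle P_i/\sqrt i, W\rangle^2 \le \|W_{[k]}\|_F^2 \le k\,\|W_{[k]}\|^2 \le k$, where $W_{[k]}$ is the $k\times k$ corner submatrix (whose spectral norm is at most that of $W$). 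Since every index $i\le k$, this yields $\sum_{i=1}^k [\cH^*(W)]_i^2 \le k\cdot k = k^2$, i.e. $\|(\lambda\cH^*(W))_{1:k}\|_2 \le \lambda k \le 1$.

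Finally I would invoke a standard $\chi^2$ lower-tail bound: $\|g_{1:k}\|_2 \ge \sqrt k/2$ with probability $\ge 1-e^{-ck}$, and conditioning on $\ones^\top g\le 0$ at most doubles the failure probability. The triangle inequality then gives $\|z_{1:k}\|_2 \ge \sqrt k/2 - 1 - \sqrt k/n \ge \sqrt k/4 \gtrsim \lambda^{-1/2}$ once $k$ exceeds an absolute constant; for smaller $k$ (equivalently $\lambda$ bounded below) the previous lemma's bound $\lambda\le\|z\|_2/\sqrt n$ already delivers $\|z\|_2 \gtrsim n^{1/6}$ in the overall argument. For the downstream expectation bound on $w(\Phi)$ one wants this uniformly over $k\in[k_0,n]$; a union bound over these $k$ with $k_0\asymp\log n$ keeps the total failure probability $o(1)$.
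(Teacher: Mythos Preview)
Your argument is correct but takes a somewhat different route from the paper's. The paper bounds the contribution of $W$ entrywise: since $\|W\|\le 1$ implies $|W_{rc}|\le 1$, one has $|[\cH^*(W)]_i|\le i$ for $i\le n$, hence $\|(\cH^*(V+W))_{1:k}\|_2\le (1+1/n)\,\|[1,\dots,k]\|_2\approx (1+1/n)k^{3/2}/\sqrt3$. With $k=1/\lambda$ this gives $\|(\lambda\cH^*(V+W))_{1:k}\|_2\lesssim \sqrt{k}/\sqrt3$, which is still strictly smaller than $\|g_{1:k}\|_2\approx\sqrt k$, so the triangle inequality closes. (In fact the paper does not prove the lemma exactly as stated: it substitutes $k=\sqrt n/\|z\|_2$ using the previous lemma and directly derives $\|z\|_2\gtrsim n^{1/6}$.) Your orthonormality argument with the anti-diagonal matrices $P_i/\sqrt i$ and the corner submatrix $W_{[k]}$ yields the sharper bound $\|(\cH^*(W))_{1:k}\|_2\le k$, so after scaling by $\lambda$ the signal part is $O(1)$ rather than $O(\sqrt k)$. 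This buys a cleaner constant and a more transparent separation of scales, and it proves the lemma in the form actually stated; the paper's cruder entrywise bound is shorter but only works because $1/\sqrt3<1$. Your handling of the conditioning on $\ones^\top g\le 0$ (at most doubling the failure probability) and of the edge regimes $\lambda>1$, $\lambda<1/n$ is also more explicit than the paper's.
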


If $\|z\|_2\le \sqrt{n}$, we observe $z_{1:[\sqrt{n}/\|z\|_2]}$, where $[\sqrt{n}/\|z\|_2]$ is the integer part of $\sqrt{n}/\|z\|_2$, and $z_{1:[\sqrt{n}/\|z\|_2]}$ is the subset of $z$ containing the entries from index $1$ to $[\sqrt{n}/\|z\|_2]$. When $i\le \sqrt{n}/\|z\|_2$, the $i$-th entry of $\cH^*(V + W)$, denoted as $(\cH^*(V + W))_i$, is summation of $2i$ terns in $V$ and $W$. Since these two matrices have bounded spectral norm $1$, then every entry of $V$ is $1/n$ and every entry of $W$ is no bigger than $1$. So
\begin{align*}
    z_i &= \lambda(\cH^*(V + W))_i - g_i
    \in \pm (1+1/n)i\lambda - g_i\\
    &\in \pm \frac{(1+1/n)i\|z\|_2}{\sqrt{n}} - g_i.
\end{align*}
We denote $[1,...,\sqrt{n}/\|z\|_2]\in\R^n$ as the vector whose $i$th entry is $\sqrt{i}/\|z\|_2$, and get
\begin{align*}
    \|z_{1:\sqrt{n}/\|z\|_2}\|_2 
    &\ge \|g_{1:\sqrt{n}/\|z\|_2}\|_2 
     - \frac{(1+1/n)\|z\|_2}{\sqrt{n}} \|[1,...,\sqrt{n}/\|z\|_2]\|_2  \\
    &\ge \frac{n^{1/4}}{\|z\|_2^{1/2}} -\frac{(1+1/n)n^{1/4}}{\sqrt{3}\|z\|_2^{1/2}}.
\end{align*}
Note that the first term is smaller than the second term, so we have $\|z_{1:\sqrt{n}/\|z\|_2}\|_2 \ge C_1\frac{n^{1/4}}{\|z\|_2^{1/2}}$ for some constant $C_1$. Note this is the norm of a part of $z$, which is smaller than the norm of $z$, so we have $\frac{C_1n^{1/4}}{\|z\|_2^{1/2}} \le \|z\|_2$.
So that $\|z\|_2 \gtrsim n^{1/6}$, and we have bounded the quantity \eqref{eq:counter_gauss_w}.

\section{Proof of Least Squares' Spectral Norm Error (Thm. \ref{thm ls spectral})}\label{s:ls_append}
\begin{theorem}\label{thm:ls app}
    Denote the discrete Fourier transform matrix by $F$. Denote $ z_{(i)}\in\R^T, i=1,...,m$ as the noise that corresponds to each dimension of output. The solution $\hat h$ of 
    \begin{align}\label{eq:generic_ls app}
        \hat{h}:=h + \bar{\Ub}^\dag z=\min_{h'}~  \frac{1}{2}\|\bar{\Ub} h' - y\|_F^2.
    \end{align}
    obeys 
   $\|\hat h - h\|_F \le \| z\|_F / \sigma_{\min}(\bar{\Ub})$, and
        $\|\mathcal{H}(\hat h - h)\| \le \left\| \left[\|F\bar{\Ub}^\dag z_{(1)}\|_\infty, ..., \|F\bar{\Ub}^\dag z_{(m)}\|_\infty\right]\right\|$.
\end{theorem}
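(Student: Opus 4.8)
\emph{Proof proposal.} The statement splits into two inequalities with rather different content, and I would prove them in turn, reusing the already-given identity $\hat h - h = \bar{\Ub}^\dag z$ (valid on the probability-one event that $\bar{\Ub}$ has full column rank). The Frobenius bound is immediate: a tall matrix with full column rank has $\|\bar{\Ub}^\dag\| = 1/\sigma_{\min}(\bar{\Ub})$, so $\|\hat h - h\|_F = \|\bar{\Ub}^\dag z\|_F \le \|\bar{\Ub}^\dag\|\,\|z\|_F = \|z\|_F/\sigma_{\min}(\bar{\Ub})$ by submultiplicativity of the operator norm against the Frobenius norm.

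For the Hankel spectral bound, the plan is to control the spectral norm of a (block-)Hankel matrix by an $\ell_\infty$-norm of a DFT via the classical circulant embedding. Write $e:=\hat h - h=\bar{\Ub}^\dag z$, with $i$th column $e_{(i)}=\bar{\Ub}^\dag z_{(i)}$. First I would reduce from MIMO to a single output: grouping the rows of $\mathcal{H}(e)\in\R^{mn\times pn}$ by output channel exhibits it, up to a row permutation, as the vertical stack $[\mathcal{H}(e_{(1)});\dots;\mathcal{H}(e_{(m)})]$, and for such a stack $M=[M_1;\dots;M_m]$ one has $\|Mx\|^2=\sum_i\|M_ix\|^2\le(\sum_i\|M_i\|^2)\|x\|^2$, so $\|\mathcal{H}(e)\|\le\big\|[\,\|\mathcal{H}(e_{(1)})\|,\dots,\|\mathcal{H}(e_{(m)})\|\,]\big\|$. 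It then remains to show $\|\mathcal{H}(e_{(i)})\|\le\|F e_{(i)}\|_\infty$ for each fixed $i$. For this I would reverse the block-columns of $\mathcal{H}(e_{(i)})$ — an orthogonal, hence norm-preserving, operation — turning it into a (block-)Toeplitz matrix, then observe that this Toeplitz matrix is a submatrix of a (block-)circulant matrix generated, with a suitable cyclic indexing, by the length-$(2n-1)p$ vector $e_{(i)}$. Passing to a submatrix cannot increase the spectral norm, and a circulant matrix is unitarily diagonalized by the DFT with eigenvalues equal to the DFT of its generating sequence (a block-circulant being block-diagonalized by $F\otimes I$ with diagonal blocks the block-DFT coefficients), so $\|\mathcal{H}(e_{(i)})\|\le\|F e_{(i)}\|_\infty$. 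Combining this with the stacking bound and $e_{(i)}=\bar{\Ub}^\dag z_{(i)}$ yields the claim.

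The operator-norm identity, the stacking inequality, and the scalar ($p=1$) Hankel-to-Toeplitz-to-circulant embedding are all routine. The delicate point — and the main obstacle — is the circulant embedding when $p>1$: one must choose the cyclic indexing of the block vector $e_{(i)}$ so that the column-reversed block-Hankel matrix genuinely sits inside a block-circulant whose block-diagonalization returns exactly the (block-)DFT coefficients of $e_{(i)}$, and then check that the mixed norm $\|F e_{(i)}\|_\infty$ (maximum over frequencies of the $\ell_2$-norm over the $p$ input channels) is precisely the quantity this diagonalization produces. Getting that index bookkeeping right is where the real work lies; the subsequent probabilistic corollary (Theorem~\ref{thm ls spectral}) then only needs the standard facts $\sigma_{\min}(\bar{\Ub})\gtrsim\sqrt{T}$ and $\|F\bar{\Ub}^\dag z_{(i)}\|_\infty\lesssim\sigma_z\sqrt{np\log(np)/T}$ with high probability.
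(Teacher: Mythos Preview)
Your proposal is correct and follows essentially the same route as the paper: the Frobenius bound via $\|\bar{\Ub}^\dag\|=1/\sigma_{\min}(\bar{\Ub})$, the spectral bound by embedding the (block-)Hankel into a (block-)circulant and reading off the DFT diagonalization, and the MIMO reduction by stacking the $m$ output channels. The only cosmetic difference is that the paper writes down the circulant $H_{\bar z}$ directly (with $\cH(\bar z)$ sitting as its top-left $n\times n$ block submatrix) and cites \cite{krahmer2014suprema} for the identity $H_{\bar z}=F^{-1}\diag{F\bar z}F$, whereas you reach the same circulant via the column-reversal Hankel-to-Toeplitz step; your concern about the $p>1$ block bookkeeping is exactly the detail the paper's terse proof leaves implicit.
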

\begin{proof} 
First we clarify the notation here. In regularization part, we only consider the MISO system, whereas we can prove the bound for MIMO system as well in least square. Here we assume the input is $p$ dimension and output is $m$ dimension, at each time. For the notation in \eqref{eq:generic_ls app}, $\bar{\Ub} \in \R^{T\times (2n-1)p}$, whose each row is the input in a time interval of length $2n-1$. The impulse response is $h\in\R^{(2n-1)p\times m}$ and output and noise are $y, z\in\R^{T\times m}$, where each column corresponds to one channel of the output. Each row of $y$ is an output observation at a single time point. $ z_{(i)}\in\R^T$ is a column of the noise, meaning one channel of the noise contaminating all observations at this channel.

Eq.\eqref{eq:generic_ls app} has close form solution and we have $\|\hat h - h\| = \|\bar{\Ub}^\dag z\|\le \| z\| / \sigma_{\min}(\bar{\Ub})$. To get the error bound in Hankel matrix, we can denote $\bar  z = \bar{\Ub}^\dag z = (\bar{\Ub}^\top\bar{\Ub})^{-1}\bar{\Ub}^\top  z$, and 
\begin{align*}
    H_{\bar  z} = \begin{bmatrix}
    \bar  z_1& \bar  z_2&...&\bar  z_{2n-1}\\
    \bar  z_2&\bar  z_3 &...&\bar  z_1\\
    ...\\
    \bar  z_{2n-1}& \bar  z_1 & ... & \bar  z_{2n-2}
    \end{bmatrix}.
\end{align*}
If $m=1$, $\bar  z\in\R^{(2n-1)p}$ is a vector \cite[Sec. 4]{krahmer2014suprema} proves that $H_{\bar  z} = F^{-1}\diag{F\bar  z}F$. So the spectral norm error is bounded by $\|\diag{F\bar  z}\|_2 = \|F\bar  z\|_\infty$. 

If $m>1$, all columns of $ z$ are independent, so $H_{\bar  z}$ can be seen as concatenation of $m$ independent noise matrices where each satisfies the previous derivation. 
\end{proof}
Next we prove Thm. \ref{thm ls spectral}.
\begin{proof} We use Theorem \ref{thm:ls app}. First let $m=1$. The covariance of $F\bar  z = F\bar{\Ub}^\dag  z$ is  $F(\bar{\Ub}^\top\bar{\Ub})^{-1}F^\top$. If $T \gtrsim n$, it's proven in \cite{vershynin2018high} that $\frac{TI}{2}\preceq \bar{\Ub}^\top\bar{\Ub}\preceq \frac{3TI}{2}$. Then $\frac{n}{2T}I\preceq F(\bar{\Ub}^\top\Ub)^{-1}F^\top \preceq \frac{3n}{2T}I$. So $\|F\bar  z\|_\infty$ should scale as $O(\sigma_z\sqrt{\frac{n}{T}}\log n)$, and then $\|\mathcal{H}(\bar  z)\|_2\le \|H_{\bar  z}\|_2 \le \|F\bar  z\|_\infty = O(\sigma_z\sqrt{\frac{n}{T}}\log n)$. 

If $m>1$, then by concatenation we simply bound the spectral norm by $m$ times MISO case. When $m>1$, with previous discussion of concatenation, and each submatrix to be concatenated has the same distribution, so the spectral norm error is at most $\sqrt{m}$ times larger.
\end{proof}
\section{Proof of End to End Bound of System Identification and Model Selection (Thm. \ref{thm:e2e})}\label{sec: e2e hankel}

We select $\delta>0$ such that $T_\mathrm{val} \gtrsim  \frac{1}{\delta^2} \log \frac{|\Lambda|}{P}$, and denote $a_1 = 1 - \frac{\delta}{\delta+2},\ a_2 = 1 + \frac{\delta}{\delta+2}$. Then we have $a_2/a_1 = 1+\delta$. Let $T_0 = \max\{1, T/(T_{\mathrm{val}}R\log^2n)\}$. We will show that 
\begin{align} 
 &\frac{\|\hat{h} - h\|_2}{\sqrt{2}} \leq \|\cH(\hat{h} - h)\| \lesssim   
 \begin{cases} (1 + T_0^{1/4})\frac{a_2}{a_1} \sqrt{\frac{np}{\snr\times T}}\log(n),\ \text{if}\ T\gtrsim \min(R^2,n)\\
 (1 + T_0^{1/4})\frac{a_2}{a_1}\sqrt{\frac{Rnp}{\snr\times T}}\log(n),\ \text{if}\ R\lesssim T\lesssim \min(R^2,n).\end{cases}\label{eq:reg_spec_error_model_select_app2}
\end{align}
Note that we will need $T_0^{1/4}\delta \lesssim 1$ from our choice of $T_\mathrm{val}$ in the theorem, so the bound is sufficient for the theorem. This will be used later to calculate $\delta$ in \eqref{eq: Tval low bd}.

We use the change of variable as in \eqref{eq:lasso_prob}. We learn the parameter $\beta$ with different $\lambda$, and get different estimations $\hat \beta$ which is a function of $\lambda$. To be more explicit, let $\hat \beta(\lambda)$ be the estimator associated with a certain regularization parameter $\lambda$. Among all the estimators, we choose the solution with the smallest validation error, which is denoted as 
    \begin{align*}
        \hat \beta = \argmin_{\hat \beta(\lambda)} \| \Ub_{\mathrm{val}} \hat \beta(\lambda) - y_{\mathrm{val}}\|_2^2 
    \end{align*}
    Denote the noise in validation data as $\xi_{\mathrm{val}}$. We have that
    \begin{align}
        &\quad \| \Ub_{\mathrm{val}} \hat \beta - y_{\mathrm{val}}\|_2^2 = \|\Ub_{\mathrm{val}} (\hat \beta - \beta) - \xi_{\mathrm{val}}\|_2^2\notag\\
        &= \|\Ub_{\mathrm{val}} (\hat \beta - \beta)\|_2^2 + \|\xi_{\mathrm{val}}\|_2^2 - 2\xi_{\mathrm{val}}^\top \Ub_{\mathrm{val}} (\hat \beta - \beta). \label{eq:val_0}
    \end{align}
    In this formulation, $\|\xi_{\mathrm{val}}\|_2^2$ in \eqref{eq:val_0} is regarded as fixed among all validation instances, and we study the other two terms. Since $\Ub_{\mathrm{val}}$ is normalized that each entry is i.i.d. $\mathcal{N}(0,1/T_{\mathrm{val}})$, we have $\mtx{E}\|\Ub_{\mathrm{val}} (\hat \beta - \beta)\|_2^2=\|\hat \beta - \beta\|_2^2$. 
    
    The quantity $\xi_{\mathrm{val}}^\top \Ub_{\mathrm{val}} (\hat \beta - \beta)$ is zero mean and we know that
        $\Ub_{\mathrm{val}} (\hat \beta - \beta) \sim \mathcal{N} (0, \frac{\|\hat \beta - \beta\|_2^2}{T_{\mathrm{val}}} I)$. Thus the variance of $\xi_{\mathrm{val}}^\top \Ub_{\mathrm{val}} (\hat \beta - \beta)$ is bounded by  $O(\sigma_{\xi_{\mathrm{val}}}^2\|\hat \beta - \beta\|_2^2 / T_{\mathrm{val}})$ (the distribution of the inner product is sub-exponential). We know that
    \begin{align*}
        \|\hat \beta - \beta\|_2 \approx \sqrt{\frac{R\log^2 n}{T}} \|\xi\|_2 =  \sqrt{\frac{R\log^2 n}{T}} \sqrt{T_{\mathrm{val}}}\sigma_{\xi_{\mathrm{val}}}. 
    \end{align*}

    \textbf{Case 1:} If $T_{\mathrm{val}} \gtrsim \frac{T}{R\log^2(n)}$, we have that $ \|\hat \beta - \beta\|_2 \gtrsim \sigma_{\xi_{\mathrm{val}}}$. 
    
    Suppose the number of validated parameters $\lambda$ is $|\Lambda|$ and we need to choose the size of validation data. With different validation data size $T_\mathrm{val}$, the variance of $\|\Ub_{\mathrm{val}} (\hat \beta - \beta)\|_2^2$ decreases with rate $1/T_\mathrm{val}$. 
    
    We fix factors $a_1,a_2$, such that with high probability, for all choices of $\lambda$, $\|\Ub_{\mathrm{val}} (\hat \beta - \beta)\|_2^2  - 2\xi_{\mathrm{val}}^\top \Ub_{\mathrm{val}} (\hat \beta - \beta)$ is in the set $(a_1 \|\hat \beta - \beta\|_2^2,a_2 \|\hat \beta - \beta\|_2^2)$. We know that:
    the terms $\|\Ub_{\mathrm{val}} (\hat \beta - \beta)\|_2^2$  and   $2\xi_{\mathrm{val}}^\top \Ub_{\mathrm{val}} (\hat \beta - \beta)$ are subexponential;
    The mean of $\|\Ub_{\mathrm{val}} (\hat \beta - \beta)\|_2^2$ is $\|\hat \beta - \beta\|_2^2$ and the variance is $O(\|\hat \beta - \beta\|_2^4/T_\mathrm{val})$;
    The mean of $2\xi_{\mathrm{val}}^\top \Ub_{\mathrm{val}} (\hat \beta - \beta)$ is $0$ and the variance is $O(\|\hat \beta - \beta\|_2^4/T_\mathrm{val})$ (Note that $ \|\hat \beta - \beta\|_2 \gtrsim \sigma_{\xi_{\mathrm{val}}}$ in this case).
    
    By Bernstein bound \cite[Prop. 5.16]{vershynin2010introduction}, we know that the probability that the quantity of \eqref{eq:fail_select} is not between $(a_1,a_2)\cdot \|\hat \beta - \beta\|_2^2$ is $\exp(-\min_i (a_i-1)^2T_{\mathrm{val}})$ where $(a_i-1)^2\approx \delta^2$. 
    
    Hence there exists a constant $c$ such that for every choice of $\lambda$, 
    \begin{align}
        &\mathbf{Pr} \left( \big| \|\Ub_{\mathrm{val}} (\hat \beta - \beta)\|_2^2  - 2\xi_{\mathrm{val}}^\top \Ub_{\mathrm{val}} (\hat \beta - \beta)\big| \notin (a_1,a_2)\cdot \|\hat \beta - \beta\|_2^2 \right) < \exp(-c\delta^2 T_{\mathrm{val}} ). \label{eq:fail_select}
    \end{align}
    
    We choose probability $P$ that any of the event in \eqref{eq:fail_select} happens. If all $|\Lambda|$ validations corresponding to $\lambda_i$ succeed, then we use the union bound on \eqref{eq:fail_select} and solve for $|\Lambda| \exp(-c\delta^2 T_{\mathrm{val}} )<P$. 
    Thus we set $T_\mathrm{val} = \max\{\frac{T}{R\log^2(n)}, \frac{1}{c\delta^2} \log \frac{|\Lambda|}{P}\} \label{eq:T_val}$.
    so that \eqref{eq:reg_spec_error_model_select} holds with probability $1-P$.
    
    \textbf{Case 2:} If $T_{\mathrm{val}} \lesssim \frac{T}{R\log^2(n)}$, then we denote $T_0 = T/(T_{\mathrm{val}}R\log^2n)$, with similar derivation as above, we know that the mean of $\|\Ub_{\mathrm{val}} (\hat \beta - \beta)\|_2^2$ is $\|\hat \beta - \beta\|_2^2$ and the variance is $O(\|\hat \beta - \beta\|_2^4/T_\mathrm{val})$; The mean of $2\xi_{\mathrm{val}}^\top \Ub_{\mathrm{val}} (\hat \beta - \beta)$ is $0$ and the variance is $ O(T_0\|\hat \beta - \beta\|_2^4/T_\mathrm{val})$. Thus, similar to \eqref{eq:fail_select}, 
    \begin{align*}
        &\mathbf{Pr} \left( \big| \|\Ub_{\mathrm{val}} (\hat \beta - \beta)\|_2^2  - 2\xi_{\mathrm{val}}^\top \Ub_{\mathrm{val}} (\hat \beta - \beta)\big| \notin (a_1,a_2)\cdot \sqrt{T_0}\|\hat \beta - \beta\|_2^2 \right) < \exp(-c\delta^2 T_{\mathrm{val}} ). 
    \end{align*}
    The following steps are same as the first case, and the error is multiplied by $T_0^{1/4}$ compared to the first case.
    
    At the end, we will need to argue about the lower bound for $T_{\mathrm{val}}$. We used two inequalities in the proof above:
    \begin{align*}
        T_{\mathrm{val}} \gtrsim  \frac{1}{\delta^2} \log( \frac{|\Lambda|}{P}),\ T_0^{1/4} \delta \lesssim 1. 
    \end{align*}
    They are equivalent to 
    \begin{align}
        T_{\mathrm{val}} &\gtrsim  \frac{1}{\delta^2} \log (\frac{|\Lambda|}{P}), \
        T_{\mathrm{val}} \gtrsim \frac{\delta^4T}{R\log^2(n)}. \label{eq: Tval low bd}
    \end{align}
    Setting the right hand side to be equal, we have 
    \begin{align*}
        \delta^2 = \left(T^{-1} \log(\frac{|\Lambda|}{P}) R\log^2(n)\right)^{1/3}.
    \end{align*}
    Plugging it into any lower bound for $T_{\mathrm{val}}$ in \eqref{eq: Tval low bd}, we get the bound in the main theorem.
    
\textbf{Comparison of sample complexity of regularized algorithm and unregularized least squares: model selection with data being requested online.} Algorithm \ref{algo:1} uses static data for training and validation, which means that, the total $T + T_{\mathrm{val}}$ samples are given and fixed, and we split the data and run Algorithm \ref{algo:1}. We denote the total sample complexity $\Ttot = T + T_{\mathrm{val}}$.
To be fully efficient in sample complexity, we can start from $\Ttot=0$, keep requesting new samples, which means increasing $\Ttot$, and run Algorithm \ref{algo:1} for each $\Ttot$. When the validation error is small enough (which happens when $\Ttot\gtrsim R$), we know the algorithm recovers a impulse response estimation with the error in Theorem \ref{thm:e2e} and we can \textbf{terminate} the algorithm.\\
We compare it with the model selection algorithm in \cite{sarkar2019finite} for least squares estimator, and we find that it does \textbf{not terminate} until $\Ttot\gtrsim n$. For least squares, the parameter to be tuned is the dimension of the variable, i.e., we vary the length of estimated impulse response. We call the tuning variable $\ntune$ and it is upper bounded by $n$. We keep increasing $\Ttot$ and train by varying $\ntune\in[1,\Ttot/2]$ (so the least squares problems are overdetermined). The output $y$ is collected at time $2\ntune-1$. We consider two impulse responses truncated at length $n$: $h^1 = \ones_n$ (order $=1$) and $h^{n_1} = [\ones_{n_1}; \zeros_{n-n_1}]$ (order $=n_1$). As long as $\Ttot< n_1$, one cannot differentiate $h^1$ and $h^{n_1}$, because $y$ is collected at time $\Ttot$ and the $\Ttot+1$-th to the $n$-th terms of $h^1,h^{n_1}$ do not contribute to $y$. Even if the system is order $1$, one does not know it and cannot terminate the algorithm. Thus the tuning algorithm in  \cite{sarkar2019finite} requires $\Ttot \gtrsim n$. This does not happen with regularization, because we always collect $y$ at time $n$ in Algorithm \ref{algo:1}, but not at time $2\ntune-1$, thus the algorithm always detects the difference between $h^1,h^{n_1}$ after time $n_1$.

\end{document}